\newcommand{\ourmethod}{\text{OTQMS}}
\newcommand{\allsource}{\text{AllSources $\cup$ Target}}
\newcommand{\mycomment}[1]{\hspace{\fill} \textcolor{black}{// #1}}
\definecolor{MADAvits}{rgb}{0.918, 0, 0}
\definecolor{MADAres50}{rgb}{0, 0.918 ,0.918}
\definecolor{allsources}{rgb}{0.290, 0.486, 0.192}
\definecolor{ours}{rgb}{0, 0, 0.918}
\DeclareMathAlphabet{\mathbbb}{U}{bbold}{m}{n}  % bb bold numbers
\newtheorem{theorem}{Theorem}
\newtheorem{lemma}[theorem]{Lemma}
\newtheorem{proposition}[theorem]{Proposition}
\newtheorem{definition}[theorem]{Definition}
\DeclareMathOperator*{\argmax}{arg\,max}
\DeclareMathOperator*{\argmin}{arg\,min}
\newcommand\tsup[2][2]{%
	\def\useanchorwidth{T}%
	\ifnum#1>1%
	\stackon[-1.3ex]{\tsup[\numexpr#1-1\relax]{#2}}{\mathchar"307E\kern-.5pt}%
	\else%
	\stackon[-1ex]{#2}{\mathchar"307E\kern-.5pt}%
	\fi%
}
\newcommand{\cT}{{\mathcal{T}}}
\newcommand{\X}{{\mathcal{X}}}
\newcommand{\cA}{\mathcal{A}}
\newcommand{\cL}{\mathcal{L}}
\newcommand{\cS}{\mathcal{S}}
\DeclareMathOperator{\var}{var}
\newcommand{\defeq}{\triangleq}
\title{
A High-Dimensional Statistical Method for Optimizing Transfer Quantities in Multi-Source Transfer Learning}
\author{%
  \parbox{\textwidth}{\centering
  Qingyue Zhang\textsuperscript{1}\thanks{Equal contribution.},\;
  Haohao Fu\textsuperscript{1}\footnotemark[1],\;
  Guanbo Huang\textsuperscript{1}\footnotemark[1],\;
  Yaoyuan Liang\textsuperscript{1},\;
  Chang Chu\textsuperscript{1},\\[2pt]
  Tianren Peng\textsuperscript{1},\;
  Yanru Wu\textsuperscript{1},\;
  Qi Li\textsuperscript{1},\;
  Yang Li\textsuperscript{1}\thanks{Corresponding authors: Yang Li (yangli@sz.tsinghua.edu.cn), Shao-Lun Huang(twn2gold@gmail.com)},\;
  Shao-Lun Huang\textsuperscript{1}\footnotemark[2]\\[4pt]
  {\normalfont\textsuperscript{1}Tsinghua Shenzhen International Graduate School, Tsinghua University}
  }
}
\begin{document}

\maketitle

\begin{abstract}
Multi-source transfer learning provides an effective solution to data scarcity in real-world supervised learning scenarios by leveraging multiple source tasks.
In this field, existing works typically use all available samples from sources in training, which constrains their training efficiency and may lead to suboptimal results.
To address this, we propose 
a theoretical framework that answers the question: what is the optimal quantity of source samples needed from each source task to jointly train the target model? 
Specifically, we introduce a generalization error measure based on K-L divergence, and minimize it based on high-dimensional statistical analysis
to determine the optimal transfer quantity for each source task.
Additionally, we develop an architecture-agnostic and data-efficient algorithm \ourmethod{} to implement our theoretical results 
for target model training in multi-source transfer learning.  
Experimental studies on diverse architectures and two real-world benchmark datasets show that our proposed algorithm significantly outperforms state-of-the-art approaches in both accuracy and data efficiency. The code
is available at \url{https://github.com/zqy0126/OTQMS}.
\end{abstract}

\section{Introduction}
\label{Introduction}

\begin{wrapfigure}{r}{0.50\linewidth}
    \centering
    \vspace{-2.6em}
    ~\kern-.35em\includegraphics[width=1.00\linewidth]{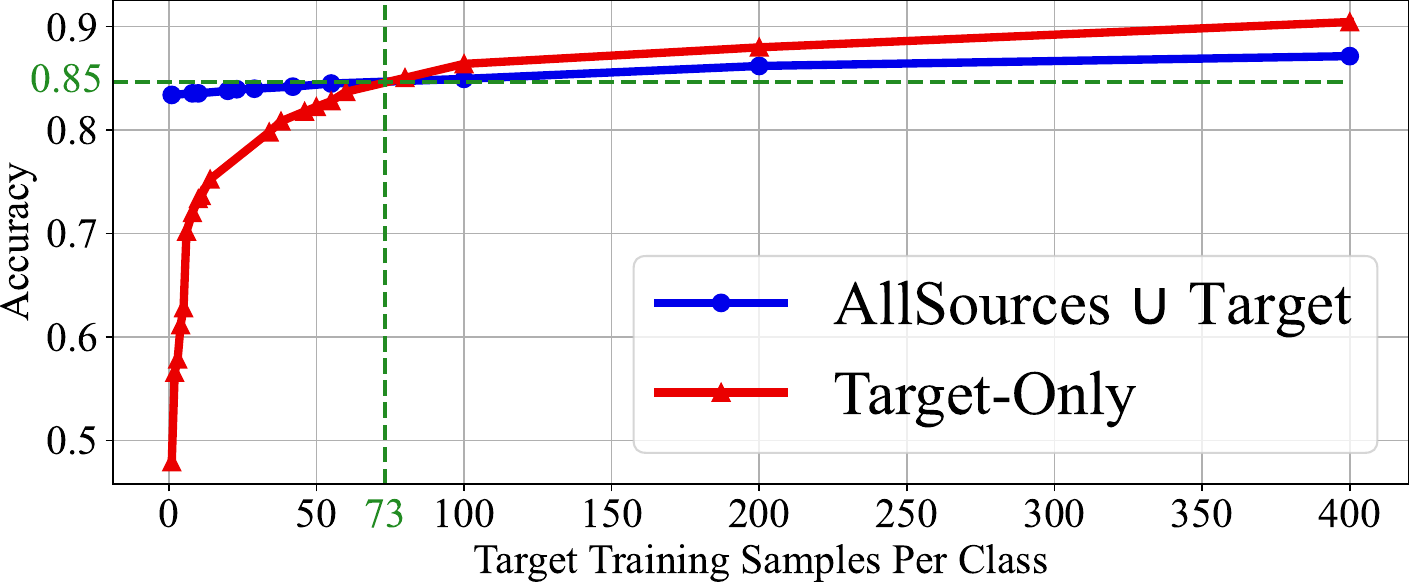}
    \vspace{-1.85em}
    \caption{\textbf{More source samples does not always mean better performance.} Incorporating all source samples may bring negative impact, 
    which is illustrated by the comparison of 
    two strategies, using target task samples with all source samples (\textcolor{blue}{blue}) and using target task samples only (\textcolor{red}{red}), 
    evaluated on the equally divided 5-task CIFAR10 dataset. Theoretically, although incorporating more source samples reduces model variance by expanding the training data, the discrepancy between the source and target tasks introduces additional bias.} 
    \label{fig:teaser-cifar10}
    \vskip-1em
\end{wrapfigure}

Nowadays, various machine learning algorithms have achieved remarkable success by leveraging large-scale labeled training data. However, in many practical scenarios, the limited availability of labeled data presents a significant challenge, where transfer learning emerges as an effective solution \cite{zhao2020review}. Transfer learning aims to leverage knowledge from tasks with abundant data or being well-trained, known as the source tasks, to improve the performance of a new learning task, known as the target task. Given its numerous applications, transfer learning has gained wide
popularity and seen success in a variety of fields, such as computer vision \cite{wang2018deep}, natural language processing \cite{sung2022vl}, recommendation systems \cite{fu2024exploring} and anomaly detection \cite{vincent2020transfer}. Traditionally, transfer learning has focused on the transfer between a single source task and a target task. However, there is a growing emphasis on multi-source transfer learning, which leverages multiple source tasks to enhance the training of the target task \cite{sun2015survey}.

% \begin{figure}[t]
%     \centering    \includegraphics[width=0.85\linewidth]{hgb_1_30_figure/teaser_1_30.pdf}
%     \caption{\textbf{More source samples does not always mean better performance.} Incorporating all source samples may bring negative impact, 
%     which is illustrated by the comparison of 
%     two strategies, using target with all source samples (\textcolor{blue}{blue}) and using target only (\textcolor{red}{red}), 
%     evaluated on the equally divided 5-task CIFAR10 dataset. Theoretically, although incorporating more source samples reduces model variance by expanding the training data, the discrepancy between the source and target tasks introduces additional bias.} 
%     \label{fig:teaser-cifar10}
%     \vspace{-6mm} 
% \end{figure}

In multi-source transfer learning, traditional methods usually jointly train the target model using all available samples from sources without selection~\cite{zhang2024revisiting_MADA,shui2021aggregating_WADN,li2021dynamic}. This evidently poses a severe limitation to training efficiency, considering the vast number of available samples from various potential source tasks in real-world scenarios~\cite{peng2019moment_M3SDA}. Moreover, directly assuming the use of all available samples seriously constrains their solution space, which possibly leads to suboptimal results, as illustrated in Figure~\ref{fig:teaser-cifar10}. %Therefore, it is meaningful to establish a theoretical framework and corresponding practical algorithm to get the optimal quantity of source samples needed from each source task in training.
Therefore, it is critical to establish a theoretical framework to answer the question: \textit{what is the optimal transfer quantity of samples from each source task needed in training the target task?}

In this work, 
% we establish a theoretical framework based on the multi-source transfer learning problem. 
we formulate a sample-based multi-source transfer learning problem as a parameter estimation problem. By employing high-dimensional statistical methods to analyze it, we establish a theoretical framework to determine the optimal transfer quantity for each source task.
Specifically, we introduce the expectation of Kullback-Leibler (K-L) divergence between the true distribution of target task samples and the distribution learned from training samples as a measure of generalization error. This measure is then minimized in the asymptotic regime to derive the optimal transfer quantity of each source task.
Building on this, we propose a practical algorithm, named \textbf{O}ptimal \textbf{T}ransfer \textbf{Q}uantities for \textbf{M}ulti-\textbf{S}ource Transfer learning (\ourmethod{}), to implement our theoretical results for training multi-source transfer learning models. Notably, \ourmethod{} is data-efficient and compatible with various model architectures, including Vision Transformer (ViT) and Low-Rank Adaptation (LoRA). It also demonstrates advantages in \emph{task generality} and \emph{shot generality} (as illustrated in Table~\ref{tab:comparison}), since we establish theoretical framework without restricting it to a specific target task type or limiting the range of target sample quantity. In summary, our main contributions are as follows:
\begin{itemize}
\item[a)]
We use high-dimensional statistics to analyze the parameter estimation problem formulated by the sample-based multi-source transfer learning problem. Based on this, a novel theoretical framework that optimizes transfer quantities is introduced.
% We propose a new K-L divergence measure of the generalization error. By
% minimizing it, we develop a method to determine the optimal transfer quantity of each source task.

\item[b)] Based on the framework, we propose \ourmethod{}, an architecture-agnostic and data-efficient algorithm for target model training
in multi-source transfer learning. In particular, we propose a dynamic strategy in \ourmethod{} to alleviate the estimation error of transfer quantities caused by the scarcity of target task samples.
%, which is appropriate for
% \item[b)]Moreover, we design an practical and data-efficient algorithm to validate our theory. 
%The  experimental framework is designed to be lightweight and easy to interpret and adapt, possible to serve as a \textit{plug-in} module. 

\item[c)]Experimental studies on few-shot multi-source transfer learning tasks, two real-world datasets and various model architectures demonstrate that \ourmethod{} achieves significant improvements in both accuracy and data efficiency. In terms of accuracy,  \ourmethod{} outperforms state-of-the-art approaches by an average of $1.5\%$ on \texttt{DomainNet} and $1.0\%$ on \texttt{Office-Home}. 
In terms of data efficiency, \ourmethod{} reduces the average training time by $35.19\%$ and the average sample usage by $47.85\%$ on \texttt{DomainNet}. 
Furthermore, extensive supplementary experiments demonstrate that \ourmethod{} can facilitate both full model training and parameter-efficient training, and \ourmethod{} is also applicable to multi-task learning tasks. 
% Extensive supplementary experiments present the generality of our methodology, and provide an analysis of the transfer quantity.
\end{itemize}
\begin{table*}[t]
\setlength{\tabcolsep}{4pt}
    \renewcommand{\arraystretch}{0.76}
    \centering
    \caption{\textbf{Comparison across matching-based transfer learning method,} based on whether they are tailored to multi-sources, have task generality, have shot generality, and require target labels. The `\Checkmark' represents obtaining the corresponding aspects, while `\XSolidBrush' the opposite. Task generality denotes the ability to handle various target task types, and shot generality denotes the ability to avoid negative transfer in different target sample quantity settings including few-shot and non-few-shot.}
    \begin{tabular}{c c c c c}
        \toprule
         \textbf{Method} &  \textbf{Multi-Source}& \textbf{Task Generality}  & \textbf{Shot Generality} & \textbf{Target Label} \\
         \midrule
         % MTL \citep[]{sun2019meta} &  \XSolidBrush &  \Checkmark & \XSolidBrush & Few-shot \\
         MCW \citep[]{lee2019learning_MCW} &  \Checkmark & \XSolidBrush & \XSolidBrush & Supervised \\
         Leep \citep[]{nguyen2020leep} &  \XSolidBrush &  \Checkmark & \Checkmark & Supervised \\
         Tong \citep[]{tong2021mathematical} &  \Checkmark &  \XSolidBrush & \Checkmark &Supervised \\
         DATE \citep[]{han2023discriminability_DATE} & \Checkmark   &  \Checkmark & \Checkmark & Unsupervised \\
         H-ensemble \citep[]{wu2024h_Hensemble}  &  \Checkmark & \XSolidBrush & \XSolidBrush & Supervised \\ 
         \ourmethod{} (Ours)   &  \Checkmark &  \Checkmark &  \Checkmark & Supervised \\
         \bottomrule
    \end{tabular}
    \label{tab:comparison}
    % \vspace{-2em}
\end{table*}

\section{Related Work}
This work is related to two main lines of research. The first is transfer learning theory, where we propose a measure based on K-L divergence as a novel measure of generalization error, different from previously adopted measures ~\cite{harremoes2011pairs,bu2020tightening,tong2021mathematical,bao2019information,wu2024h_Hensemble}. The second is multi-source transfer learning, where most existing studies either utilize all samples from all source tasks or adopt task-level selection strategies~\cite{guo2020multi,shui2021aggregating_WADN,tong2021mathematical,wu2024h_Hensemble}. In contrast, our framework explicitly optimizes the transfer quantity from each individual source task. Other closely related work is discussed further in Appendix \ref{appendix_related_work}.

\section{Problem Formulation}
% Let X be the random variable denoting the data with input space $\mathcal{X}$. 

Consider the transfer learning setting with one target task $\cT$, and $K$ source tasks $\left\{\cS_1,\dots,\cS_K\right\}$. 
The target task $\mathcal{T}$ is not restricted to a specific downstream task category. Generally, we formulate it as a parameter estimation problem under a distribution model 
$P_{X;{\underline{\theta}}}$. For example, when $\cT$ is a supervised classification task, $P_{X;{\underline{\theta}}}$ corresponds to the joint distribution model of input features $Z$ and output labels $Y$, \textit{i.e.}, $X=(Z,Y)$. Our objective is to estimate the true value of $\underline{\theta}$, which corresponds to optimizing the neural network parameters for target task $\cT$. 
Here, $\theta$ denotes 1-dimensional parameter, and $\underline{\theta}$ denotes high-dimensional parameter. 
% We provide notation table is the Table \ref{Notations} in appendix. 
Furthermore, we assume that the source tasks and the target task follow the same parametric model and share the same input space $\mathcal{X}$.
Without loss of generality, we assume $\mathcal{X}$ to be discrete primarily for clarity of writing and to avoid redundancy, and our results can be readily extended to continuous spaces. The target task $\cT$ has $N_{0}$ training samples $X^{N_0}=\{x_1,\dots,x_{N_{0}}\}$ i.i.d. generated from some underlying joint distribution $P_{X;{\underline{\theta}}_0}$, where the parameter ${\underline{\theta}}_0\in \mathbb{R}^d$. Similarly, the source task $\cS_i$ has $N_i$ training samples $X^{N_i}=\{x^{i}_1, \dots,x^{i}_{N_i}\}$ i.i.d. generated from some underlying joint distribution $P_{X;{\underline{\theta}}_i}$, where $i\in[1,K]$, and the parameter ${\underline{\theta}}_i\in \mathbb{R}^d$. 
% Based on the aforementioned assumptions regarding the parametric model, the neural network training problem for the target task $\cT$ can be equivalently transformed into a parameter estimation problem whose true value is $\theta_0$. 
In this work, we use the Maximum Likelihood Estimator (MLE) to estimate the true target task parameter $\theta_0$.
% , which can be realized by optimizing the cross-entropy using gradient descent during neural network training \cite{du2019gradient}. 
% Moreover, the following lemma demonstrates that, for large sample size, the MLE can achieve the theoretical lower bound of the Mean Squared Error (MSE)  asymptotically, known as the Cram\'er-Rao Bound. 
Moreover, the following lemma characterizes the asymptotic behavior of MLE.

\begin{lemma}\label{thm:Cramér}(Asymptotic Normality of the MLE) \cite{wasserman2013all}
When we use MLE only based on target task samples to estimate $\theta_0$, i.e.
\begin{align}
\label{mle_target}
\hat{{\underline{\theta}}}_{MLE}=\argmax_{{\underline{\theta}}} {\frac{1}{N_{0}}\sum_{x \in X^{N_{0}}}} \log P_{X;{\underline{\theta}}}(x),
\end{align}
under appropriate regularity conditions, the following holds:
\begin{align}
\label{eq:Cramér2}
\sqrt{N_0}\left(\hat{{\underline{\theta}}}_{MLE} - \underline{\theta}_0\right)\xrightarrow{d}\mathcal{N}\left(0,J({\underline{\theta}}_0)^{-1}\right),
\end{align}
where “${-1}$” denotes the matrix inverse and the $J({\underline{\theta}})$ is the Fisher information matrix defined as:
\begin{align}
    J({\underline{\theta}})^{d\times d } &= \mathbb{E} \left[ \left( \frac{\partial}{\partial {\underline{\theta}}} \log P_{X;{\underline{\theta}}} \right) \left( \frac{\partial}{\partial {\underline{\theta}}} \log P_{X;{\underline{\theta}}} \right)^T \right].
\end{align}

\end{lemma}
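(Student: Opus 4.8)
The plan is to follow the classical argument for the asymptotic normality of $M$-estimators, specialized to the MLE. The starting point is the first-order optimality condition for the maximizer in \eqref{mle_target}: writing the score function as $s(x,{\underline{\theta}}) \defeq \frac{\partial}{\partial {\underline{\theta}}}\log P_{X;{\underline{\theta}}}(x)$, the estimator $\hat{{\underline{\theta}}}_{MLE}$ satisfies the stationarity equation $\frac{1}{N_0}\sum_{x\in X^{N_0}} s(x,\hat{{\underline{\theta}}}_{MLE}) = 0$. First I would Taylor-expand this score equation around the true parameter ${\underline{\theta}}_0$, obtaining
\begin{align}
0 = \frac{1}{N_0}\sum_{x\in X^{N_0}} s(x,{\underline{\theta}}_0) + \left(\frac{1}{N_0}\sum_{x\in X^{N_0}} \frac{\partial}{\partial {\underline{\theta}}} s(x,{\underline{\theta}}_\ast)\right)\left(\hat{{\underline{\theta}}}_{MLE}-{\underline{\theta}}_0\right),
\end{align}
where ${\underline{\theta}}_\ast$ lies on the segment between $\hat{{\underline{\theta}}}_{MLE}$ and ${\underline{\theta}}_0$. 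Rearranging and scaling by $\sqrt{N_0}$ isolates the quantity of interest as the product of an inverted averaged-Hessian factor and a $\sqrt{N_0}$-scaled score factor.

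The two probabilistic ingredients then come from standard limit theorems applied to these factors. Because the samples in $X^{N_0}$ are i.i.d.\ from $P_{X;{\underline{\theta}}_0}$, and because the score has zero mean under the true distribution, $\E{s(X,{\underline{\theta}}_0)}=0$, with covariance equal to the Fisher information $J({\underline{\theta}}_0)$, the multivariate Central Limit Theorem gives $\frac{1}{\sqrt{N_0}}\sum_{x\in X^{N_0}} s(x,{\underline{\theta}}_0)\xrightarrow{d}\mathcal{N}(0,J({\underline{\theta}}_0))$. For the Hessian factor, I would invoke consistency of the MLE, $\hat{{\underline{\theta}}}_{MLE}\xrightarrow{p}{\underline{\theta}}_0$ (hence ${\underline{\theta}}_\ast\xrightarrow{p}{\underline{\theta}}_0$), together with the Law of Large Numbers and the information matrix identity $-\E{\frac{\partial}{\partial {\underline{\theta}}} s(X,{\underline{\theta}}_0)}=J({\underline{\theta}}_0)$, so that the averaged Hessian converges in probability to $-J({\underline{\theta}}_0)$. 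Combining these via Slutsky's theorem yields $\sqrt{N_0}(\hat{{\underline{\theta}}}_{MLE}-{\underline{\theta}}_0)\xrightarrow{d} J({\underline{\theta}}_0)^{-1}\mathcal{N}(0,J({\underline{\theta}}_0))$, and since a linear map $A$ sends $\mathcal{N}(0,\Sigma)$ to $\mathcal{N}(0,A\Sigma A^\T)$, the limiting covariance collapses to $J({\underline{\theta}}_0)^{-1}J({\underline{\theta}}_0)J({\underline{\theta}}_0)^{-1}=J({\underline{\theta}}_0)^{-1}$, which is the claim.

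The main obstacle is not any single display but the regularity scaffolding that makes each step legitimate, which is exactly what the ``appropriate regularity conditions'' in the statement encapsulate. Concretely, I expect the delicate points to be: (i) establishing consistency of $\hat{{\underline{\theta}}}_{MLE}$ before the Taylor argument, which requires identifiability of the model and uniform convergence of the empirical log-likelihood to its population counterpart; (ii) justifying the interchange of differentiation and expectation needed both for the zero-mean property of the score and for the information matrix identity, which requires domination conditions on the derivatives of $\log P_{X;{\underline{\theta}}}$; and (iii) controlling the averaged Hessian at the moving point ${\underline{\theta}}_\ast$ rather than at ${\underline{\theta}}_0$, which calls for a local uniform Law of Large Numbers via an integrable dominating envelope and continuity of the second derivatives in a neighborhood of ${\underline{\theta}}_0$. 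Since this is a classical result, I would either cite these conditions from \cite{wasserman2013all} or state them explicitly as standing assumptions; the probabilistic core of the proof is the CLT-plus-Slutsky mechanism described above.
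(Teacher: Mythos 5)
Your proposal is correct and is the standard Taylor-expand-the-score / CLT-plus-Slutsky argument; the paper itself states this lemma without proof, citing \cite{wasserman2013all}, which gives exactly this classical derivation. Notably, the same mechanism you describe --- stationarity of the score, expansion around the centering point, CLT for the scaled score, convergence of the averaged Hessian to $-J$, and Slutsky --- is precisely what the paper deploys in Appendix~\ref{appendix:one_source} to prove its generalization (Theorem~\ref{thm:one_source}, via \eqref{eq11}--\eqref{square_distribution2}), so your blind reconstruction matches both the cited source and the paper's own proof technique.
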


When we transfer $n_1,\dots,n_K$ samples from the $\left\{\cS_1,\dots,\cS_K\right\}$, where $n_i\in\left[0,N_i\right]$, we denote these training sample sequences as $X^{n_1},X^{n_2},\dots,X^{n_K}$ . Then, the MLE for multi-source transfer learning, is the parameter value that maximizes the empirical mean of the likelihood of samples from source tasks and target task, \textit{i.e.},
\begin{align}
\label{mle_defination}
    \hat{{\underline{\theta}}} = \argmax_{{\underline{\theta}}}\frac{1}{N_{0}+\sum_{i=1}^{K}n_i} \left({\sum_{x \in X^{N_{0}}}} \log P_{X;{\underline{\theta}}}(x)
    +{\sum_{i=1}^{K}\sum\limits_{x \in X^{n_{i}}}} \log P_{X;{\underline{\theta}}}(x)\right). 
\end{align}

In this work, 
our goal is to derive the optimal transfer quantities $n_1^*,\dots,n_K^*$ of source tasks $\cS_1,\dots,\cS_K$ to minimize certain divergence measure $\mathcal{D} iv$ between the true distribution of target task $P_{X;{\underline{\theta}}_0}$ and the distribution $P_{X;\hat{{\underline{\theta}}}}$ learned from training samples, \textit{i.e.},
\begin{align}
\label{optimization_problem}
    n_1^*,\dots,n_K^*=\argmin_{n_1,\dots,n_K} \mathcal{D} iv(P_{X;{\underline{\theta}}_0},P_{X;\hat{{\underline{\theta}}}}).
\end{align}

% In machine learning scenario, it corresponds to the optimal transfer quantities of samples from each source to jointly train the target model.
Besides, we provide the notations table in Appendix \ref{appendix:Notations}. 

\section{Theoretical Analysis and Algorithm}
In this section, we will first introduce a new K-L divergence based measure for the optimization problem in \eqref{optimization_problem}. Then, we will analyze it based on high-dimensional statistics to derive the optimal transfer quantities for both single-source and multi-source scenarios. Finally, we will develop a practical algorithm based on the theoretical framework.
%For the beginning, we will provide their definitions
% In this work, we will utilize the K-L divergence in our measure to evaluate the distance between two distributions.
% In this work, we will utilize the K-L divergence as a measure to evaluate the distance between two distributions. Moreover, Fisher information will also be used in subsequent theoretical derivations, and thus we provide their definitions here.

\begin{definition}(The K-L divergence) \cite{cover1999elements}
The K-L divergence $D\left(P \middle\| Q\right)$ measures the difference between two probability distributions \( P(X) \) and \( Q(X) \) over the same probability space. It is defined as:
\begin{align}
    % D_{\text{KL}}(P \parallel Q) &= \sum_{x \in X} P(x) \log \left( \frac{P(x)}{Q(x)} \right), \quad \text{for discrete distributions,} \\
    % D_{\text{KL}}(P \parallel Q) &= \int_{X} p(x) \log \left( \frac{p(x)}{q(x)} \right) \, dx, \quad \text{for continuous distributions,}
    D\left(P \middle\| Q\right) = \sum_{x \in \mathcal{X}} p(x) \log \frac{p(x)}{q(x)}.\notag
\end{align}
% The K-L divergence is always non-negative and is zero if and only if \( p(x) = q(x) \) for all \( x \in X \). 
% where \( P(x) \) and \( Q(x) \) are the probability mass (or density) functions of the distributions \( P \) and \( Q \), respectively. 
% Moreover, the K-L divergence is a direct correspondence with the cross-entropy loss in model training \cite{cover1999elements}.
% \begin{align}
% D\left(P \middle\| Q\right)&= \sum_{x \in X} p(x) \log{p(x)}\underbrace{-\sum_{x \in X} p(x) \log{q(x)}}_{ cross-entropy}\notag
% %\\\notag&=-H(P)+H(P,Q)\notag
% \end{align}

\end{definition}

In this work, we apply the expectation of the K-L divergence between the true distribution model of the target task and the distribution model learned from training samples to measure the generalization error. Compared to other measures, the K-L divergence exhibits a closer correspondence with the generalization error as measured by the cross-entropy loss,
% as demonstrated in Appendix \ref{appendix:K-L}
.

% which directly corresponds to the testing error using cross-entropy loss \cite{cover1999elements}.
% \begin{align}
%     D\left( P_{X;{\underline{\theta}}_0}\middle\| P_{X;\hat{{\underline{\theta}}}}\right) = -{\underbrace{H(P_{X;\hat{{\underline{\theta}}}})}_{\textnormal{entropy}}}+\underbrace{\mathbb{E}_{P_{X;\hat{{\underline{\theta}}}}}\left[-\log
%     P_{X;{\underline{\theta}}_0}
%     \right]}_{\textnormal{cross-entropy}}\notag.
% \end{align}
Finally, our generalization error measure is defined as:
\begin{align}
    \mathcal{D} iv(P_{X;{\underline{\theta}}_0},P_{X;\hat{{\underline{\theta}}}})=\mathbb{E} \left[ D\left( P_{X;{\underline{\theta}}_0}\middle\| P_{X;\hat{{\underline{\theta}}}}\right) \right] .\label{eq:KL1}
\end{align}

\subsection{Single-Source Transfer Learning} 
\label{Single_Source_Transfer_Learning}
The direct computation of the proposed K-L divergence based measure is challenging.
Fortunately, we can show that the proposed measure is computable in the asymptotic regime using high-dimensional statistical analysis. To be specific, we can prove that the proposed measure directly depends on the Mean Squared Error (MSE) in the asymptotic regime, and the MSE can be calculated by an extension of Lemma \ref{thm:Cramér}. Therefore, we perform an asymptotic analysis of this measure.

In this section, we begin by presenting the theoretical results in Lemma \ref{thm:target_only} and Theorem \ref {thm:one_source} where the parameter is 1-dimensional, and subsequently extend them to the high-dimensional parameter setting in Proposition \ref{Proposition:target_only} and \ref {Proposition:one_source}. To begin, we consider the setting with a target task $\cT$ with $N_0$ samples generated from a model with 1-dimensional parameter.
% the Mean Squared Error (MSE) in the asymptotic case. Moreover, the MSE can be calculated by an extension of Lemma \ref{thm:Cramér}.
% Fortunately, we can prove that the proposed proposed measure is directly depends on
% the Mean Squared Error (MSE) in the asymptotic case. Moreover, the MSE can be calculated by an extension of Lemma \ref{thm:Cramér}.

\begin{lemma}\label{thm:target_only}(proved in Appendix \ref{appendix:target_only})
Given a target task $\cT$ with $N_0$ i.i.d. samples generated from a 1-dimensional underlying model $P_{X;\theta_0}$, where $\theta_0 \in \mathbb{R}$, and denoting $\hat{\theta}$ as the MLE \eqref{mle_target} based on the $N_0$ samples, then the proposed measure \eqref{eq:KL1} can be expressed as: 
\begin{align}
\mathbb{E} \left[ D\left(P_{X;\theta_0}\middle\| P_{X;\hat{\theta}}\right) \right] = \frac{1}{2N_0} + o\left( \frac{1}{N_0} \right).
                                           \end{align}
\end{lemma}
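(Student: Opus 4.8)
The plan is to treat the K-L divergence as a smooth function of the fitted parameter and Taylor-expand it around the true value $\theta_0$, then pass the expectation through using the asymptotic law of the MLE supplied by \lemref{thm:Cramér}. Concretely, define $g(\theta)\defeq D\left(P_{X;\theta_0}\middle\|P_{X;\theta}\right)$, so that the quantity of interest is $\mathbb{E}[g(\hat\theta)]$. Since the MLE $\hat\theta$ concentrates around $\theta_0$ at rate $N_0^{-1/2}$, a second-order expansion of $g$ at $\theta_0$ should capture the leading behaviour.

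First I would record the two standard facts about $g$ at $\theta_0$. Differentiating under the sum, $g'(\theta)=-\sum_{x}P_{X;\theta_0}(x)\,\partial_\theta\log P_{X;\theta}(x)$, and evaluating at $\theta=\theta_0$ gives $g'(\theta_0)=-\sum_x\partial_\theta P_{X;\theta}(x)\big|_{\theta_0}=-\partial_\theta\!\left(\sum_x P_{X;\theta}(x)\right)\big|_{\theta_0}=0$, reflecting that $\theta_0$ minimizes $g$. Differentiating once more and using the information identity yields $g''(\theta_0)=-\sum_x P_{X;\theta_0}(x)\,\partial_\theta^2\log P_{X;\theta}(x)\big|_{\theta_0}=J(\theta_0)$, the Fisher information. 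Hence Taylor's theorem gives
\begin{align}
g(\hat\theta)=\tfrac{1}{2}J(\theta_0)\,(\hat\theta-\theta_0)^2+R(\hat\theta),\notag
\end{align}
where the remainder $R$ is $O\!\left((\hat\theta-\theta_0)^3\right)$.

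Taking expectations, I would invoke \lemref{thm:Cramér}: $\sqrt{N_0}(\hat\theta-\theta_0)\xrightarrow{d}\mathcal{N}\!\left(0,J(\theta_0)^{-1}\right)$, so the asymptotic MSE is $\mathbb{E}[(\hat\theta-\theta_0)^2]=\tfrac{1}{N_0}J(\theta_0)^{-1}+o\!\left(1/N_0\right)$. Substituting,
\begin{align}
\mathbb{E}[g(\hat\theta)]=\tfrac{1}{2}J(\theta_0)\cdot\tfrac{1}{N_0}J(\theta_0)^{-1}+o\!\left(\tfrac{1}{N_0}\right)=\frac{1}{2N_0}+o\!\left(\tfrac{1}{N_0}\right),\notag
\end{align}
which is the claim; note that the Fisher information cancels, making the leading constant universal.

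The main obstacle is rigour in the two expectation steps rather than the algebra. Convergence in distribution alone does not deliver convergence of the second moment $N_0\mathbb{E}[(\hat\theta-\theta_0)^2]\to J(\theta_0)^{-1}$; this requires a uniform-integrability argument for $\{N_0(\hat\theta-\theta_0)^2\}$, which is exactly what the ``appropriate regularity conditions'' of \lemref{thm:Cramér} are meant to secure. Likewise, one must show $\mathbb{E}[R(\hat\theta)]=o(1/N_0)$, i.e. that the expected cubic remainder is genuinely lower order; I would bound $R$ via a local boundedness assumption on the third derivative of $\log P_{X;\theta}$ near $\theta_0$ together with the same moment control, handling the rare event that $\hat\theta$ is far from $\theta_0$ separately using consistency of the MLE. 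These integrability conditions are the crux, whereas the differential content (vanishing gradient and Hessian equal to the Fisher information) is routine.
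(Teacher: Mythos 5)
Your proposal is correct, and it follows the same two-step reduction as the paper: both arguments first establish $\mathbb{E}\left[ D\left(P_{X;\theta_0}\middle\| P_{X;\hat{\theta}}\right)\right]=\frac{1}{2}J(\theta_0)\,\mathbb{E}\left[(\hat{\theta}-\theta_0)^2\right]+o(1/N_0)$ (this is precisely the paper's Lemma~\ref{thm:kl2mse}) and then substitute the asymptotic MSE $1/(N_0 J(\theta_0))$ from \lemref{thm:Cramér}, with the Fisher information cancelling. The difference is in how the quadratic approximation is obtained: you Taylor-expand the map $\theta\mapsto D\left(P_{X;\theta_0}\middle\| P_{X;\theta}\right)$ directly in the parameter, using $g'(\theta_0)=0$ and $g''(\theta_0)=J(\theta_0)$, whereas the paper expands the K-L divergence in the probability masses (a chi-squared approximation, \eqref{appendix:taylor1}) and connects it to $(\hat{\theta}-\theta_0)^2$ via the MLE stationarity condition and a Cauchy--Schwarz step; your parameter-space route is the more standard and transparent derivation of the same identity, at the cost of assuming third-derivative control near $\theta_0$. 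The uniform-integrability and remainder-control issue you rightly identify as the crux is exactly what the paper resolves concretely: it splits the expectation over $\{|\hat{\theta}(X^{N_0})-\theta_0|^2\ge\delta\}$ and bounds that event's contribution using Sanov's theorem, obtaining a probability of order $e^{-N_0 J(\theta_0)|\hat{\theta}-\theta_0|^2/2}$ as in \eqref{appendix:dotequal_2}, which makes the tail term $o(1/N_0)$ — so your sketch of handling the rare event ``separately using consistency'' can be made rigorous by exactly that large-deviations bound.
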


The result of Lemma~\ref{thm:target_only} demonstrates that when there is only one target task without any source task, the proposed measure is inversely proportional to the number of training samples. %This aligns with the empirical intuition that more training samples lead to more accurate training. 
% This also validates the superiority of our setting that uses all $N_0$ samples in the training process.
Next, we consider the transfer learning scenario where we have one target task $\cT$ with $N_0$ training samples and one source task $\cS_1$ with $N_1$ training samples. In this case, we aim to determine the optimal transfer quantity $n_1^*\in [1,N_1]$. To facilitate our mathematical derivations, we assume 
$N_0$ and $N_1$ are asymptotically comparable, and
the distance between the parameters of the target task and source task is sufficiently small (\textit{i.e.}, $\vert\theta_0-\theta_1\vert=O(\frac{1}{\sqrt{N_{0}}})$). Considering the similarity of low-level features among tasks of the same type,
this assumption is made without loss of generality and supported by related studies \cite{raghu2019transfusion}.
Furthermore, as demonstrated in subsequent analysis, our conclusions remain valid even in extreme cases where the distance between parameters is large. 

%to get the best $\hat{\theta}$ which can gain the best result of \eqref{eq:KL1}.

% \begin{theorem}
% \label{thm:one_source}(proved in Appendix \ref{appendix:one_source})
% If we have $N_{0}$ target task samples, and choose $n_1\in[1,N_1]$ source task samples from source task $\cS_1$ to jointly train the target model, where the model parameters is 1-dimension, \textit{i.e.}, $d=1$, the K-L objective function will be
%     \begin{align}
%     \label{thm:one_source_KL}
%     \frac{1}{2}\bigg(\underbrace{\frac{1}{N_{0}+n_1}}_{variance~ term}+\underbrace{\frac{n_1^2}{(N_{0}+n_1)^2}t}_{bias~term}\bigg),
% \end{align}
% where 
% \begin{align}
% t \defeq J(\theta_0)(\theta_1 - \theta_0)^2 .
% \end{align}
% \end{theorem}

\begin{theorem}
\label{thm:one_source}(proved in Appendix \ref{appendix:one_source})
Given a target task $\cT$ with $N_0$ i.i.d. samples generated from a 1-dimensional underlying model $P_{X;\theta_0}$, and a source task $\cS_1$ with $N_1$ i.i.d. samples generated from a 1-dimensional underlying model $P_{X;\theta_1}$, where $\theta_0,\theta_1 \in \mathbb{R}$ and $\vert\theta_0-\theta_1\vert=O(\frac{1}{\sqrt{N_{0}}})$, $\hat{\theta}$ is denoted as the MLE \eqref{mle_defination} based on the $N_0$ samples from $\cT$ and $n_1$ samples from $\cS_1$, where $n_1 \in [0,N_1]$, then the proposed measure \eqref{eq:KL1} can be expressed as: 
    \begin{align}
    \label{thm:one_source_KL}
    % &{\cal L}(P_{X; \hat{\theta}}, P_{X;\theta_0}) \nonumber\\
    % &=
    \frac{1}{2}\bigg(\underbrace{\frac{1}{N_{0}+n_1}}_{\textnormal{variance~ term}}+\underbrace{\frac{n_1^2}{(N_{0}+n_1)^2}t}_{\textnormal{bias~term}}\bigg)+o\left(\frac{1}{N_{0}+n_1}\right),
\end{align}
where 
\begin{align}
t \defeq J(\theta_0)(\theta_1 - \theta_0)^2 .
\end{align}
Moreover, the optimal transfer quantity $n_1^*$ is
\begin{align}
n_1^*=
\begin{cases}
N_1,  &\quad \mathrm{if} \ N_{0} \cdot t\le0.5
 \\
\min\left(N_1,\frac{N_{0}}{2N_{0}t-1}\right), &\quad \mathrm{if} \ N_{0} \cdot t>0.5
\end{cases}.\label{optimalexponent}
\end{align}

\end{theorem}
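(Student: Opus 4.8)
The plan is to express the proposed K-L divergence measure in terms of the mean squared error (MSE) of the MLE $\hat\theta$, and then to compute this MSE explicitly for the combined estimator built from $N_0$ target samples and $n_1$ source samples. First I would perform a second-order Taylor expansion of the K-L divergence $D(P_{X;\theta_0}\|P_{X;\hat\theta})$ around the true parameter $\theta_0$. The first-order term vanishes in expectation to leading order (the score has mean zero at $\theta_0$), and the second-order coefficient is precisely half the Fisher information $J(\theta_0)$, so that
\begin{align}
\mathbb{E}\left[D\left(P_{X;\theta_0}\middle\|P_{X;\hat\theta}\right)\right]=\tfrac{1}{2}J(\theta_0)\,\mathbb{E}\left[(\hat\theta-\theta_0)^2\right]+o\!\left(\tfrac{1}{N_0+n_1}\right).
\end{align}
This reduces the whole problem to understanding $\mathbb{E}[(\hat\theta-\theta_0)^2]$, the MSE of the pooled MLE.

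Next I would analyze the pooled MLE defined in \eqref{mle_defination}. Treating the $N_0+n_1$ samples as drawn from a mixture whose effective score is a weighted combination of the target and source scores, a standard asymptotic expansion gives $\hat\theta-\theta_0$ as the sum of a zero-mean fluctuation term (driven by the aggregate score) and a deterministic shift coming from the source samples being generated at $\theta_1\neq\theta_0$. The fluctuation contributes the \emph{variance term}: because the total effective Fisher information scales like $(N_0+n_1)J(\theta_0)$ under the closeness assumption $|\theta_0-\theta_1|=O(1/\sqrt{N_0})$, its contribution to the MSE is $\tfrac{1}{(N_0+n_1)J(\theta_0)}$. The deterministic shift contributes the \emph{bias term}: the source pull displaces the estimator by roughly $\tfrac{n_1}{N_0+n_1}(\theta_1-\theta_0)$, whose square is $\tfrac{n_1^2}{(N_0+n_1)^2}(\theta_1-\theta_0)^2$. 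Multiplying each by $\tfrac12 J(\theta_0)$ and using $t=J(\theta_0)(\theta_1-\theta_0)^2$ yields exactly \eqref{thm:one_source_KL}. The closeness assumption is what guarantees bias and variance live at the same asymptotic order $1/(N_0+n_1)$, so neither is swept into the $o(\cdot)$ remainder.

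With the objective $g(n_1)=\tfrac{1}{N_0+n_1}+\tfrac{n_1^2}{(N_0+n_1)^2}t$ in hand (dropping the harmless $\tfrac12$), the optimization is a one-variable calculus exercise over $n_1\in[0,N_1]$. I would differentiate $g$ with respect to $n_1$, treating it as continuous, and locate the stationary point. The derivative simplifies to a rational expression whose sign is governed by the numerator $2N_0 t\,n_1-(N_0-2N_0 t\,N_0)$-type linear form; setting it to zero yields the interior critical point $n_1=\tfrac{N_0}{2N_0 t-1}$, valid only when $2N_0 t-1>0$. A sign analysis of $g'$ then shows: if $N_0 t\le 0.5$ the function is monotonically decreasing on $[0,N_1]$, so the minimizer is the boundary $n_1^*=N_1$; if $N_0 t>0.5$ the interior critical point is a genuine minimum, giving $n_1^*=\min(N_1,\tfrac{N_0}{2N_0 t-1})$ after clamping to the feasible range. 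This reproduces \eqref{optimalexponent}.

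The main obstacle I anticipate is the rigorous justification of the MSE decomposition for the \emph{pooled} estimator under the regime where $\theta_1-\theta_0$ shrinks at rate $1/\sqrt{N_0}$: one must carefully track that the bias introduced by mis-specified source samples enters at exactly the same order as the variance, rather than being dominated or negligible, and that the cross term between fluctuation and bias vanishes to leading order in expectation. Making the effective-Fisher-information argument precise — i.e., showing the pooled log-likelihood's curvature concentrates at $(N_0+n_1)J(\theta_0)(1+o(1))$ despite the source samples coming from a slightly different law — is the technical heart; the remaining Taylor expansion and the final calculus optimization are routine once this is established.
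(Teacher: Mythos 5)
Your proposal is correct and takes essentially the same route as the paper's proof: the second-order expansion reducing the K-L measure to $\tfrac{1}{2}J(\theta_0)\,\mathbb{E}\bigl[(\hat\theta-\theta_0)^2\bigr]$ is the paper's Lemma~\ref{thm:kl2mse}, your bias--variance split of the pooled MLE around the weighted average $\tfrac{N_0\theta_0+n_1\theta_1}{N_0+n_1}$ with fluctuation variance $\tfrac{1}{(N_0+n_1)J(\theta_0)}$ matches the paper's pseudo-true parameter $E_{\hat\theta}$ and its asymptotic-normality lemma, and the one-variable calculus yielding \eqref{optimalexponent} is identical. The obstacles you flag without resolving---controlling the Taylor remainder (the paper uses a Sanov-type large-deviations bound on the event $\vert\hat\theta-\theta_0\vert^2\ge\delta$) and showing the pooled curvature concentrates at $(N_0+n_1)J(\theta_0)(1+o(1))$---are precisely where the paper spends its technical effort, so your plan is sound as stated.
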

From \eqref{thm:one_source_KL}, we observe that the proposed measure decreases as $N_{0}$ increases, which aligns with our intuition that utilizing all available target samples is beneficial. In addition, the trend of \eqref{thm:one_source_KL} with respect to $n_1$ is more complex. 
We plot  \eqref{thm:one_source_KL} as a function of $n_1$ under two different regimes, determined by the value of $N_{0} \cdot t$, as shown in Figure \ref{N0*T}.
% We will give the figure of \eqref{thm:one_source_KL} and interpret our results at three different intervals. 
Our goal is to explore the optimal value of $n_1^*$ to minimize \eqref{thm:one_source_KL}.

\begin{wrapfigure}{r}{0.50\linewidth}
\centering
\vspace{-2.6em}
\includegraphics[width=1.00\linewidth]{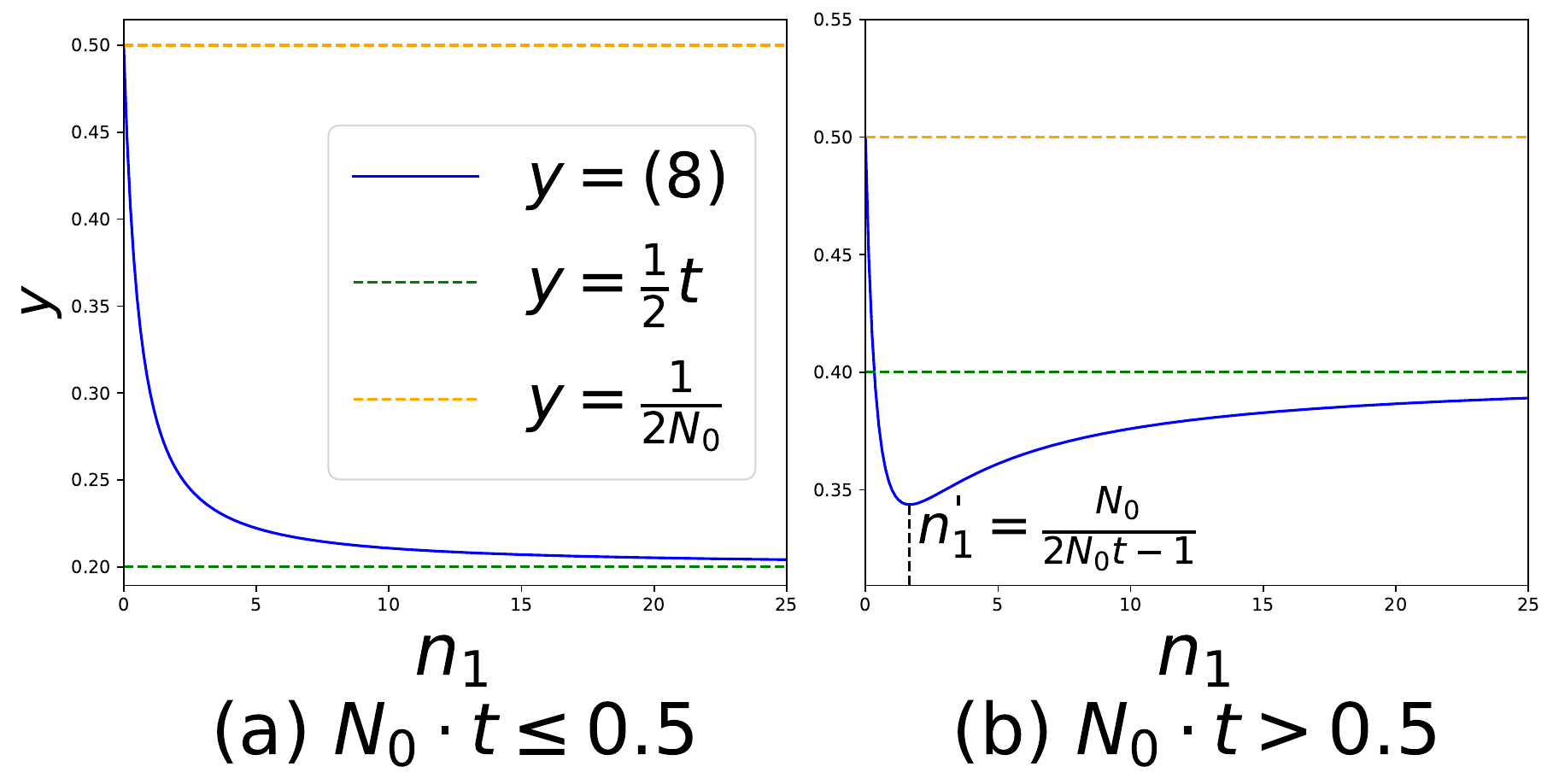}
\caption{The function curve figures of \eqref{thm:one_source_KL} under different regimes determined by the value of $N_{0} \cdot t$ (\textcolor{blue}{blue}). The vertical axis denotes the value of proposed measure \eqref{thm:one_source_KL}, while the horizontal axis denotes the  variable $n_1$.}
\label{N0*T}
\vskip-1em
\end{wrapfigure}

% \begin{wrapfigure}{r}{0.50\linewidth}
%     \centering
%     \vspace{-2.6em}
%     ~\kern-.35em\includegraphics[width=1.00\linewidth]{hgb_1_30_figure/teaser_1_30.pdf}
%     \vspace{-1.85em}
%     \caption{\textbf{More source samples does not always mean better performance.} Incorporating all source samples may bring negative impact, 
%     which is illustrated by the comparison of 
%     two strategies, using target with all source samples (\textcolor{blue}{blue}) and using target only (\textcolor{red}{red}), 
%     evaluated on the equally divided 5-task CIFAR10 dataset. Theoretically, although incorporating more source samples reduces model variance by expanding the training data, the discrepancy between the source and target tasks introduces additional bias.} 
%     \label{fig:teaser-cifar10}
%     \vskip-1em
% \end{wrapfigure}

\begin{itemize}
    \item \textbf{Case 1 ($N_{0} \cdot t\le0.5$):} The proposed measure monotonically decreases as $n_1$ increases. Obviously, the optimal point is $n_1^*=N_{1}$. This indicates that when the source task and the target task are highly similar, \textit{i.e.}, when 
    $t$ is small, an increase in the transfer quantity will positively impact the results. 
    
    \item \textbf{Case 2 ($N_{0}\cdot t>0.5$):} The proposed measure first decreases and then increases as $n_1$ increases. It attaining its minimum at $n_{1}' = \frac{N_{0}}{2N_{0}t-1}$. It should be noted that when $t$ is large enough, the point $n_{1}'$ approaches $0$. This aligns with the intuition that when the discrepancy between the source and target tasks is substantial, avoiding transfer yields better results. Furthermore, if $n_1'$ exceeds $N_1$, we should utilize all $N_1$ samples, so $n_1^*=\min\left(N_1,\frac{N_{0}}{2N_{0}t-1}\right)$.
\end{itemize}

As the dimensionality of the model parameter $\theta$
increases to \textbf{higher dimensions}, we derive the following propositions. 
%which takes a form very similar to the previous Lemma \ref{thm:target_only} and Theorem \ref{thm:one_source}

\begin{proposition}(proved in Appendix \ref{appendix:prop_target_only})
\label{Proposition:target_only}
In the case where the parameter dimension is d, \textit{i.e.}, ${\underline{\theta}}_0\in \mathbb{R}^d$, with all other settings remaining the same as the Lemma  \ref{thm:target_only}, the proposed measure \eqref{eq:KL1} can be expressed as 
\begin{align}
\frac{d}{2N_0}+o\left(\frac{1}{N_0}\right).
\end{align}
\end{proposition}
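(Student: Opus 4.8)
The plan is to reduce the $d$-dimensional case to a second-order Taylor expansion of the K-L divergence about the true parameter, exactly as in the argument behind Lemma~\ref{thm:target_only}, but carrying the bookkeeping through matrices and the matrix trace. First I would regard ${\underline{\theta}} \mapsto D\left(P_{X;{\underline{\theta}}_0} \middle\| P_{X;{\underline{\theta}}}\right)$ as a smooth function of its second argument and expand it about ${\underline{\theta}} = {\underline{\theta}}_0$. There the divergence vanishes, and its gradient vanishes as well, since $\nabla_{\underline{\theta}} D$ evaluated at ${\underline{\theta}}_0$ is the $P_{X;{\underline{\theta}}_0}$-expectation of the score, which is zero. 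A direct computation shows the Hessian of $D$ at ${\underline{\theta}}_0$ equals the Fisher information matrix $J({\underline{\theta}}_0)$, so that
\begin{align}
D\left(P_{X;{\underline{\theta}}_0} \middle\| P_{X;\hat{{\underline{\theta}}}}\right) = \frac{1}{2}\left(\hat{{\underline{\theta}}} - {\underline{\theta}}_0\right)^T J({\underline{\theta}}_0)\left(\hat{{\underline{\theta}}} - {\underline{\theta}}_0\right) + R,
\end{align}
where $R$ collects the third- and higher-order terms of the expansion.

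Next I would take expectations and invoke the asymptotic normality of the MLE from Lemma~\ref{thm:Cramér}, namely $\sqrt{N_0}\left(\hat{{\underline{\theta}}} - {\underline{\theta}}_0\right) \xrightarrow{d} \mathcal{N}\left(0, J({\underline{\theta}}_0)^{-1}\right)$. Writing the leading quadratic form as a trace and using the identity $\mathbb{E}\left[Z^T A Z\right] = \tr\left(A \bSi\right)$ for a centered Gaussian $Z$ with covariance $\bSi$, the leading term becomes
\begin{align}
\frac{1}{2}\, \mathbb{E}\left[\left(\hat{{\underline{\theta}}} - {\underline{\theta}}_0\right)^T J({\underline{\theta}}_0)\left(\hat{{\underline{\theta}}} - {\underline{\theta}}_0\right)\right] = \frac{1}{2N_0}\tr\left(J({\underline{\theta}}_0) J({\underline{\theta}}_0)^{-1}\right) + o\left(\frac{1}{N_0}\right) = \frac{1}{2N_0}\tr\left(\bI_d\right) + o\left(\frac{1}{N_0}\right),
\end{align}
and since $\tr\left(\bI_d\right) = d$ this gives the claimed $\frac{d}{2N_0}$. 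This is precisely where the dimension enters: in the scalar case of Lemma~\ref{thm:target_only} the trace collapses to $J(\theta_0)\cdot J(\theta_0)^{-1} = 1$, whereas in $d$ dimensions it counts the $d$ diagonal entries of the identity.

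The main obstacle will be making the passage from convergence in distribution to convergence of the relevant second moment rigorous, and showing that the remainder $R$ contributes only $o(1/N_0)$. Since $\hat{{\underline{\theta}}} - {\underline{\theta}}_0 = O_p(N_0^{-1/2})$, the cubic term is formally $O_p(N_0^{-3/2})$, but turning these stochastic orders into statements about expectations requires a uniform-integrability (or moment-bound) argument under the same regularity conditions that underlie Lemma~\ref{thm:Cramér}; I would control $R$ through a local Lipschitz bound on the third derivatives of the log-likelihood together with the Gaussian tail of the limiting law. Apart from this analytic step, the proof is structurally identical to the one-dimensional Lemma~\ref{thm:target_only}, with the scalars $J(\theta_0)$ and $(\hat\theta - \theta_0)^2$ replaced by the matrix $J({\underline{\theta}}_0)$ and the corresponding quadratic form, and the constant $1$ replaced by $\tr\left(\bI_d\right) = d$.
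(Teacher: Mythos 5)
Your proposal is correct and follows essentially the same route as the paper: the appendix proof likewise reduces $\mathbb{E}\bigl[D(P_{X;\underline{\theta}_0}\|P_{X;\hat{\underline{\theta}}})\bigr]$ to $\frac{1}{2}\tr\bigl(J(\underline{\theta}_0)\,\mathbb{E}[(\hat{\underline{\theta}}-\underline{\theta}_0)(\hat{\underline{\theta}}-\underline{\theta}_0)^T]\bigr)+o(1/N_0)$ via a second-order expansion and then inserts the asymptotic covariance $J(\underline{\theta}_0)^{-1}/N_0$ from Lemma~\ref{thm:Cramér} to obtain $\tr(\bI_d)/(2N_0)=d/(2N_0)$. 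The only cosmetic difference is in the remainder control you flag: the paper handles the large-deviation region $\{|\hat{\theta}-\theta_0|^2\ge\delta\}$ via Sanov's theorem (exponentially small probability) in the one-dimensional Lemma~\ref{thm:kl2mse} and invokes that argument by analogy, whereas you sketch a moment-bound/uniform-integrability argument, both serving the same purpose.
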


\begin{proposition}(proved in Appendix \ref{appendix:Proposition:one_source})
\label{Proposition:one_source}
In the case where the parameter dimension is d, \textit{i.e.}, ${\underline{\theta}}_0,{\underline{\theta}}_1\in \mathbb{R}^d$, with all other settings remaining the same as the Theorem  \ref{thm:one_source}, the proposed measure \eqref{eq:KL1} can be expressed as
\begin{align} 
\label{Proposition:one_source_eq}
    %&=\frac{1}{2}\frac{d}{m+n}+\frac{1}{2}tr({J}(\vec{\theta_1})\left(\vec{\theta_2} - \vec{\theta_1}\right)^2)\frac{m^2}{(m+n)^2}\nonumber\\
    % &{\cal L}(P_{X; \hat{\theta}}, P_{X;\theta_0})\nonumber\\
    % &=
    \frac{d}{2}\left(\frac{1}{N_0+n_{1}}+\frac{n_{1}^2}{(N_0+n_{1})^2}t^{}\right)+o\left(\frac{1}{N_{0}+n_{1}}\right),
\end{align}
where we denote
\begin{align}
    t^{} \defeq \frac{({{\underline{\theta}}_1} - {{\underline{\theta}}_0})^{T}J({{\underline{\theta}}_0})({{\underline{\theta}}_1} - {{\underline{\theta}}_0})}{d}.
\end{align}
In addition, t is a scalar, ${J}({{\underline{\theta}}_0})$ is $d\times d$ matrix, and $\left({{\underline{\theta}}_1} -{{\underline{\theta}}_0}\right)$ is a $d-dimensional$ vector, which is the element-wise subtraction of two d-dimensional vectors ${\underline{\theta}}_1$ and ${\underline{\theta}}_0$.

\end{proposition}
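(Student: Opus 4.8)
The plan is to mirror the scalar argument of \thmref{thm:one_source}, replacing each scalar quantity by its vector/matrix analogue and collapsing the final expression with a trace. The starting point is the local quadratic expansion of the K-L divergence. Since $\hat{\underline{\theta}}$ concentrates near $\underline{\theta}_0$, a second-order Taylor expansion of $D(P_{X;\underline{\theta}_0}\|P_{X;\underline{\theta}})$ in its second argument about $\underline{\theta}=\underline{\theta}_0$ gives, to leading order,
\[
D\!\left(P_{X;\underline{\theta}_0}\middle\|P_{X;\hat{\underline{\theta}}}\right)=\tfrac{1}{2}(\hat{\underline{\theta}}-\underline{\theta}_0)^{T}J(\underline{\theta}_0)(\hat{\underline{\theta}}-\underline{\theta}_0)+o\!\left(\|\hat{\underline{\theta}}-\underline{\theta}_0\|^2\right),
\]
because the first-order term $-\mathbb{E}_{\underline{\theta}_0}[\,\partial_{\underline{\theta}}\log P_{X;\underline{\theta}_0}\,]$ vanishes and the Hessian of the K-L divergence at the true parameter is exactly the Fisher information matrix $J(\underline{\theta}_0)$. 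Taking expectations reduces the target to evaluating $\tfrac{1}{2}\tr\bigl(J(\underline{\theta}_0)\,M\bigr)$, where $M\defeq\mathbb{E}\bigl[(\hat{\underline{\theta}}-\underline{\theta}_0)(\hat{\underline{\theta}}-\underline{\theta}_0)^{T}\bigr]$ is the mean-squared-error matrix of the pooled MLE \eqref{mle_defination}. This is the matrix generalization of the ``measure $\to$ MSE'' reduction that underlies the one-dimensional results.

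The core step is then to compute $M$ asymptotically and split it into a bias-squared part and a variance part. For the bias, note that the pooled MLE maximizes an empirical log-likelihood in which a fraction $\beta=n_1/(N_0+n_1)$ of the mass is drawn from the source model $P_{X;\underline{\theta}_1}$. Its pseudo-true limit $\underline{\theta}^{*}$ solves the population score equation $(1-\beta)\,\mathbb{E}_{\underline{\theta}_0}[s(\underline{\theta})]+\beta\,\mathbb{E}_{\underline{\theta}_1}[s(\underline{\theta})]=0$ with $s(\underline{\theta})=\partial_{\underline{\theta}}\log P_{X;\underline{\theta}}$; expanding both scores to first order about $\underline{\theta}_0$ and using $\mathbb{E}_{\underline{\theta}_0}[s(\underline{\theta}_0)]=0$ yields $\underline{\theta}^{*}-\underline{\theta}_0=\tfrac{n_1}{N_0+n_1}(\underline{\theta}_1-\underline{\theta}_0)+o(1/\sqrt{N_0})$. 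For the variance, I would extend \lemref{thm:Cramér} to the pooled estimator via standard M-estimation theory, $\sqrt{N_0+n_1}(\hat{\underline{\theta}}-\underline{\theta}^{*})\xrightarrow{d}\mathcal{N}(0,\,A^{-1}BA^{-1})$, where $A$ is the population sensitivity (negative expected Hessian) and $B$ the score covariance. Because $\lvert\underline{\theta}_1-\underline{\theta}_0\rvert=O(1/\sqrt{N_0})$ and $J(\cdot)$ is continuous, both $A$ and $B$ equal $J(\underline{\theta}_0)$ up to higher-order terms, so the sandwich collapses to $J(\underline{\theta}_0)^{-1}$ and the variance contribution to $M$ is $\tfrac{1}{N_0+n_1}J(\underline{\theta}_0)^{-1}$.

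Assembling these pieces gives $M=\tfrac{1}{N_0+n_1}J(\underline{\theta}_0)^{-1}+\tfrac{n_1^2}{(N_0+n_1)^2}(\underline{\theta}_1-\underline{\theta}_0)(\underline{\theta}_1-\underline{\theta}_0)^{T}+o\!\left(\tfrac{1}{N_0+n_1}\right)$. Substituting into $\tfrac{1}{2}\tr(J(\underline{\theta}_0)M)$, the variance term becomes $\tfrac{1}{2(N_0+n_1)}\tr(\mathbf{I}_d)=\tfrac{d}{2(N_0+n_1)}$, while the bias term becomes $\tfrac{n_1^2}{2(N_0+n_1)^2}(\underline{\theta}_1-\underline{\theta}_0)^{T}J(\underline{\theta}_0)(\underline{\theta}_1-\underline{\theta}_0)=\tfrac{d}{2}\tfrac{n_1^2}{(N_0+n_1)^2}t$ by the definition of $t$, yielding \eqref{Proposition:one_source_eq} exactly.

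The main obstacle is the rigorous justification of the asymptotic decomposition of $M$: showing that the bias–variance cross term is $o\!\left(1/(N_0+n_1)\right)$, and that replacing the mixed sensitivity/score-covariance matrices (which involve $J(\underline{\theta}_1)$ and the source-weighted Hessian) by $J(\underline{\theta}_0)$ incurs only higher-order error. Both rely on the smallness assumption $\lvert\underline{\theta}_0-\underline{\theta}_1\rvert=O(1/\sqrt{N_0})$ together with the regularity/continuity conditions already invoked in \lemref{thm:Cramér}. These are the same analytical steps as in the scalar proof of \thmref{thm:one_source}, now carried through with vectors, matrices, and the trace operator, so no genuinely new difficulty arises beyond careful bookkeeping.
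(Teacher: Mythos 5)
Your proposal is correct and follows essentially the same route as the paper's proof: both reduce the measure to $\tfrac{1}{2}\tr\bigl(J(\underline{\theta}_0)M\bigr)$ with $M$ the MSE matrix of the pooled MLE, obtain the pseudo-true limit $\tfrac{N_0\underline{\theta}_0+n_1\underline{\theta}_1}{N_0+n_1}+O\bigl(\tfrac{1}{N_0+n_1}\bigr)$ (the paper via minimizing the weighted K-L objective \eqref{eq:expectedloss2}, you via the equivalent population score equation), and collapse the sandwich variance to $\tfrac{1}{N_0+n_1}J(\underline{\theta}_0)^{-1}$ using $\|\underline{\theta}_1-\underline{\theta}_0\|=O(1/\sqrt{N_0})$, exactly as in \eqref{square_distribution2_0}--\eqref{square_distribution2}. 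The only step you gloss --- that the expectation of the Taylor remainder is $o\bigl(1/(N_0+n_1)\bigr)$ despite rare large deviations of $\hat{\underline{\theta}}$ --- is handled by the Sanov-type tail bound \eqref{appendix:dotequal_2} that the paper carries over from the scalar case, i.e.\ the same bookkeeping you acknowledge in your closing paragraph.
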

% Compared to Lemma \ref{thm:target_only} and Theorem \ref{thm:one_source}, Proposition \ref{Proposition:target_only} and Proposition \ref{Proposition:one_source} 
Compared to Theorem \ref{thm:one_source}, Proposition \ref{Proposition:one_source} exhibits a similar mathematical form, allowing us to derive the optimal transfer quantity through a similar approach.  
Furthermore, we observe that as the parameter dimension $d$ increases, the K-L error measure \eqref{Proposition:one_source_eq} increases. This suggests that for a complex model, knowledge transfer across tasks becomes more challenging, which is consistent with the findings in \cite{tong2021mathematical}.

\subsection{Multi-Source Transfer Learning}
Consider the multi-source transfer learning scenario with $K$ source task $\left\{\cS_1,\dots,\cS_K\right\}$ and one target task $\cT$. 
We aim to derive the optimal transfer quantity $n_i^*$ of each source.

\begin{theorem}(proved in Appendix \ref{appendix:multi_source})
\label{thm:multi_source}
Given a target task $\cT$ with $N_0$ i.i.d. samples generated from the underlying model $P_{X;{\underline{\theta}}_0}$, and K source tasks $\cS_1,\dots,\cS_K$ with $N_1,\dots,N_K$ i.i.d. samples generated from the underlying model $P_{X;{\underline{\theta}}_1},\dots,P_{X;{\underline{\theta}}_K}$, where ${\underline{\theta}}_0,{\underline{\theta}}_1,\dots,{\underline{\theta}}_K\in \mathbb{R}^d$. $\hat{{\underline{\theta}}}$ is denoted as the MLE \eqref{mle_defination} based on the $N_0$ samples from $\cT$ and $n_1,\dots,n_K$ samples from $\cS_1,\dots,\cS_K$, where $n_i \in [0,N_i]$. Denoting $s=\sum\limits_{i=1}^{K}n_i$ as the total transfer quantity, and $\alpha_i=\frac{n_i}{s}$ as the proportion of different source tasks, then the proposed measure \eqref{eq:KL1} can be expressed as: 
\begin{align}
\label{eq:multisourcetarget}
&\frac{d}{2}\left(\frac{1}{N_0+s}+\frac{s^2}{(N_0+s)^2}t\right)+o\left(\frac{1}{N_{0}+s}\right).
\end{align}
In  \eqref{eq:multisourcetarget}, $t$ is a scalar denoted as
\begin{align}
t=\frac{\underline{\alpha}^T\Theta^T{J}({{\underline{\theta}}_0})\Theta\underline{\alpha}}{d},
\end{align}
% $t=\frac{\underline{\alpha}^T\Theta^T{J}({{\underline{\theta}}_0})\Theta\underline{\alpha}}{d},$
where $\underline{\alpha}=\left[\alpha_1,\dots,\alpha_K\right]^T$ is a K-dimensional vector, and $\Theta^{d\times K}=\left[{{\underline{\theta}}_1}-{{\underline{\theta}}_0},\dots,{{\underline{\theta}}_K}-{{\underline{\theta}}_0} \right]$.
\end{theorem}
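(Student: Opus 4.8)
The plan is to reduce the K-L measure to the mean-squared error of the combined MLE $\hat{\underline{\theta}}$ and then compute that MSE by the asymptotic linearization that underlies \lemref{thm:Cramér}, extended to handle a mixture of samples drawn from $K$ distinct source parameters together with the target parameter. First I would perform a second-order Taylor expansion of $D(P_{X;{\underline{\theta}}_0}\|P_{X;\hat{\underline{\theta}}})$ about ${\underline{\theta}}_0$. The zeroth- and first-order terms vanish (both the K-L divergence and its gradient in the second argument are zero at $\hat{\underline{\theta}}={\underline{\theta}}_0$), while the Hessian of the divergence in the second argument equals the Fisher information $J({\underline{\theta}}_0)$, so that
\begin{align}
\mathbb{E}\left[D\left(P_{X;{\underline{\theta}}_0}\middle\|P_{X;\hat{\underline{\theta}}}\right)\right]
=\frac{1}{2}\,\mathrm{tr}\!\left(J({\underline{\theta}}_0)\,\mathbb{E}\!\left[(\hat{\underline{\theta}}-{\underline{\theta}}_0)(\hat{\underline{\theta}}-{\underline{\theta}}_0)^{T}\right]\right)+o\!\left(\tfrac{1}{N_0+s}\right),
\end{align}
reducing the problem to computing the MSE matrix of $\hat{\underline{\theta}}$.

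Next I would characterize $\hat{\underline{\theta}}$ by linearizing the combined score equation. Writing $M=N_0+s$ and letting $\Psi({\underline{\theta}})$ be the normalized combined score from \eqref{mle_defination}, the stationarity condition $\Psi(\hat{\underline{\theta}})=0$ together with $\nabla\Psi({\underline{\theta}}_0)\to -J({\underline{\theta}}_0)$ gives $\hat{\underline{\theta}}-{\underline{\theta}}_0\approx J({\underline{\theta}}_0)^{-1}\Psi({\underline{\theta}}_0)$. I would then split $\Psi({\underline{\theta}}_0)$ into its mean and fluctuation. For the mean, the target contribution vanishes because the target score has zero expectation at ${\underline{\theta}}_0$, while each source $\cS_i$ contributes an expected score $\approx J({\underline{\theta}}_0)({\underline{\theta}}_i-{\underline{\theta}}_0)$, obtained by Taylor-expanding the source expected score about its own true parameter ${\underline{\theta}}_i$ and using the information-matrix equality. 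Summing with weights $n_i/M$ and recalling $s=\sum_i n_i$, $\alpha_i=n_i/s$, the bias becomes $\mathbb{E}[\hat{\underline{\theta}}-{\underline{\theta}}_0]\approx \tfrac{s}{M}\Theta\underline{\alpha}$, whose Fisher-weighted squared norm is exactly $\tfrac{s^2}{M^2}\,d\,t$. For the fluctuation, each task's score covariance equals its Fisher information, all $\approx J({\underline{\theta}}_0)$, so $\mathrm{Cov}[\Psi({\underline{\theta}}_0)]\approx \tfrac{1}{M}J({\underline{\theta}}_0)$ and hence $\mathrm{Cov}[\hat{\underline{\theta}}]\approx \tfrac{1}{M}J({\underline{\theta}}_0)^{-1}$, contributing the variance term $\mathrm{tr}(I_d)/M=d/M$. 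Inserting the bias and variance parts into the trace expression reproduces \eqref{eq:multisourcetarget} with $t=\underline{\alpha}^T\Theta^T J({\underline{\theta}}_0)\Theta\underline{\alpha}/d$.

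The main obstacle will be the error bookkeeping that justifies collapsing every higher-order contribution into the single residual $o(1/(N_0+s))$, and this is precisely where the scaling assumption $|{\underline{\theta}}_i-{\underline{\theta}}_0|=O(1/\sqrt{N_0})$ with asymptotically comparable sample sizes is essential. Under this scaling each $\Theta\underline{\alpha}=O(1/\sqrt{N_0})$, so the bias is $O(1/\sqrt{M})$ and the squared bias $\tfrac{s^2}{M^2}\,dt$ sits at exactly the same order $O(1/M)$ as the variance, which is why both terms survive in the leading expression rather than one dominating. I would then verify that the neglected pieces are genuinely smaller: the quadratic remainder in the source-score expansion is $O(|{\underline{\theta}}_i-{\underline{\theta}}_0|^2)=O(1/N_0)$ and contributes $o(1/M)$ after weighting; the cross term between the deterministic bias and the mean-zero fluctuation vanishes in expectation; and the third-order remainder of the K-L Taylor expansion is $O(\|\hat{\underline{\theta}}-{\underline{\theta}}_0\|^3)=O(M^{-3/2})=o(1/M)$. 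Controlling these remainders uniformly, rather than carrying out the leading-order algebra, is the delicate part; the algebra itself is a direct $K$-source generalization of \propref{Proposition:one_source}, the only structural novelty being that the $K$ individual source biases aggregate into the single effective offset $s\,\Theta\underline{\alpha}$.
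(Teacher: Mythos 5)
Your proposal is correct and follows essentially the same route as the paper's proof: both reduce the K-L measure to the Fisher-weighted MSE $\frac{1}{2}\mathrm{tr}\bigl(J(\underline{\theta}_0)\,\mathbb{E}[(\hat{\underline{\theta}}-\underline{\theta}_0)(\hat{\underline{\theta}}-\underline{\theta}_0)^T]\bigr)$, split it into the covariance part $\frac{1}{N_0+s}J(\underline{\theta}_0)^{-1}$ (yielding $d/(N_0+s)$) and the bias $\frac{s}{N_0+s}\Theta\underline{\alpha}$ (yielding $\frac{s^2}{(N_0+s)^2}\,d\,t$), and use the same local scaling to keep both terms at order $1/(N_0+s)$. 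The only minor difference is how the bias is derived: you linearize the pooled score at $\underline{\theta}_0$ and apply the information-matrix identity to each source's expected score, whereas the paper first characterizes $E_{\hat{\underline{\theta}}}$ as the weighted-K-L minimizer whose distribution is the sample-size-weighted mixture (via a Lagrangian argument) and then Taylor-expands to the same weighted parameter average --- an equivalent computation of the same quantity.
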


% We provide the method to obtain $s^*$ and $\alpha^*$ which minimize \eqref{eq:multisourcetarget} in the Appendix \ref{appendix:sa}.
According to Theorem \ref{thm:multi_source}, we can derive the optimal transfer quantities $n_1^*, \dots, n_K^*$ by minimizing \eqref{eq:multisourcetarget}. Equivalently, we need to find the optimal total transfer quantity $s^*$ and the optimal proportion vector $\underline{\alpha}^*$ which minimize \eqref{eq:multisourcetarget}. The analytical solutions of $s^*$ and $\underline{\alpha}^*$  are difficult to acquire, and we provide a method to get their numerical solutions in Appendix \ref{appendix:sa}. Eventually, we can get the optimal transfer quantity of each source through $n_i^*=s^*\cdot\alpha_i^*$ .

\subsection{Practical Algorithm}
% 值得注意的是，我们的方法还兼具了计算的高效性，在模型训练的同时可以根据其反向传播时的梯度进行fisher参数（即模型参数权重）的计算。由于我们的方法会用矩阵乘法将维度转化为很小的任务个数的维度，对于参数量比较大的模型而言，同一时刻也最多只会存在源任务模型的参数量，保证了我们方法对存储空间并没有太高要求。
% Along with our theoretical framework, we propose a practical algorithm, \ourmethod{}, which is applicable to all supervised target tasks, as presented  in Algorithm \ref{alg:ours_algorithm}. It mainly involves two stages: 
% (1) initializing the source task parameters $\theta_1,...,\theta_K$
% to compute the optimal transfer quantities
% % simulating the source data distribution of each source 
% and (2) jointly training the target model using a resampled dataset whose sample quantity of each source corresponds to the optimal transfer quantity derived from our theory. 

Along with our theoretical framework, we propose a practical algorithm, \ourmethod{}, which is applicable to sample-based multi-source transfer learning tasks, as presented in Algorithm \ref{alg:ours_algorithm}.
In \ourmethod{}, when computing the optimal transfer quantities based on Theorem \ref{thm:multi_source}, we use the parameters of pretrained source models to replace $\underline{\theta}_1,\cdots,\underline{\theta}_K$. Considering that the source model can be trained using sufficient labeled data, it is reasonable to use the learned parameters as a good approximation of the true underlying parameters.
In contrast, the number of target data in transfer learning is often insufficient, so it is difficult to accurately estimate the true parameter $\underline{\theta}_0$ - the parameter of the target task model - using only the target data. 
% In contrast, the quantity of target data in transfer learning is often limited, making it challenging to accurately estimate \( \underline{\theta}_0 \)—the target model parameter—using only the target data. As a result, it is also hard to obtain the \( \underline{\theta}_0 \) required for computing the optimal transfer quantities.
Therefore, as shown in lines 5-14 of Algorithm 1, we adopt a \textbf{dynamic strategy}. 
Specifically, in the first epoch, we train a $\underline{\theta}_0$ using only the target data. 
This $\underline{\theta}_0$ is then used, along with Theorem \ref{thm:multi_source}, to determine the optimal transfer quantity from each source task, and we use \textbf{random sampling} to form a new resampled training dataset. Finally, we continue training $\underline{\theta}_0$ on this new dataset, and this procedure is repeated in each subsequent epoch to iteratively update the training dataset. This mechanism helps alleviate the inaccuracy of $\underline{\theta}_0$, and we also validate the effectiveness of this design in Section \ref{dynamic_transfer}. In particular, we compute the matrix $J$ using the gradient of $\cL_{train}$ in line 10, and the loss function $\ell$ is the negative log-likelihood, following a widely adopted approach in deep learning known as the \textbf{empirical Fisher} \cite{martens2020new,osawa2023pipefisher}.

% Note that computing the KL-measure adds relatively little overhead to the SGD training as the Fisher information can be computed during training by leveraging the gradients obtained in the backpropagation process.

% It is noteworthy that the Fisher information can be computed during training by leveraging the gradients obtained in the backpropagation process.

% Moreover, it employs matrix multiplication reducing the dimensionalty of models to the counts of tasks, ensuring high efficiency on memory consumption.

\begin{algorithm}[H]
   \caption{\ourmethod{}: Training}
   \label{alg:ours_algorithm}
\begin{algorithmic}[1]
    \STATE {\bfseries Input:} Target data $D_{\cT}=\{(z_\cT^i, y_\cT^i)\}_{i=1}^{N_0}$, source data $\{D_{S_k}=\{(z_{S_k}^i, y_{S_k}^i)\}_{i=1}^{N_{k}}\}_{k=1}^K$, model type $f_{{\underline{\theta}}}$ and its parameters ${\underline{\theta}}_0$ for target task and $\{{\underline{\theta}}_{k}\}_{k=1}^K$ for source tasks, parameter dimension $d$.
    % task-specific loss function $\ell(y, \hat{y})$.
    \mycomment{$z$ represents the feature and $y$ represents the label}
    
    \STATE {\bfseries Parameter:} Learning rate $\eta$.
   
    \STATE {\bfseries Initialize:} randomly initialize ${\underline{\theta}}_0$, use parameters of pretrained source models to initialize $\{{\underline{\theta}}_{k}\}_{k=1}^K$.
   
    \STATE {\bfseries Output:} a well-trained $\underline{\theta}_0$ for target task model $f_{{\underline{\theta}}_0}$ .

    % \STATE {\bfseries Output:} a well-trained ${\underline{\theta}}_0$ for target task model $f_{{\underline{\theta}}_0}$.
   
    % \STATE \textbf{for} $k = 1$ {\bfseries to} $K$ \textbf{do} \mycomment{Initialize the source parameters}
    % \STATE \hspace{1em} \textbf{repeat}
    % \STATE \hspace{1em} \hspace{1em}$L_{k} \gets \frac{1}{N_k} \sum\limits_{i=1}^{N_k} \ell\left(y_{S_k}^i, f_{{\underline{\theta}}_k}(z_{S_k}^i)\right)$
    % \STATE \hspace{1em} \hspace{1em}${\underline{\theta}}_{k} \gets {\underline{\theta}}_{k} - \eta \nabla_{{\underline{\theta}}_{k}} L_{k}$
    % \STATE \hspace{1em} \textbf{until} ${\underline{\theta}}_{k}$ converges;
    % \STATE \textbf{end for}
   
    \STATE $D_{train} \gets D_\cT$  \mycomment{Initialize the trainning dataset by target task samples}
    %= \{(z^i, y^i)\}_{i=1}^{N_0} 
   
    \STATE \textbf{repeat} \mycomment{Use dynamic strategy to train the target task}
    
    \STATE \hspace{1em}$\cL_{train} \gets \frac{1}{|D_{train}|} \sum\limits_{(y^i,z^i)\in{D_{train}}} \ell\left(y^i, f_{{\underline{\theta}}_0}(z^i)\right)$
    \STATE \hspace{1em}${\underline{\theta}}_0 \gets {\underline{\theta}}_0 - \eta \nabla_{{\underline{\theta}}_0} \cL_{train}$
   
    \STATE \hspace{1em}$\Theta\gets\left[{{\underline{\theta}}_1}-{{\underline{\theta}}_0},\dots,{{\underline{\theta}}_K}-{{\underline{\theta}}_0} \right]^T$
    \STATE \hspace{1em}${J}({{\underline{\theta}}_0}) \gets ( \nabla_{{\underline{\theta}}_0} \cL_{train})( \nabla_{{\underline{\theta}}_0} \cL_{train})^T$ 
    %%%%%%%%%%%%%%老版本优化方法%%%%%%%%%%%%%%%%%%%%%
    % \STATE \hspace{1em}$\alpha^* \gets \argmin\limits_{\alpha} \alpha^T\Theta^T{J}({{\underline{\theta}}_0})\Theta\alpha $
    % \STATE \hspace{1em}$t^* \gets \frac{\alpha^{*^T}\Theta^T{J}({{\underline{\theta}}_0})\Theta\alpha^*}{d}$
    % \STATE \hspace{1em}$s^* \gets \argmin\limits_{s} \frac{d}{2}\left(\frac{1}{N_0+s}+\frac{s^2}{(N_0+s)^2}t^*\right)$ 
    %%%%%%%%%%%%%%老版本优化方法%%%%%%%%%%%%%%%%%%%%%
    \STATE \hspace{1em}$(s^*, \underline{\alpha}^*) \gets \argmin\limits_{(s, \underline{\alpha})} \frac{d}{2}\left(\frac{1}{N_0+s}+\frac{s^2}{(N_0+s)^2}\frac{\underline{\alpha}^T\Theta^T{J}({{\underline{\theta}}_0})\Theta\underline{\alpha}}{d}\right)$ 
    
    % \STATE \hspace{1em}$D_{source} \gets $for $k=1,\dots,K$, take $s^*\alpha_k^*$ samples from $D_{S_k}$ respectively.
    % \STATE \hspace{1em}
    % $D_{source} \gets \bigcup\limits_{k=1}^K \text{i.i.d. } D_{S_k}^{s^* \alpha_k^*}$
    % \STATE \hspace{1em}$D_{source} \gets \bigcup\limits_{k=1}^K  \left\{D_{S_k}^{*} \bigg| D_{S_k}^{*}\subseteq D_{S_k},\; \lvert D_{S_k}^{*}\rvert= s^* \alpha_k^*\right\}$
    %带rand的集合属于号↓
    \STATE \hspace{1em}$D_{source} \gets \bigcup\limits_{k=1}^K  \left\{D_{S_k}^{*} \bigg| D_{S_k}^{*}\overset{\text{rand}}{\subseteq} D_{S_k}, \lvert D_{S_k}^{*}\rvert= s^* \alpha_k^*\right\}$
    \STATE \hspace{1em}$D_{train} \gets D_{source}\bigcup D_\cT $\mycomment{Update the trainning dataset}
    \STATE \textbf{until} ${\underline{\theta}}_{0}$ converges;
\end{algorithmic}
\end{algorithm}

\section{Experiments}
\label{sec:experi}
% 总experiment
% 数据集、mertics
% 实现细节

% 大表结果分析 accuracy
% 数据使用量的分析（epoch等各种考虑哪个有优势画哪个）+计算时间分析 efficient 

% ablation analysis
% 模型取数据与domain之间距离的关系
% 不仅fewshot可以，在大的数据量场景下也能work
% LoRA也可以work

\subsection{Experiments Settings}
\label{Experiments_Settings}
\textbf{Benchmark Datasets.} 
\texttt{DomainNet} contains 586,575 samples of 345 classes from 6 domains (\textit{i.e.}, \textbf{C}: Clipart, \textbf{I}: Infograph, \textbf{P}: Painting, \textbf{Q}: Quickdraw, \textbf{R}: Real and \textbf{S}: Sketch). \texttt{Office-Home} benchmark contains 15588 samples of 65 classes, with 12 adaptation scenarios constructed from 4 domains:  Art, Clipart, Product and Real World (abbr. \textbf{Ar}, \textbf{Cl}, \textbf{Pr} and \textbf{Rw}). 
\texttt{Digits} contains four-digit sub-datasets: MNIST(mt), Synthetic(sy), SVHN(sv) and USPS(up), with each sub-dataset containing samples of numbers ranging from 0 to 9.

% \textbf{Implementation Details.} We employ the ViT-Small~\cite{rw2019timm_vits} model pre-trained on ImageNet21k~\cite{deng2009imagenet} as the backbone model on all datasets. We adopt Adam optimizer with $1e^{-5}$ learning rate, use 20\% of the dataset as the test set, and report the highest accuracies within 5 epoch early stops in all experiments. Following the standard protocol of few-shot learning, the training data for k-shot is k samples per class randomly selected from the target task. All the experiments are conducted on A800 GPUs.

\textbf{Implementation Details.} We employ the ViT-Small model \cite{rw2019timm_vits}, pre-trained on ImageNet-21k \cite{deng2009imagenet}, as the backbone for all datasets. The Adam optimizer is employed with a learning rate of $1e^{-5}$. We allocate 20\% of the dataset as the test set, and report the highest accuracies within 5 epoch early stops in all experiments. Following the standard few-shot learning protocol, the training data for k-shot consists of k randomly selected samples per class from the target task. All experiments are conducted on Nvidia A800 GPUs.

% The scope of compared methods includes: 
% 1) \textbf{Unsupervised Methods}: MSFDA\cite{shen2023balancingMSFDA}, DATE\cite{han2023discriminability_DATE}, M3SDA\cite{peng2019moment_M3SDA}.
% 2) \textbf{Few-Shot Methods with Parameter-Weighting}: H-ensemble\cite{wu2024h_Hensemble}, MCW\cite{lee2019learning_MCW}.
% 3) \textbf{Few-Shot Methods with Samples-Relation:} MADA\cite{zhang2024revisiting_MADA} , WADN\cite{shui2021aggregating_WADN}.
% 4) \textbf{Source Ablating}: Target-Only, Single-Source-Avg, Single-Source-Best, \allsource{}; 

% and considering few methods are designed for the few-shot data-efficient transfer setting

\textbf{Baselines.} For a general performance evaluation, we take SOTA works under similar settings as baselines.
% we unified the methods of existing works under similar settings as ours (with varied source data.) 
The scope of compared methods includes: 
1) Unsupervised Methods: MSFDA \cite{shen2023balancingMSFDA}, DATE \cite{han2023discriminability_DATE}, M3SDA \cite{peng2019moment_M3SDA}.
2) Few-Shot Methods Based on Model(Parameter)-Weighting: H-ensemble \cite{wu2024h_Hensemble}, MCW \cite{lee2019learning_MCW}.
3) Few-Shot Methods Based on Sample: MADA \cite{zhang2024revisiting_MADA},  WADN~\cite{shui2021aggregating_WADN} 
4) Source Ablating Methods: Target-Only, Single-Source-Avg/Single-Source-Best (average/best performance of single-source transfer), \allsource{} (all source \& target data in multi-source transfer).  

%The ResNet\cite{he2016deep_resnet} backbones they used are all pretrained on ImageNet21k. 
%Here, ``Single-$*$'' attaches one domain dataset from source to the target one for training together and evaluating on target testset while ‘‘\allsource{}’’ attaches all domain datasets, and $*$-Best/Avg stand for the best/average performance. Target-Only finetunes Imagenet-pretrained model on few-shot target data only. 
Note that MADA~\cite{zhang2024revisiting_MADA} leverages all unlabeled target data in conjunction with a limited amount of labeled target data, which is a hybrid approach combining unsupervised and supervised learning. 
%, which means all the target samples have been learned to some extent.
Due to the page limit, we provide detailed information on the experimental design and the results of an experiment adapted to the WADN settings on the \texttt{Digits} dataset in Appendix \ref{appendix:Experimental_Design_and_Model Adaptation}.
\begin{table*}[!htbp]
\centering
\caption{\textbf{Multi-Source Transfer Performance on DomainNet and Office-Home.} The arrows indicate transfering from the rest tasks. The highest/second-highest accuracy is marked in Bold/Underscore form respectively. }
\resizebox{\textwidth}{!}{
\begin{tabular}{lc c ccccccc c ccccc}
\toprule
% \multirow{2}{*}{\makecell{\textbf{Method} \\ Second line}} &\multirow{2}{*}{\makecell{\textbf{Method} \\ Second line}} 
\multirow{2.5}{*}{\textbf{Method}} & \multirow{2.5}{*}{\textbf{Backbone}} && \multicolumn{7}{c}{\textbf{DomainNet}} && \multicolumn{5}{c}{\textbf{Office-Home}}\\ 
% \cline{3-9} \cline{10-14}
\cmidrule(lr){4-10} \cmidrule(lr){12-16}
 &&&$\to$C & $\to$I & $\to$P & $\to$Q & $\to$R & $\to$S & Avg&& $\to$Ar & $\to$Cl & $\to$Pr & $\to$Rw & Avg\\ 

 % \multicolumn{1}{c|}{\makecell{\textbf{Method}}} &\multicolumn{1}{c|}{\makecell{\textbf{Backbone}}} &\multicolumn{7}{c|}{\makecell{\textbf{DomainNet}\\\midrule$\to$C\hspace{1.4em}$\to$I\hspace{1.4em}$\to$P\hspace{1.4em}$\to$Q\hspace{1.4em}$\to$R\hspace{1.4em}$\to$S\hspace{1.4em}Avg}} &\multicolumn{5}{c}{\makecell{\textbf{Office-Home}\\\midrule$\to$Ar\hspace{0.9em}$\to$Cl\hspace{0.9em}$\to$Pr\hspace{0.9em}$\to$Rw\hspace{0.9em}Avg}}\\
\midrule
% \midrule
% \textit{Source-combine:} &&&&&&&&&&&&\\
% \textit{Unsupervised-all-shots:} &&&&&&&&&&&&&\\
\multicolumn{14}{l}{\textbf{\textrm{Unsupervised-all-shots}}} \\
% (Surrogate-RN50-ICML23)
MSFDA\cite{shen2023balancingMSFDA} & ResNet50 && 66.5 & 21.6 &56.7 &20.4 &70.5 &54.4 &48.4  && 75.6 &62.8 &84.8 &\underline{85.3} &77.1  \\
% (SHOT-RN50-AAAI23)
DATE\cite{han2023discriminability_DATE}& ResNet50 && - & - & - & - & - & -   & - && 75.2 & 60.9 &  \textbf{85.2} & 84.0 & 76.3 \\
% MSFDA(Oracle-RN50-ICML23): & 76.5 &32.8 &64.7 &34.6 &77.8 &62.4 &58.1  & 86.9 &85.1 &92.2 &95.7 &90.0  \\
% M3SDA(*-RN101-CVPR19)& 57.0 & 22.1 & 50.5 & 4.4 & 62.0 & 48.5 & 40.8  & - & - & - & - & - \\
% (RN101-CVPR19)
M3SDA\cite{peng2019moment_M3SDA}& ResNet101 &&57.2 & 24.2 & 51.6 & 5.2 & 61.6& 49.6& 41.5  && - & - & - & - & - \\
% M3SDA($\beta$-RN101-CVPR19)& 58.6 & 26.0 & 52.3 & 6.3 & 62.7&  49.5 & 42.6  & - & - & - & -  & - \\
\midrule
% \textit{Supervised-10-shots:} &&&&&&&&&&&&&\\
\multicolumn{14}{l}{\textbf{\textrm{Supervised-10-shots}}} \\
\multicolumn{14}{l}{\textit{\textbf{Few-Shot Methods:}}}\\
% H-ensemble(Vits-AAAI24)& 51.7 & 21.7 & 54.0 & 19.2 & 70.5 & 42.0 & 43.2 & - & - & - & - & -  \\
% (Vits-AAAI24)
H-ensemble\cite{wu2024h_Hensemble}& ViT-S && 53.4 & 21.3 & 54.4 & 19.0 & 70.4 & 44.0 & 43.8 && 71.8 & 47.5 & 77.6 & 79.1 & 69.0  \\
% (Vits-CVPR24)
% MADA\cite{zhang2024revisiting_MADA}& ViT-S && 51.0 & 11.0 & 60.3 & 15.0 & \textbf{81.4} & 16.1 & 39.1 && \textbf{81.7} & 39.7 & \textbf{86.9} & \textbf{91.3} & 74.9 \\
% MADA\cite{zhang2024revisiting_MADA}& ViT-S && 51.0 & 11.0 & 60.3 & 15.0 & \textbf{81.4} & 16.1 & 39.1 && \textbf{85.2} & \textbf{73.1} & \textbf{93.9} & \textbf{92.7} & \textbf{86.2} \\
MADA\cite{zhang2024revisiting_MADA}& ViT-S && 51.0 & 12.8 & 60.3 & 15.0 & \textbf{81.4} & 22.7 & 40.5 && \textbf{78.4} & 58.3 & 82.3 & 85.2 & 76.1 \\
% (Res50-CVPR24)
% MADA\cite{zhang2024revisiting_MADA}& Resnet50 && 66.1 & 23.9 & \underline{60.4} & \underline{31.9} & \underline{75.4} & 52.5 & 51.7 && 72.8 & 59.9 & 82.4 & 81.5 & 74.2 \\
% MADA\cite{zhang2024revisiting_MADA}& Resnet50 && 66.1 & 23.9 & \underline{60.4} & \underline{31.9} & \underline{75.4} & 52.5 & 51.7 && 76.5 & 66.9 & 84.6 & 84.1 & 78.0 \\
% MADA\cite{zhang2024revisiting_MADA}& Resnet50 && 66.1 & 23.9 & \underline{60.4} & \underline{31.9} & \underline{75.4} & 52.5 & 51.7 && 72.2 & 64.5 & 82.9 & 81.9 & 75.4 \\
% MADA\cite{zhang2024revisiting_MADA}& ResNet50 && 66.1 & 23.9 & \underline{60.4} & \underline{31.9} & \underline{75.4} & 52.5 & 51.7 && \underline{79.6} & \underline{70.3} & \underline{86.2} & 84.6 & \underline{80.2} \\
MADA\cite{zhang2024revisiting_MADA}& ResNet50 && 66.1 & 23.9 & \underline{60.4} & \underline{31.9} & \underline{75.4} & 52.5 & 51.7 && 72.2 & \underline{64.4} & 82.9 & 81.9 & 75.4 \\
% MCW(Vits-NeurIPS19) & 55.3 & 20.5 & 52.8 & 20.4 & 70.0 & 42.7 & 43.6 & - & - & - & - & - \\
% (Vits-NeurIPS19)
MCW\cite{lee2019learning_MCW}& ViT-S && 54.9 & 21.0 & 53.6 & 20.4 & 70.8 & 42.4 & 43.9 && 68.9 & 48.0 & 77.4 & \textbf{86.0} & 70.1 \\
% (Vits-ICML21)
WADN\cite{shui2021aggregating_WADN}& ViT-S && 68.0 & 29.7 & 59.1 & 16.8 & 74.2 & 55.1 & 50.5 && 60.3 & 39.7 & 66.2 & 68.7 & 58.7 \\
% \midrule
% \textit{Supervised-10-shots:} &&&&&&&&&&&&&\\
\multicolumn{14}{l}{\textbf{\textit{Source-Ablation Methods:}}} \\
Target-Only& ViT-S && 14.2 & 3.3  & 23.2  & 7.2  & 41.4   & 10.6  & 16.7 && 40.0 & 33.3 & 54.9 & 52.6 & 45.2 \\ 
% Single-Source-best(Vits) & - & - & -  & -  & - & -  & - & -  & - & - & - & - \\ 
% Single-Source-avg(Vits) & - & - & -  & -  & - & -  & - & -  & - & - & - & - \\

Single-Source-Avg& ViT-S && 50.4 & 22.1 & 44.9  & 24.7  & 58.8 & 42.5  & 40.6 && 65.2  & 53.3 & 74.4 & 72.7 & 66.4 \\
Single-Source-Best& ViT-S && 60.2 & 28.0 & 55.4  & 28.4  & 66.0 & 49.7  & 48.0 && 72.9  & 60.9 & 80.7 & 74.8 & 72.3 \\ 

\allsource{}& ViT-S && \underline{71.7}   & \underline{32.4}  & 60.0 & 31.4 & 71.7 & \underline{58.5} & \underline{54.3} && 77.0 & 62.3 & 84.9 & 84.5 & \underline{77.2}\\
\ourmethod{} (Ours)& ViT-S && \textbf{72.8} & \textbf{33.8} & \textbf{61.2} & \textbf{33.8}  & 73.2 & \textbf{59.8} & \textbf{55.8} && \underline{78.1} & \textbf{64.5} & \textbf{85.2} & 84.9  & \textbf{78.2} \\
\bottomrule
\end{tabular}
}
\label{tab:major}
\end{table*}

\subsection{Main Result}
%\subsection{Main Result: Quantity with Quality Over Only Quantity}
%\subsection{Main Result on Classification Tasks}
% 大表结果分析 accuracy
% 可分多个baseline针对性分析
% 这里的conditions要补！！
We evaluated our algorithm, \ourmethod{}, alongside baseline methods on the few-shot multi-source transfer learning tasks using the \texttt{DomainNet} and \texttt{Office-Home} datasets. The quantitative results are summarized in Table \ref{tab:major}. Since the unsupervised baselines are not designed for the supervised few-shot setting, we report their original results from the respective papers for reference.
We make the following observations:

\textbf{Overall Performance.} In general, compared to baseline methods, \ourmethod{} achieves the best performance on almost all the transfer scenarios on the two datasets. Specifically, \ourmethod{} outperforms the state-of-the-art (\allsource{}) by an average of $1.5\%$ on \texttt{DomainNet} and $1.0\%$ on \texttt{Office-Home}.

% aiming at finding a better model structure that can translate the source knowledge into target worth knowledge, tried complicated ways and just can not beat easily training the simple model using all the available samples.
% 2.对比supervised情况：大部分的supervised方法都是从模型的参数入手进行可迁移性的度量。在一样的数据情况下，可以看到仅仅从模型的参数入手，supervised的方法的表现并没有从数据的角度入手用all source数据集去训目标任务模型的效果好。

\textbf{Data Speaks Louder Than Model Weighting.} It is worth noting that on both datasets, sample-based methods utilizing both target and source samples to jointly train the model, such as WADN, MADA and \ourmethod{},
generally outperform model(parameter)-weighting approaches which construct the target model by weighting source models, such as H-ensemble and MCW. This observation suggests that sample-based approaches offer greater advantages over model-based methods, because they can fully leverage the relevant information from the source data for the target task.
%, except when constrained by source-free limitations.

% Using all the source samples and the same limited target samples, methods such as H-ensemble, MADA, MCW and WADN, attempts to design complex model structures to better transfer source knowledge to the target domain, but their sophisticated approaches fail to surpass the straightforward training of a simple model using all available data from the results about \allsource{} method on Table \ref{tab:major}.

% \textbf{Take, but only as much as you need.} At the Source-Ablation part of Table~\ref{tab:major} where we compare our results among Target-only, \allsource{}, and \ourmethod{}, we could observe that \ourmethod{} performed best in both datasets. This result confirmed our theory and demonstrated that benefits of searching for the sweet spot between the bias-variance trade-off. By choosing the right quantities of data samples from the source tasks, we could train the target model more accurately and efficiently. 

\textbf{Take, But Only as Much as You Need.} 
%At the Source-Ablation part of Table~\ref{tab:major} where we 
Comparing results in Table~\ref{tab:major} among Target-only, \allsource{}, and \ourmethod{}, we observe that \ourmethod{} achieves the best performance in both datasets by leveraging only a subset of data selected from all available sources based on model preference. This result validates our theory.
% and highlights the benefits of identifying the optimal balance in the bias-variance trade-off. 
By choosing the right quantities of samples from the source tasks, we could train the target model more accurately, and we give an analysis on the domain preference of transfer quantity in Appendix \ref{Analysis_Quantity}. Furthermore, Figure \ref{fig:train_efficiency} shows that \ourmethod{} also significantly reduces the training time and sample usage, which validates its superiority in terms of data efficiency.

\textbf{Few-Shot Labels, Big Gains.} We make a comparison of the results of unsupervised and supervised methods. While other conditions remain the same, 
%except for the number of target samples used, 
Table \ref{tab:major} demonstrates that even if unsupervised methods like MSFDA and M3SDA take all the target data into account (up to 1.3×$10^5$ samples on Real domain of \texttt{DomainNet}), their performance still falls short compared to the supervised methods, which rely on only a limited number of samples (3450 samples). This illustrates the importance of having supervised information in multi-source transfer learning.
% 1.对比unsupervised方法：它们的shot需求太高。它们的方法使用了all-shots，虽然是没有label的目标任务数据，但是使用了targetdomain的所有数据（带入相比最夸张的domain的使用数据量），而在supervised的all的方法只使用了每类10shot的情况（目标任务使用数据量情况）只用到了少量的带有标签的shot，效果就已经超越了大部分unsupervised的情况。
% 需不需要只是无监督来个fewshot然后情况很糟

\begin{wraptable}[8]{r}{0.45\textwidth}%
\vspace{-1.25em}
\centering
\caption{Static vs. Dynamic Transfer Quantity in \ourmethod{} on Office-Home.}
% \caption{\textbf{Dynamic \ourmethod{} Bias Ablation on Office-Home.} The arrows indicate transfering from the rest tasks.}
\resizebox{0.4\textwidth}{!}{
\begin{tabular}{l c ccccc}
\toprule
\multirow{2.5}{*}{\textbf{Method}} & \multirow{2.5}{*}{\textbf{Backbone}} & 
% \multirow{2.5}{*}{\textbf{All samples}} & 
\multicolumn{5}{c}{\textbf{Office-Home}}\\ 
\cmidrule(lr){3-7} 
&& $\to$Ar & $\to$Cl & $\to$Pr & $\to$Rw & Avg\\ 
\midrule
% \midrule
% \textit{Source-combine:} &&&&&&&&&&&&\\
% \textit{Supervised-10-shots Source-Ablation:} &&&&&\\
% None-Source(Vitb)  & 59.8 & 42.2 & 69.5 & 72.0 & 60.9 \\ 
% Single-Source-avg(Vitb)& 72.2  & 59.9 & 82.6 & 81.0 & 73.9 \\
% Single-Source-best(Vitb) & 74.4  & 61.8 & 84.9 & 81.9 & 75.8 \\ 
% \multicolumn{7}{l}{\textit{\textbf{Following settings of WADN:}}} \\
% 10604 9045 8991 9054
\multicolumn{7}{l}{\textit{\textbf{Supervised-10-shots:}}} \\
Static-Under & ViT-S & \underline{77.0} & \underline{62.3} & 84.9 & \underline{84.5} & \underline{77.2}\\
% 486 7139 8290 7948(0.79)
Static-Exact & ViT-S & 46.0 & 59.8  & \underline{85.1}  & 83.7  & 68.7\\
% 3730 
% 9792 5474 2351 2125
Static-Over & ViT-S & 76.8 & 61.9 & 78.6 & 68.6 & 71.5\\
% 3532
Dynamic & ViT-S & \textbf{78.1} & \textbf{64.5} & \textbf{85.2} & \textbf{84.9}  & \textbf{78.2} \\
\bottomrule
\end{tabular}
}
\label{tab:static_and_dynamic}
\vspace{-2mm} 
\end{wraptable}

\subsection{Static vs. Dynamic Transfer Quantity} 
\label{dynamic_transfer}
In our proposed Algorithm \ref{alg:ours_algorithm}, we employ a ‘‘Dynamic’’ strategy that dynamically determines the optimal transfer quantities and updates the resampled dataset during the joint training of target task. To validate the effectiveness of this strategy, 
we conducted comparative experiments using the ‘‘Static-\(*\)’’ methods. ‘‘Static-$*$’’ methods first simulate the distribution of target on target dataset only, and different types of Static such as ‘‘Under, Exact and Over’’ stands for different fitting levels. In ‘‘Static-\(*\)’’ methods, we only compute the optimal transfer quantity once to make the resampled dataset, and evaluated on it until target model converges. The results on Table \ref{tab:static_and_dynamic} demonstrate \ourmethod{} using dynamic transfer quantity achieved the best performance.

% \begin{table}[htbp]
% \centering
% \caption{Static vs. Dynamic Transfer Quantity in \ourmethod{} on Office-Home. Arrows will transfer from the rest tasks.}
% % \caption{\textbf{Dynamic \ourmethod{} Bias Ablation on Office-Home.} The arrows indicate transfering from the rest tasks.}
% \resizebox{0.4\textwidth}{!}{
% \begin{tabular}{l c ccccc}
% \toprule
% \multirow{2.5}{*}{\textbf{Method}} & \multirow{2.5}{*}{\textbf{Backbone}} & 
% % \multirow{2.5}{*}{\textbf{All samples}} & 
% \multicolumn{5}{c}{\textbf{Office-Home}}\\ 
% \cmidrule(lr){3-7} 
% && $\to$Ar & $\to$Cl & $\to$Pr & $\to$Rw & Avg\\ 
% \midrule
% % \midrule
% % \textit{Source-combine:} &&&&&&&&&&&&\\
% % \textit{Supervised-10-shots Source-Ablation:} &&&&&\\
% % None-Source(Vitb)  & 59.8 & 42.2 & 69.5 & 72.0 & 60.9 \\ 
% % Single-Source-avg(Vitb)& 72.2  & 59.9 & 82.6 & 81.0 & 73.9 \\
% % Single-Source-best(Vitb) & 74.4  & 61.8 & 84.9 & 81.9 & 75.8 \\ 
% % \multicolumn{7}{l}{\textit{\textbf{Following settings of WADN:}}} \\
% % 10604 9045 8991 9054
% \multicolumn{7}{l}{\textit{\textbf{Supervised-10-shots:}}} \\
% Static-Under & ViT-S & \underline{77.0} & \underline{62.3} & 84.9 & \underline{84.5} & \underline{77.2}\\
% % 486 7139 8290 7948(0.79)
% Static-Exact & ViT-S & 46.0 & 59.8  & \underline{85.1}  & 83.7  & 68.7\\
% % 3730 
% % 9792 5474 2351 2125
% Static-Over & ViT-S & 76.8 & 61.9 & 78.6 & 68.6 & 71.5\\
% % 3532
% Dynamic & ViT-S & \textbf{78.1} & \textbf{64.5} & \textbf{85.2} & \textbf{84.9}  & \textbf{78.2} \\
% \bottomrule
% \end{tabular}
% }
% \label{tab:static_and_dynamic}
% \vspace{-3mm} 
% \end{table}

\subsection{Generality across Different Shot Settings}
%由上述的理论分析可知，我们的方法并没有受限于目标任务的使用数据量，我们紧接着还继续探究了在目标任务的shot继续增加时我们的方法的表现情况，通过上图可以发现我们的方法在目标任务shot继续增加的情况下还可以稳步比其他方法效果更好，体现了我们方法在数据使用量上的的generative，更多的情况看附录。
As discussed in the theoretical analysis of Theorem~\ref{thm:one_source}, our theoretical framework is applicable to any quantity of target samples. Therefore, \ourmethod{} exhibits shot generality, enabling it to avoid negative transfer across different shot settings.
To validate this, we increase the number of shots from 5 to 100 across methods including \allsource{}, Target-Only, and \ourmethod{}. 
As shown in Figure \ref{fig:all_increase_shots}, experimental results demonstrate that \ourmethod{} consistently outperforms other approaches across all shot settings. This highlights the generality and scalability of \ourmethod{} in terms of data utilization. %Further results and details are provided in the appendix.
%%% real_figure_1_16 !!!!
% \begin{figure}[htbp]
%     \centering
%     % 第一排的两个子图
%     \begin{minipage}[t]{0.45\linewidth} % 子图宽度为 45%
%         \centering
%         \includegraphics[width=0.8\linewidth]{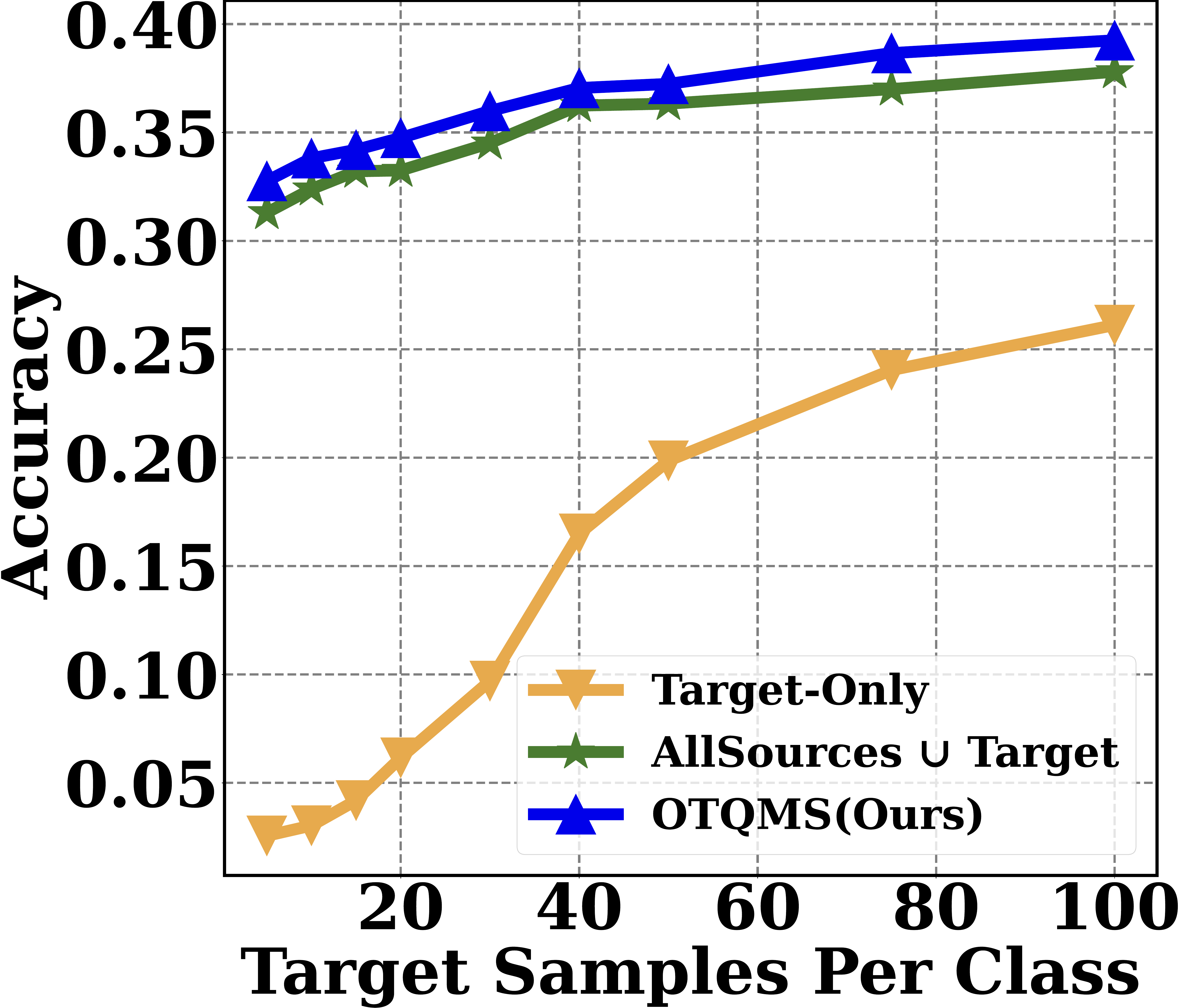}\\ % 调整图片宽度
%         \textbf{(a)} Infograph % 手动添加子图标题
%     \end{minipage}
%     % \hfill
%     \hspace{-0.5cm}
%     \begin{minipage}[t]{0.45\linewidth}
%         \centering
%         \includegraphics[width=0.8\linewidth]{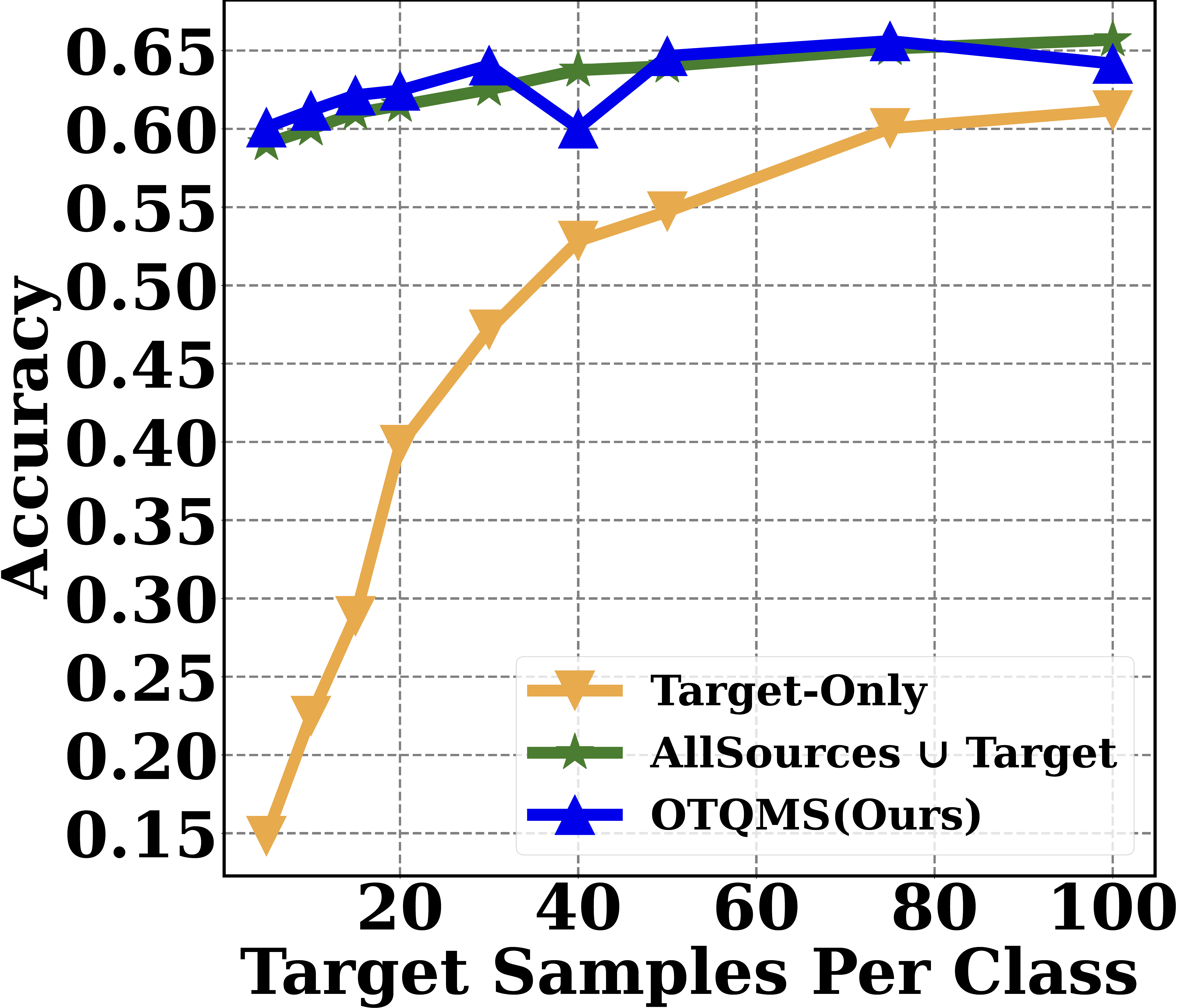}\\
%         \textbf{(b)} Painting
%     \end{minipage}
%     \vspace{0.5em} % 第一排与第二排之间的间距

%     % 第二排的两个子图
%     \begin{minipage}[t]{0.45\linewidth}
%         \centering
%         \includegraphics[width=0.8\linewidth]{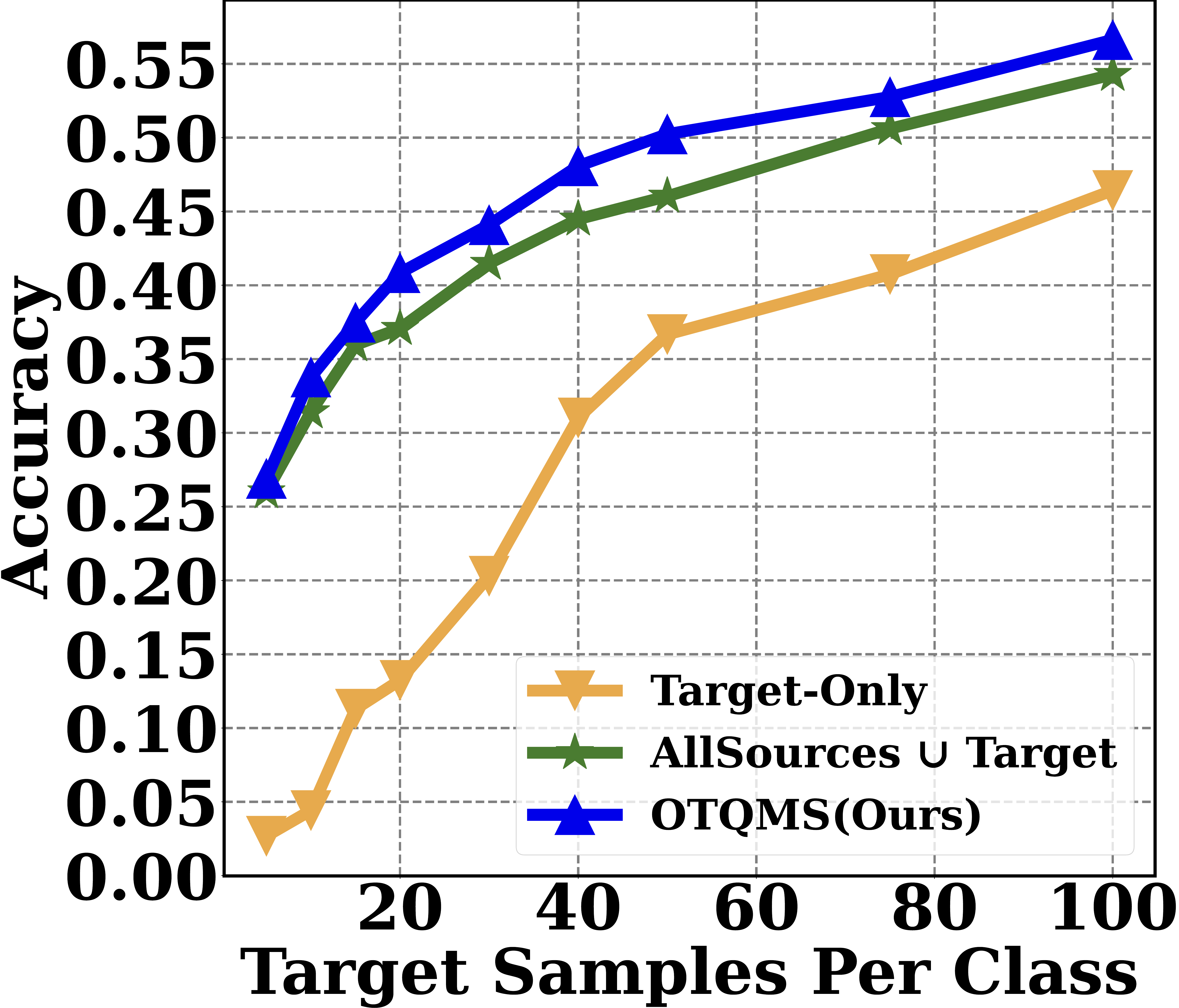}\\
%         \textbf{(c)} Quickdraw
%     \end{minipage}
%     \hspace{-0.59cm}
%     \begin{minipage}[t]{0.45\linewidth}
%         \centering
%         \includegraphics[width=0.8\linewidth]{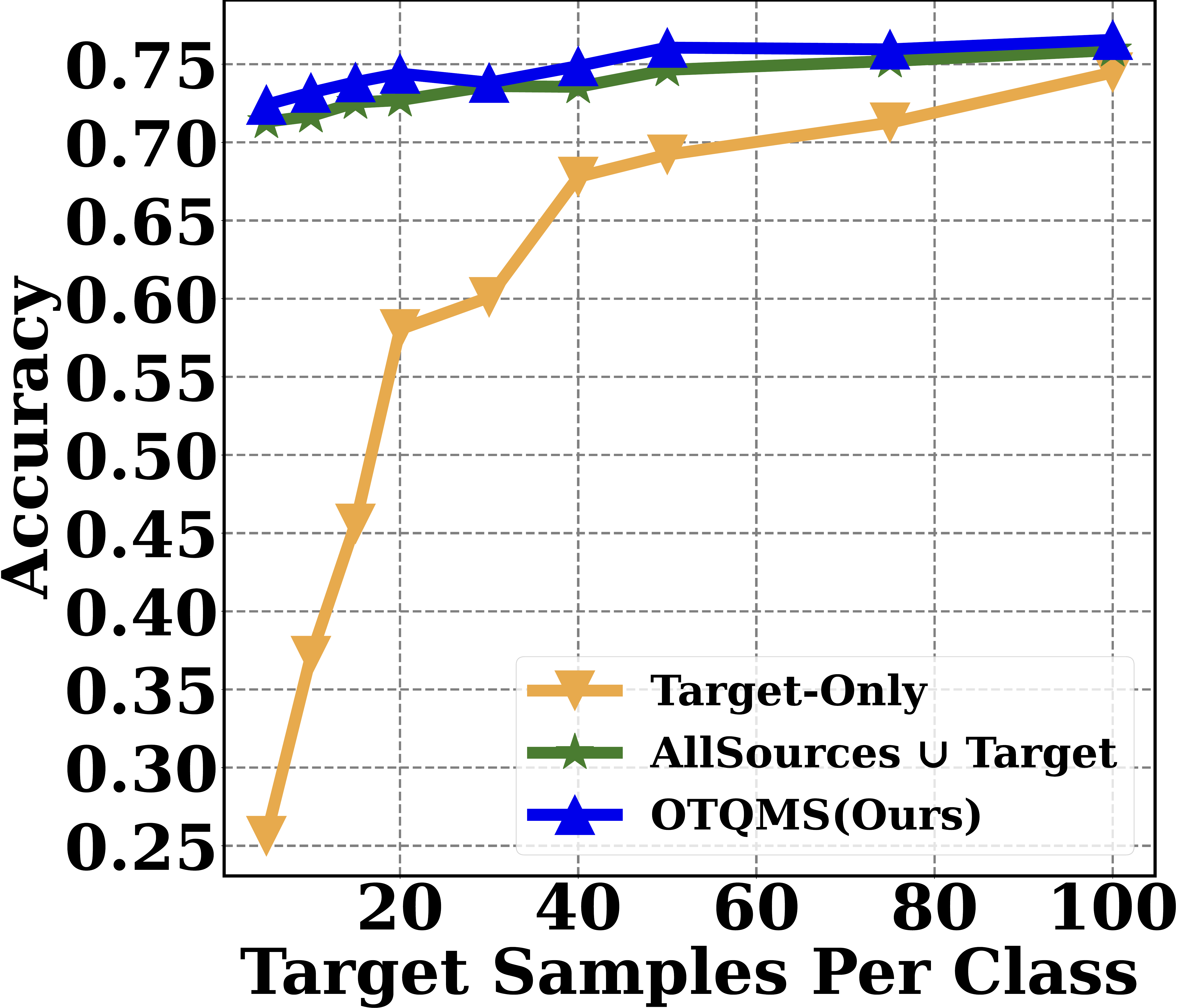}\\
%         \textbf{(d)} Real
%     \end{minipage}

%     % \caption{Effectiveness with Increasing Target Data}
%     \caption{Performance comparison with increasing target shots up to 100 per class on DomainNet dataset (I, P, Q and R domains). \ourmethod{} (\textcolor{blue}{blue}) outperforms other methods.}
%     \label{fig:all_increase_shots}
% \end{figure}

\begin{figure}[htbp]
    \centering
    \includegraphics[width=1.00\linewidth]{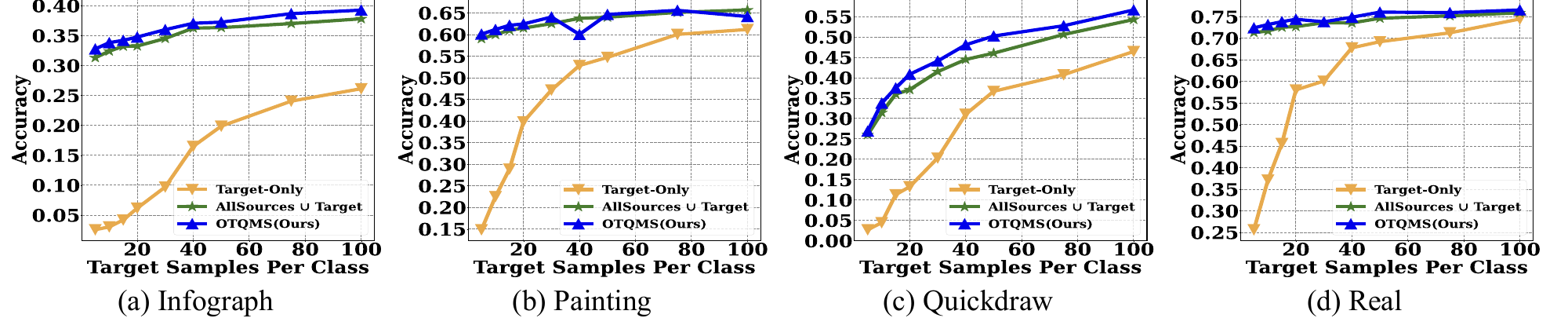}
    \caption{Performance comparison with increasing target shots up to 100 per class on DomainNet dataset (I, P, Q and R domains). \ourmethod{} (\textcolor{blue}{blue}) outperforms other methods.}
    \label{fig:all_increase_shots}
    % \vspace{-3cm}
\end{figure}

% And from figure[], our experiments of evaluating the effectiveness of our method on the large shots have proved that our method continues to outperform other approaches steadily with target shots increasing. This demonstrates the generative capability of our method in terms of data utilization. Additional results and details can be found in the appendix.
% To further investigate this, we conducted experiments to evaluate the performance of our method as the number of shots in the target task increases. As shown in the figure, our method continues to outperform other approaches steadily, even as the number of target task shots increases. This demonstrates the generative capability of our method in terms of data utilization. Additional results and details can be found in the appendix.
\subsection{Data Efficiency Test} 
\label{section:data_efficient}
In this section, we demonstrate the advantage of \ourmethod{} in terms of data efficiency.
Specifically, we conduct experiments with MADA, \allsource{}, and \ourmethod{} across different shot settings, and for each shot setting, we accumulate the total sample used and time consumed until the highest accuracy is reached.
% We visualize the results of the average of all shot setting
% To better visualize the results, we show the average data used and time consumed of all shot setting on Figure \ref{fig:train_efficiency}, which demonstrates that \ourmethod{} requires significantly less data and training time while maintaining competitive performance. 
To better visualize the results, 
we present the average sample usage and training time across all shot settings in Figure~\ref{fig:train_efficiency}. %which demonstrates that \ourmethod{} requires significantly less data and training time while maintaining competitive performance.
To be specific, \ourmethod{} reduces the average training time by $35.19\%$ and the average sample usage by $47.85\%$ on \texttt{DomainNet}, compared to the \allsource{} method.

% Specifically, we accumulate all the data used and time consumed until the highest accuracy is reached for each shot in \allsource{} and \ourmethod{} strategies. 
% To better visualize the results we show the average of all shots on Figure \ref{fig:train_efficiency}, which demonstrates that \ourmethod{} requires significantly less data and training time while maintaining competitive performance.

% In this section, we demonstrate the advantage of \ourmethod{} in terms of data efficiency.
% Specifically, we accumulate all the data used and time consumed until the highest accuracy is reached for each shot in \allsource{} and \ourmethod{} strategies. To better visualize the results we show the average of all shots on Figure \ref{fig:train_efficiency}, which demonstrates that \ourmethod{} requires significantly less data and training time while maintaining competitive performance.  
% and more details for each shot are provided in the appendix.
% \definecolor{MADAvits}{cmyk}{0.918, 0, 0}
% \definecolor{MADAres50}{cmyk}{0, 0.918 ,0.918}
% \definecolor{allsources}{cmyk}{0.290, 0.486, 0.192}
% \definecolor{ours}{cmyk}{0, 0, 0.918}
\begin{figure}[htbp]
    \centering
    \includegraphics[width=0.8\linewidth]{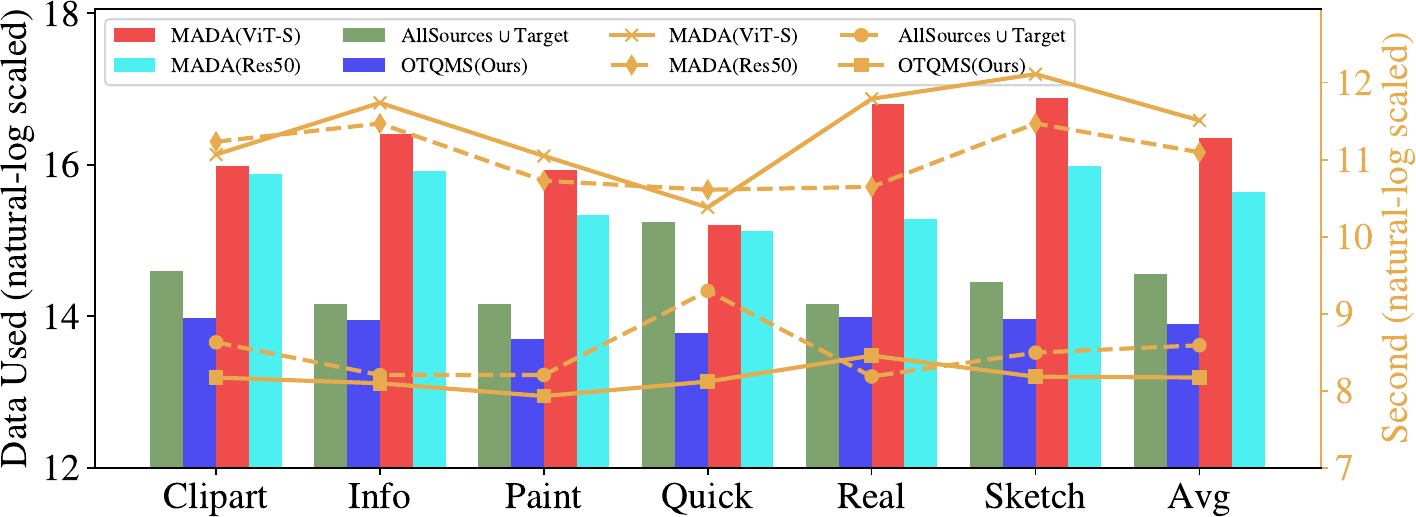}
    \caption{Data efficiency comparison of average sample usage and training time on \texttt{DomainNet} dataset, the left vertical axis represents the amount of sample usage, with \textcolor{allsources}{green} bars indicating \allsource{} data counts, \textcolor{ours}{blue} bars about \ourmethod{}, \textcolor{MADAvits}{red} bars about MADA(ViT-S) and \textcolor{MADAres50}{azury} bars about MADA(Res50), while the right \textcolor{orange}{orange} vertical axis and lines represent training time. }
    % All the results in the figure are scaled by the natural logarithm.
    % , shown by the dashed yellow line for \allsource{} and the solid orange line for \ourmethod{}.}
    \label{fig:train_efficiency}
\end{figure}

\subsection{
Compatibility to Parameter-Efficient Training}
% partial parameters finetune efficient
% Applicability to Larger Model Architectures
% LoRA（Low-Rank Adaptation）是一种针对大模型的低秩适应方法，它冻结预训练模型权重，并在 Transformer 架构的每一层注入可训练的秩分解矩阵，大大减少了下游任务的可训练参数数量，在保证模型质量的同时提高了训练效率且不增加推理延迟。为了测试我们方法在大模型下的适用性和高效性，我们使用添加了LoRA框架的vit-b模型作为新的backbone，与上述一样的setting在OfficeHome数据集上进行测试，结果如下表所示可以看出我们的方法在大模型情况下也有效。

% To evaluate the applicability with larger models, we used a ViT-base model (three times larger than ViT-small) integrated with the LoRA framework as the backbone. 

To evaluate the applicability with parameter-efficient training, we used a ViT-Base model~\cite{dosovitskiy2020vit_vitb} integrated with the LoRA framework \cite{hu2021lora} as the backbone. 
%LoRA \cite{hu2021lora} is a framework designed for large models, which freezes the original model weights and injects trainable low-rank decomposition matrices into each Transformer architecture. 
This approach significantly reduces the number of trainable parameters for downstream tasks while ensuring model quality and improving training efficiency. 
In the experiments, we consider the trainable low-rank matrices and the classification head as the parametric model in our theoretical framework, while treating the remaining parameters as constants. Other experimental settings are the same as default.
Experiments were conducted on the \texttt{Office-Home} dataset.
%under the same settings as details described above. 
The results of Table~\ref{tab:LoRA} in Appendix demonstrate that our method remains effective.
% even with larger model architectures.
% 0.5978
% 0.4220
% 0.6953
% 0.7199
% 0.8108
% 0.6596
% 0.8802
% 0.8924
% 0.8151
% 0.6797
% 0.8919
% 0.9031

% \subsection{Tendency Towards Parameter-Similar Model}
% \subsection{Observation of the domains poverties}
\begin{wraptable}[7]{r}{0.45\textwidth}%
\vspace{-1.25em}
\centering
\caption{Multi-task performance on four tasks of Office-Home.}
% \caption{\textbf{Dynamic \ourmethod{} Bias Ablation on Office-Home.} The arrows indicate transfering from the rest tasks.}
\resizebox{0.4\textwidth}{!}{
\begin{tabular}{l c ccccc}
\toprule
\multirow{2.5}{*}{\textbf{Method}} & \multirow{2.5}{*}{\textbf{Backbone}} & 
% \multirow{2.5}{*}{\textbf{All samples}} & 
\multicolumn{5}{c}{\textbf{Office-Home}}\\ 
\cmidrule(lr){3-7} 
&&Ar & Cl & Pr & Rw & Avg\\ 
\midrule
% \midrule
% \textit{Source-combine:} &&&&&&&&&&&&\\
% \textit{Supervised-10-shots Source-Ablation:} &&&&&\\
% None-Source(Vitb)  & 59.8 & 42.2 & 69.5 & 72.0 & 60.9 \\ 
% Single-Source-avg(Vitb)& 72.2  & 59.9 & 82.6 & 81.0 & 73.9 \\
% Single-Source-best(Vitb) & 74.4  & 61.8 & 84.9 & 81.9 & 75.8 \\ 
% \multicolumn{7}{l}{\textit{\textbf{Following settings of WADN:}}} \\
% 10604 9045 8991 9054
% 486 7139 8290 7948(0.79)
% 3730 
% 9792 5474 2351 2125
Single-task & ViT-S & 66.7 & 62.3 & 87.8 & 68.6 & 71.4\\
% 3532
\ourmethod{} & ViT-S & \textbf{81.7} & \textbf{76.0} & \textbf{88.6} & \textbf{87.5}  & \textbf{83.5} \\
\bottomrule
\end{tabular}
}
\label{tab:Multi_task_exp}
\vspace{-2mm} 
\end{wraptable}

\subsection{Compatibility to Multi-Task Learning}
To demonstrate the applicability of our method to multi-task learning scenarios, we conduct experiments on the \texttt{Office-Home} dataset using the ViT-Small model. In multi-task learning, each task simultaneously serves as both a source and a target task. In the experiments, each task is treated as the target task in \ourmethod{} in turn during training, while the transfer quantities from all source tasks are computed in turn. This setup enables us to evaluate how effectively \ourmethod{} leverages information across tasks. We compare the performance of our method against single-task training to evaluate its effectiveness in Table \ref{tab:Multi_task_exp}.

\section{Conclusion}

In this work, we propose a theoretical framework to determine
the optimal transfer quantities in multi-source transfer learning. 
Our framework reveals that by optimizing the transfer quantity of each source task, we can improve target task training while reducing the total transfer quantity. 
Based on this theoretical framework, we develop an architecture-agnostic
and data-efficient practical algorithm \ourmethod{} for jointly training the target model. 
We evaluated the proposed algorithm through extensive experiments and demonstrated its superior accuracy and enhanced data efficiency.
%%%, highlighting the non-uniform and non-symmetric relationships between tasks

\section*{Acknowledge}
The research is supported in part by 
National Key R\&D Program of China under Grant 2021YFA0715202, the National Natural Science Foundation of China under Grants 62571296, the Shenzhen Science and Technology Program under KJZD20240903102700001, and the Natural Science Foundation of China (Grant 62371270).

\bibliography{example_paper}
\bibliographystyle{plainnat}

%%%%%%%%%%%%%%%%%%%%%%%%%%%%%%%%%%%%%%%%%%%%%%%%%%%%%%%%%%%%

\newpage

\section*{NeurIPS Paper Checklist}

\begin{enumerate}

\item {\bf Claims}
    \item[] Question: Do the main claims made in the abstract and introduction accurately reflect the paper's contributions and scope?
    \item[] Answer: \answerYes{} % Replace by \answerYes{}, \answerNo{}, or \answerNA{}.
    \item[] Justification: The abstract and introduction in this paper accurately reflect the paper's contributions and scope.
    \item[] Guidelines:
    \begin{itemize}
        \item The answer NA means that the abstract and introduction do not include the claims made in the paper.
        \item The abstract and/or introduction should clearly state the claims made, including the contributions made in the paper and important assumptions and limitations. A No or NA answer to this question will not be perceived well by the reviewers. 
        \item The claims made should match theoretical and experimental results, and reflect how much the results can be expected to generalize to other settings. 
        \item It is fine to include aspirational goals as motivation as long as it is clear that these goals are not attained by the paper. 
    \end{itemize}

\item {\bf Limitations}
    \item[] Question: Does the paper discuss the limitations of the work performed by the authors?
    \item[] Answer: \answerYes{}  % Replace by \answerYes{}, \answerNo{}, or \answerNA{}.
    \item[] Justification: We discuss the limitations of the work in Appendix \ref{limitations}.
    \item[] Guidelines:
    \begin{itemize}
        \item The answer NA means that the paper has no limitation while the answer No means that the paper has limitations, but those are not discussed in the paper. 
        \item The authors are encouraged to create a separate "Limitations" section in their paper.
        \item The paper should point out any strong assumptions and how robust the results are to violations of these assumptions (e.g., independence assumptions, noiseless settings, model well-specification, asymptotic approximations only holding locally). The authors should reflect on how these assumptions might be violated in practice and what the implications would be.
        \item The authors should reflect on the scope of the claims made, e.g., if the approach was only tested on a few datasets or with a few runs. In general, empirical results often depend on implicit assumptions, which should be articulated.
        \item The authors should reflect on the factors that influence the performance of the approach. For example, a facial recognition algorithm may perform poorly when image resolution is low or images are taken in low lighting. Or a speech-to-text system might not be used reliably to provide closed captions for online lectures because it fails to handle technical jargon.
        \item The authors should discuss the computational efficiency of the proposed algorithms and how they scale with dataset size.
        \item If applicable, the authors should discuss possible limitations of their approach to address problems of privacy and fairness.
        \item While the authors might fear that complete honesty about limitations might be used by reviewers as grounds for rejection, a worse outcome might be that reviewers discover limitations that aren't acknowledged in the paper. The authors should use their best judgment and recognize that individual actions in favor of transparency play an important role in developing norms that preserve the integrity of the community. Reviewers will be specifically instructed to not penalize honesty concerning limitations.
    \end{itemize}

\item {\bf Theory assumptions and proofs}
    \item[] Question: For each theoretical result, does the paper provide the full set of assumptions and a complete (and correct) proof?
    \item[] Answer:\answerYes{}  % Replace by \answerYes{}, \answerNo{}, or \answerNA{}.
    \item[] Justification: In this paper, all proofs of theorems are provided and all assumptions are clearly stated or referenced in the statement of any theorems.
    \item[] Guidelines:
    \begin{itemize}
        \item The answer NA means that the paper does not include theoretical results. 
        \item All the theorems, formulas, and proofs in the paper should be numbered and cross-referenced.
        \item All assumptions should be clearly stated or referenced in the statement of any theorems.
        \item The proofs can either appear in the main paper or the supplemental material, but if they appear in the supplemental material, the authors are encouraged to provide a short proof sketch to provide intuition. 
        \item Inversely, any informal proof provided in the core of the paper should be complemented by formal proofs provided in appendix or supplemental material.
        \item Theorems and Lemmas that the proof relies upon should be properly referenced. 
    \end{itemize}

    \item {\bf Experimental result reproducibility}
    \item[] Question: Does the paper fully disclose all the information needed to reproduce the main experimental results of the paper to the extent that it affects the main claims and/or conclusions of the paper (regardless of whether the code and data are provided or not)?
    \item[] Answer: \answerYes{} % Replace by \answerYes{}, \answerNo{}, or \answerNA{}.
    \item[] Justification: :  All the results in this paper can be reproduced.
    \item[] Guidelines:
    \begin{itemize}
        \item The answer NA means that the paper does not include experiments.
        \item If the paper includes experiments, a No answer to this question will not be perceived well by the reviewers: Making the paper reproducible is important, regardless of whether the code and data are provided or not.
        \item If the contribution is a dataset and/or model, the authors should describe the steps taken to make their results reproducible or verifiable. 
        \item Depending on the contribution, reproducibility can be accomplished in various ways. For example, if the contribution is a novel architecture, describing the architecture fully might suffice, or if the contribution is a specific model and empirical evaluation, it may be necessary to either make it possible for others to replicate the model with the same dataset, or provide access to the model. In general. releasing code and data is often one good way to accomplish this, but reproducibility can also be provided via detailed instructions for how to replicate the results, access to a hosted model (e.g., in the case of a large language model), releasing of a model checkpoint, or other means that are appropriate to the research performed.
        \item While NeurIPS does not require releasing code, the conference does require all submissions to provide some reasonable avenue for reproducibility, which may depend on the nature of the contribution. For example
        \begin{enumerate}
            \item If the contribution is primarily a new algorithm, the paper should make it clear how to reproduce that algorithm.
            \item If the contribution is primarily a new model architecture, the paper should describe the architecture clearly and fully.
            \item If the contribution is a new model (e.g., a large language model), then there should either be a way to access this model for reproducing the results or a way to reproduce the model (e.g., with an open-source dataset or instructions for how to construct the dataset).
            \item We recognize that reproducibility may be tricky in some cases, in which case authors are welcome to describe the particular way they provide for reproducibility. In the case of closed-source models, it may be that access to the model is limited in some way (e.g., to registered users), but it should be possible for other researchers to have some path to reproducing or verifying the results.
        \end{enumerate}
    \end{itemize}

\item {\bf Open access to data and code}
    \item[] Question: Does the paper provide open access to the data and code, with sufficient instructions to faithfully reproduce the main experimental results, as described in supplemental material?
    \item[] Answer: \answerYes{}{} % Replace by \answerYes{}, \answerNo{}, or \answerNA{}.
    \item[] Justification: The code
is available in \url{https://anonymous.4open.science/r/Materials}.
    \item[] Guidelines:
    \begin{itemize}
        \item The answer NA means that paper does not include experiments requiring code.
        \item Please see the NeurIPS code and data submission guidelines (\url{https://nips.cc/public/guides/CodeSubmissionPolicy}) for more details.
        \item While we encourage the release of code and data, we understand that this might not be possible, so “No” is an acceptable answer. Papers cannot be rejected simply for not including code, unless this is central to the contribution (e.g., for a new open-source benchmark).
        \item The instructions should contain the exact command and environment needed to run to reproduce the results. See the NeurIPS code and data submission guidelines (\url{https://nips.cc/public/guides/CodeSubmissionPolicy}) for more details.
        \item The authors should provide instructions on data access and preparation, including how to access the raw data, preprocessed data, intermediate data, and generated data, etc.
        \item The authors should provide scripts to reproduce all experimental results for the new proposed method and baselines. If only a subset of experiments are reproducible, they should state which ones are omitted from the script and why.
        \item At submission time, to preserve anonymity, the authors should release anonymized versions (if applicable).
        \item Providing as much information as possible in supplemental material (appended to the paper) is recommended, but including URLs to data and code is permitted.
    \end{itemize}

\item {\bf Experimental setting/details}
    \item[] Question: Does the paper specify all the training and test details (e.g., data splits, hyperparameters, how they were chosen, type of optimizer, etc.) necessary to understand the results?
    \item[] Answer: \answerYes{} % Replace by \answerYes{}, \answerNo{}, or \answerNA{}.
    \item[] Justification: The paper specify all the training and test details necessary in Section \ref{sec:experi} to understand the results.
    \item[] Guidelines:
    \begin{itemize}
        \item The answer NA means that the paper does not include experiments.
        \item The experimental setting should be presented in the core of the paper to a level of detail that is necessary to appreciate the results and make sense of them.
        \item The full details can be provided either with the code, in appendix, or as supplemental material.
    \end{itemize}

\item {\bf Experiment statistical significance}
    \item[] Question: Does the paper report error bars suitably and correctly defined or other appropriate information about the statistical significance of the experiments?
    \item[] Answer:  \answerNo{} % Replace by \answerYes{}, \answerNo{}, or \answerNA{}.
    % \item[] Justification: Due to the resource limitation, we do not report error bars. We think the error bars are not related to the core result of our experiments.
    \item[] Justification: Due to the resource limitation, we do not report error bars. We think the error bars are not related to the core result of our experiments.
    \item[] Guidelines:
    \begin{itemize}
        \item The answer NA means that the paper does not include experiments.
        \item The authors should answer "Yes" if the results are accompanied by error bars, confidence intervals, or statistical significance tests, at least for the experiments that support the main claims of the paper.
        \item The factors of variability that the error bars are capturing should be clearly stated (for example, train/test split, initialization, random drawing of some parameter, or overall run with given experimental conditions).
        \item The method for calculating the error bars should be explained (closed form formula, call to a library function, bootstrap, etc.)
        \item The assumptions made should be given (e.g., Normally distributed errors).
        \item It should be clear whether the error bar is the standard deviation or the standard error of the mean.
        \item It is OK to report 1-sigma error bars, but one should state it. The authors should preferably report a 2-sigma error bar than state that they have a 96\% CI, if the hypothesis of Normality of errors is not verified.
        \item For asymmetric distributions, the authors should be careful not to show in tables or figures symmetric error bars that would yield results that are out of range (e.g. negative error rates).
        \item If error bars are reported in tables or plots, The authors should explain in the text how they were calculated and reference the corresponding figures or tables in the text.
    \end{itemize}

\item {\bf Experiments compute resources}
    \item[] Question: For each experiment, does the paper provide sufficient information on the computer resources (type of compute workers, memory, time of execution) needed to reproduce the experiments?
    \item[] Answer: \answerYes{} % Replace by \answerYes{}, \answerNo{}, or \answerNA{}.
    \item[] Justification: For each experiment, the paper provide sufficient information on the computer resources needed to reproduce the experiments. We provide them in Section \ref{Experiments_Settings}.
    \item[] Guidelines:
    \begin{itemize}
        \item The answer NA means that the paper does not include experiments.
        \item The paper should indicate the type of compute workers CPU or GPU, internal cluster, or cloud provider, including relevant memory and storage.
        \item The paper should provide the amount of compute required for each of the individual experimental runs as well as estimate the total compute. 
        \item The paper should disclose whether the full research project required more compute than the experiments reported in the paper (e.g., preliminary or failed experiments that didn't make it into the paper). 
    \end{itemize}
    
\item {\bf Code of ethics}
    \item[] Question: Does the research conducted in the paper conform, in every respect, with the NeurIPS Code of Ethics \url{https://neurips.cc/public/EthicsGuidelines}?
    \item[] Answer: \answerYes{} % Replace by \answerYes{}, \answerNo{}, or \answerNA{}.
    \item[] Justification: The research conducted in the paper conform, in every respect, with the NeurIPS Code of Ethics.
    \item[] Guidelines:
    \begin{itemize}
        \item The answer NA means that the authors have not reviewed the NeurIPS Code of Ethics.
        \item If the authors answer No, they should explain the special circumstances that require a deviation from the Code of Ethics.
        \item The authors should make sure to preserve anonymity (e.g., if there is a special consideration due to laws or regulations in their jurisdiction).
    \end{itemize}

\item {\bf Broader impacts}
    \item[] Question: Does the paper discuss both potential positive societal impacts and negative societal impacts of the work performed?
    \item[] Answer: \answerYes{} % Replace by \answerYes{}, \answerNo{}, or \answerNA{}.
    \item[] Justification: We discuss the impacts of the work in Appendix \ref{impacts}.
    \item[] Guidelines:
    \begin{itemize}
        \item The answer NA means that there is no societal impact of the work performed.
        \item If the authors answer NA or No, they should explain why their work has no societal impact or why the paper does not address societal impact.
        \item Examples of negative societal impacts include potential malicious or unintended uses (e.g., disinformation, generating fake profiles, surveillance), fairness considerations (e.g., deployment of technologies that could make decisions that unfairly impact specific groups), privacy considerations, and security considerations.
        \item The conference expects that many papers will be foundational research and not tied to particular applications, let alone deployments. However, if there is a direct path to any negative applications, the authors should point it out. For example, it is legitimate to point out that an improvement in the quality of generative models could be used to generate deepfakes for disinformation. On the other hand, it is not needed to point out that a generic algorithm for optimizing neural networks could enable people to train models that generate Deepfakes faster.
        \item The authors should consider possible harms that could arise when the technology is being used as intended and functioning correctly, harms that could arise when the technology is being used as intended but gives incorrect results, and harms following from (intentional or unintentional) misuse of the technology.
        \item If there are negative societal impacts, the authors could also discuss possible mitigation strategies (e.g., gated release of models, providing defenses in addition to attacks, mechanisms for monitoring misuse, mechanisms to monitor how a system learns from feedback over time, improving the efficiency and accessibility of ML).
    \end{itemize}
    
\item {\bf Safeguards}
    \item[] Question: Does the paper describe safeguards that have been put in place for responsible release of data or models that have a high risk for misuse (e.g., pretrained language models, image generators, or scraped datasets)?
    \item[] Answer: \answerNA{} % Replace by \answerYes{}, \answerNo{}, or \answerNA{}.
    \item[] Justification: This paper does not deal with data or models with a high risk of misuse.
    \item[] Guidelines:
    \begin{itemize}
        \item The answer NA means that the paper poses no such risks.
        \item Released models that have a high risk for misuse or dual-use should be released with necessary safeguards to allow for controlled use of the model, for example by requiring that users adhere to usage guidelines or restrictions to access the model or implementing safety filters. 
        \item Datasets that have been scraped from the Internet could pose safety risks. The authors should describe how they avoided releasing unsafe images.
        \item We recognize that providing effective safeguards is challenging, and many papers do not require this, but we encourage authors to take this into account and make a best faith effort.
    \end{itemize}

\item {\bf Licenses for existing assets}
    \item[] Question: Are the creators or original owners of assets (e.g., code, data, models), used in the paper, properly credited and are the license and terms of use explicitly mentioned and properly respected?
    \item[] Answer: \answerYes{} % Replace by \answerYes{}, \answerNo{}, or \answerNA{}.
    \item[] Justification: The creators or original owners of assets used in the paper are properly credited and the license and terms of use are explicitly mentioned and properly respected.
    \item[] Guidelines: 
    \begin{itemize}
        \item The answer NA means that the paper does not use existing assets.
        \item The authors should cite the original paper that produced the code package or dataset.
        \item The authors should state which version of the asset is used and, if possible, include a URL.
        \item The name of the license (e.g., CC-BY 4.0) should be included for each asset.
        \item For scraped data from a particular source (e.g., website), the copyright and terms of service of that source should be provided.
        \item If assets are released, the license, copyright information, and terms of use in the package should be provided. For popular datasets, \url{paperswithcode.com/datasets} has curated licenses for some datasets. Their licensing guide can help determine the license of a dataset.
        \item For existing datasets that are re-packaged, both the original license and the license of the derived asset (if it has changed) should be provided.
        \item If this information is not available online, the authors are encouraged to reach out to the asset's creators.
    \end{itemize}

\item {\bf New assets}
    \item[] Question: Are new assets introduced in the paper well documented and is the documentation provided alongside the assets?
    \item[] Answer: \answerNA{} % Replace by \answerYes{}, \answerNo{}, or \answerNA{}.
    \item[] Justification: The paper does not release new assets.
    \item[] Guidelines:
    \begin{itemize}
        \item The answer NA means that the paper does not release new assets.
        \item Researchers should communicate the details of the dataset/code/model as part of their submissions via structured templates. This includes details about training, license, limitations, etc. 
        \item The paper should discuss whether and how consent was obtained from people whose asset is used.
        \item At submission time, remember to anonymize your assets (if applicable). You can either create an anonymized URL or include an anonymized zip file.
    \end{itemize}

\item {\bf Crowdsourcing and research with human subjects}
    \item[] Question: For crowdsourcing experiments and research with human subjects, does the paper include the full text of instructions given to participants and screenshots, if applicable, as well as details about compensation (if any)? 
    \item[] Answer: \answerNA{} % Replace by \answerYes{}, \answerNo{}, or \answerNA{}.
    \item[] Justification: The paper does not involve crowdsourcing nor research with human subjects
    \item[] Guidelines:
    \begin{itemize}
        \item The answer NA means that the paper does not involve crowdsourcing nor research with human subjects.
        \item Including this information in the supplemental material is fine, but if the main contribution of the paper involves human subjects, then as much detail as possible should be included in the main paper. 
        \item According to the NeurIPS Code of Ethics, workers involved in data collection, curation, or other labor should be paid at least the minimum wage in the country of the data collector. 
    \end{itemize}

\item {\bf Institutional review board (IRB) approvals or equivalent for research with human subjects}
    \item[] Question: Does the paper describe potential risks incurred by study participants, whether such risks were disclosed to the subjects, and whether Institutional Review Board (IRB) approvals (or an equivalent approval/review based on the requirements of your country or institution) were obtained?
    \item[] Answer: \answerNA{} % Replace by \answerYes{}, \answerNo{}, or \answerNA{}.
    \item[] Justification: The paper does not involve crowdsourcing nor research with human subjects.
    \item[] Guidelines:
    \begin{itemize}
        \item The answer NA means that the paper does not involve crowdsourcing nor research with human subjects.
        \item Depending on the country in which research is conducted, IRB approval (or equivalent) may be required for any human subjects research. If you obtained IRB approval, you should clearly state this in the paper. 
        \item We recognize that the procedures for this may vary significantly between institutions and locations, and we expect authors to adhere to the NeurIPS Code of Ethics and the guidelines for their institution. 
        \item For initial submissions, do not include any information that would break anonymity (if applicable), such as the institution conducting the review.
    \end{itemize}

\item {\bf Declaration of LLM usage}
    \item[] Question: Does the paper describe the usage of LLMs if it is an important, original, or non-standard component of the core methods in this research? Note that if the LLM is used only for writing, editing, or formatting purposes and does not impact the core methodology, scientific rigorousness, or originality of the research, declaration is not required.
    %this research? 
    \item[] Answer: \answerNA{} % Replace by \answerYes{}, \answerNo{}, or \answerNA{}.
    \item[] Justification: The core method development in this research does not involve LLMs as any important, original, or non-standard components.
    \item[] Guidelines:
    \begin{itemize}
        \item The answer NA means that the core method development in this research does not involve LLMs as any important, original, or non-standard components.
        \item Please refer to our LLM policy (\url{https://neurips.cc/Conferences/2025/LLM}) for what should or should not be described.
    \end{itemize}

\end{enumerate}

\newpage
\appendix

\section{Notations}
\label{appendix:Notations}
\begin{table*}[h]
  \setlength{\tabcolsep}{20pt}
  \renewcommand{\arraystretch}{1.2}
  \centering
  \begin{adjustbox}{max width=\textwidth}
  \begin{tabular}{lcr}
    \textbf{Symbol} & \textbf{Description} 
    \\
    $\cT$ & \text{target task} 
    \\
    $\left\{\cS_1,\dots,\cS_K\right\}$ & \text{source tasks} 
    \\
    $N_0$ & \text{ quantity of target samples}
    \\
    $N_1 \cdots N_K$ & \text{maximum sample quantity of each source}
    \\
    $n_1 \cdots n_K$ & \text{transfer quantity of each source} 
     \\
    $\theta$ & model parameter\\
    $\underline{\theta}$ &  vectorized model parameter \\
    $\underline{\theta}_0$ &  vectorized model parameter of target task\\
    $\underline{\theta}_i, i\in[1,K]$ &  vectorized model parameter of i-th source task, \\
    $J(\theta)$  & Fisher information (scalar) of $\theta$ \\
    $J(\underline{\theta})^{d \times d}$  & Fisher information (matrix) of d-dimensional $\underline{\theta}$ \\
    % $P(X;\theta)$ & probability of X parameterized by $\theta$ \\
    $P_{X;\theta}$  & distribution of X parameterized by $\theta$  \\
    $|| \underline{x} ||^2$ & l-2 norm of vector x \\
    % $t(\theta_0, \theta_1, d)$ & task similarity \\
    $\alpha_i$ & transfer proportion in multi-source case from i-th source task\\
    $\underline{\alpha}$ & transfer proportion vector in multi-source case whose i-th entry is $\alpha_i$\\
    $s$ & total transfer quantity in multi-source case
    \\
    $\hat{\theta}$ & estimator of $\theta$ 
    \\
    $E_{\hat{\theta}}$ & expectation of $\hat{\theta}$ 
  \end{tabular}
  \end{adjustbox}
  \caption{Notations}
  \label{Notations}
\end{table*}

\newpage
\section{Extended Related Work}
\label{appendix_related_work}
\subsection{Transfer Learning Theory}
Existing theoretical works can be categorized into two groups. The first group focuses on proposing measures to quantify the similarity between the target and source tasks. Within this group, some measures have been introduced, including $l_2$-distance  \cite{long2014transfer}, optimal transport cost  \cite{courty2016optimal}, 
% K-L divergence~\cite{ganin2016domain,tzeng2017adversarial}, 
LEEP~\cite{nguyen2020leep}, Wasserstein distance~\cite{shui2021aggregating_WADN}, and maximal correlations~\cite{lee2019learning_MCW}. 
This work belongs to the second group focusing on developing new generalization error measures. Within this group, the measures 
having been introduced include $f$-divergence~\cite{harremoes2011pairs},  mutual information 
~\cite{bu2020tightening}, $\X^2$-divergence~\cite{tong2021mathematical}, $\mathcal{H}$-score~\cite{bao2019information,wu2024h_Hensemble}. 
However, the potential of K-L divergence as a generalization error measure has not been sufficiently explored. 
% In addition, some theoretical studies yield target models with constrained spaces, such as a convex combination of source models~\cite{tong2021mathematical}. 

\subsection{Multi-source Transfer Learning}
Classified by the object of transfer, existing multi-source transfer learning methods mainly focus on two types: model transfer vs sample transfer~\cite{zhuang2020comprehensive}. Model transfer assumes there is one or more pre-trained models on the source tasks and transfers their parameters to the target task via fine-tuning~\cite{wan2022uav}.  This work focuses on the latter, which is based on joint training of the source task samples with those of the target task~\cite{zhang2024revisiting_MADA,shui2021aggregating_WADN,li2021dynamic}. %Sample transfer can achieve better performance than model-based method as joint training takes full advantage of the information in the source data related to the target task. 
Classified by the strategy of transfer, existing methods mainly focus on two types: alignment strategy and matching-based strategy \cite{zhao2024more}. Alignment strategy aims to reduce the domain shift among source and target domains~\cite{li2021multi,zhao2021madan,li2021dynamic}. This work is more similar to the latter, focusing on determining which source domains or samples should be selected or assigned higher weights for transfer~\cite{guo2020multi,shui2021aggregating_WADN,tong2021mathematical,wu2024h_Hensemble}. 
However, most existing works either utilize all samples from all sources or perform task-level selection, whereas this work explores a framework that optimizes the transfer quantity of each source task.
% Classified by the object of transfer, existing multi-source transfer learning methods mainly focus on two types: model transfer vs sample transfer~\cite{zhuang2020comprehensive}. Model transfer assumes there is one or more pre-trained models on the source tasks and transfers their parameters to the target task via fine-tuning~\cite{wan2022uav}.  This work focuses on the latter, which is based on joint training of the source task samples with those of the target task~\cite{zhang2024revisiting_MADA,shui2021aggregating_WADN,li2021dynamic}. %Sample transfer can achieve better performance than model-based method as joint training takes full advantage of the information in the source data related to the target task. 
% Classified by the strategy of transfer, existing methods mainly focus on two types: alignment strategy and matching-based strategy \cite{zhao2024more}. Alignment strategy aims to reduce the domain shift among source and target domains~\cite{li2021multi,zhao2021madan,li2021dynamic}. This work is more similar to the latter, focusing on determining which source domains or samples should be selected or assigned higher weights for transfer~\cite{guo2020multi,shui2021aggregating_WADN,tong2021mathematical,wu2024h_Hensemble}. 
% However, most existing works either utilize all samples from all sources or perform task-level selection, whereas this work explores a framework that optimizes the transfer quantity of each source task.
Moreover, many works are restricted to specific target tasks, such as classification, which limits their \emph{task generality}. 
In addition, many studies are mainly applicable to few-shot scenarios, and may suffer from negative transfer in non-few-shot settings, which limits their~\emph{shot generality}, as illustrated in Table~\ref{tab:comparison}.

\newpage

\section{Proofs}

% You can have as much text here as you want. The main body must be at most $8$ pages long.
% For the final version, one more page can be added.
% If you want, you can use an appendix like this one.  

% The $\mathtt{\backslash onecolumn}$ command above can be kept in place if you prefer a one-column appendix, or can be removed if you prefer a two-column appendix.  Apart from this possible change, the style (font size, spacing, margins, page numbering, etc.) should be kept the same as the main body.
%%%%%%%%%%%%%%%%%%%%%%%%%%%%%%%%%%%%%%%%%%%%%%%%%%%%%%%%%%%%%%%%%%%%%%%%%%%%%%%
%%%%%%%%%%%%%%%%%%%%%%%%%%%%%%%%%%%%%%%%%%%%%%%%%%%%%%%%%%%%%%%%%%%%%%%%%%%%%%%
\subsection{Proof of Lemma~\ref{thm:target_only}} \label{appendix:target_only}

\begin{lemma}\label{thm:kl2mse}

In the asymptotic case, the proposed measure \eqref{eq:KL1} and the mean squared error have the relation as follows.
\begin{align}\label{eq:kl2mse}
&\mathbb{E} \left[ D\left(P_{X;\theta_0}\middle\| P_{X;\hat{\theta}}\right) \right]\notag\\&=\frac{1}{2} {J}(\theta_0)  \text{MSE}(\hat{\theta})+o(\frac{1}{N_0}),
\end{align}
\end{lemma}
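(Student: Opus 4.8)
The plan is to regard $D\left(P_{X;\theta_0}\middle\|P_{X;\hat\theta}\right)$ as a smooth deterministic function of the estimate $\hat\theta$, with the true parameter $\theta_0$ held fixed, Taylor-expand it about $\hat\theta=\theta_0$, and then take expectations. First I would split the divergence into a cross-entropy and a constant entropy term,
\begin{align}
D\left(P_{X;\theta_0}\middle\|P_{X;\hat\theta}\right)=g(\hat\theta)-g(\theta_0),\qquad g(\vartheta)\defeq-\sum_{x\in\mathcal{X}}P_{X;\theta_0}(x)\log P_{X;\vartheta}(x),
\end{align}
so that the divergence is exactly the increment of $g$ from $\theta_0$ to $\hat\theta$. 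This form makes clear that $D$ vanishes and attains a minimum at $\hat\theta=\theta_0$, so its expansion begins at second order.

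Next I would expand $g$ to second order. The first-order coefficient is $g'(\theta_0)=-\mathbb{E}\left[\partial_\theta\log P_{X;\theta_0}(X)\right]$, which is zero because the expected score vanishes under the usual regularity conditions. The second-order coefficient is $g''(\theta_0)=-\mathbb{E}\left[\partial_\theta^2\log P_{X;\theta_0}(X)\right]$, which equals the Fisher information $J(\theta_0)$ defined in \lemref{thm:Cramér} by the information-matrix equality (the equivalence of the outer-product and curvature forms). Hence, pointwise in $\hat\theta$,
\begin{align}
D\left(P_{X;\theta_0}\middle\|P_{X;\hat\theta}\right)=\tfrac{1}{2}J(\theta_0)\,(\hat\theta-\theta_0)^2+R(\hat\theta),
\end{align}
where $R$ collects the cubic and higher-order terms and satisfies $R(\hat\theta)=o\!\left((\hat\theta-\theta_0)^2\right)$ as $\hat\theta\to\theta_0$.

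Taking expectations, the leading term yields exactly $\tfrac{1}{2}J(\theta_0)\,\mathbb{E}[(\hat\theta-\theta_0)^2]=\tfrac{1}{2}J(\theta_0)\,\mathrm{MSE}(\hat\theta)$, the claimed main term. The crux is then to show $\mathbb{E}[R(\hat\theta)]=o(1/N_0)$, and this is the step I expect to be the main obstacle: the pointwise estimate $R=o((\hat\theta-\theta_0)^2)$ does not by itself control $\mathbb{E}[R]$, since $\hat\theta$ is random and the remainder involves third and higher derivatives of the log-likelihood evaluated at intermediate points. The plan is to invoke the asymptotic normality of the MLE (\lemref{thm:Cramér}), which gives $\hat\theta-\theta_0=O_p(N_0^{-1/2})$ so that the dominant remainder term is $O_p(N_0^{-3/2})$, and to combine this with a uniform-integrability (bounded-moment) argument on the relevant derivatives of $\log P_{X;\theta}$ in a neighborhood of $\theta_0$ to pass from the stochastic order to an order on the expectation, concluding $\mathbb{E}[R]=o(1/N_0)$. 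This yields $\mathbb{E}[D]=\tfrac{1}{2}J(\theta_0)\,\mathrm{MSE}(\hat\theta)+o(1/N_0)$, as claimed.
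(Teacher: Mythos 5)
Your computation of the leading term is sound and, modulo presentation, coincides with the paper's: the paper expands the KL divergence in the probabilities to obtain the chi-squared form $\frac{1}{2}\sum_{x}\bigl(P_{X;\hat{\theta}}(x)-P_{X;\theta_0}(x)\bigr)^2/P_{X;\theta_0}(x)$ and then linearizes $P_{X;\hat{\theta}}-P_{X;\theta_0}$ in the parameter, which is the same second-order expansion you get from the cross-entropy function $g$; both routes yield the main term $\frac{1}{2}J(\theta_0)(\hat{\theta}-\theta_0)^2$ with a remainder that is pointwise $o\bigl((\hat{\theta}-\theta_0)^2\bigr)$.

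The genuine gap is in the step you yourself flag as the main obstacle. Your proposed repair --- asymptotic normality giving $\hat{\theta}-\theta_0=O_p(N_0^{-1/2})$ combined with uniform integrability of derivatives of $\log P_{X;\theta}$ \emph{in a neighborhood of} $\theta_0$ --- cannot close the argument, because the dangerous contribution to $\mathbb{E}[R(\hat{\theta})]$ does not come from that neighborhood. On the event $\{|\hat{\theta}-\theta_0|\ge\delta\}$ the quadratic expansion is simply invalid and $D\left(P_{X;\theta_0}\middle\|P_{X;\hat{\theta}}\right)$ can be of constant order or larger, so a stochastic-order statement plus moment bounds near $\theta_0$ says nothing about whether this event contributes $o(1/N_0)$ to the expectation; convergence in distribution alone does not even guarantee convergence of the relevant moments. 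What is needed is a quantitative tail estimate, and this is precisely what the paper supplies: it truncates the expectation at $|\hat{\theta}-\theta_0|^2<\delta$ (see \eqref{appendix:expect}) and controls the complementary event via Sanov's theorem, $P_{X^n;\theta_0}(X^{N_0})\doteq e^{-N_0 D\left(\hat{P}_{X^{N_0}}\middle\|P_{X;\theta_0}\right)}$, combined with a Cauchy--Schwarz argument that lower-bounds $D\left(\hat{P}_{X^{N_0}}\middle\|P_{X;\theta_0}\right)$ by $\frac{1}{2}J(\theta_0)|\hat{\theta}-\theta_0|^2$ (see \eqref{Cauchy2}--\eqref{appendix:dotequal_2}), so that the probability of the bad event decays exponentially in $N_0$ and its contribution is $o(1/N_0)$. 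To complete your proof you would need to replace the vague uniform-integrability appeal with such a large-deviation (or comparable concentration) bound on the tails of $\hat{\theta}$ over the whole parameter range, together with an a priori bound on how large $D$ can be there.
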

\begin{proof}
In this section, for the sake of clarity, we will write $ \hat{\theta}$ in its parameterized form $\hat{\theta}(X^{N_0})$ when necessary, and these two forms are mathematically equivalent.
%First, we assume that $\vert\hat{\theta}(X^{N_0}) - \theta_0\vert^2=\epsilon^2$. 
By taking Taylor expansion of $D\left(P_{X;\theta_0}\middle\| P_{X;\hat{\theta}(X^{N_0})} \right)$ at $\theta_0$, we can get

\begin{align}
\label{appendix:taylor1}
&D\left( P_{X;\theta_0}\middle\| P_{X;\hat{\theta}(X^{N_0})}\right)
\notag\\&=\sum_{x\in X}P_{X;\theta_0}(x)\log\frac{P_{X;\theta_0}(x)}{P_{X;\hat{\theta}(X^{N_0})}(x)}
\notag\\&=-\sum_{x\in X}P_{X;\theta_0}(x)\log\frac{P_{X;\hat{\theta}(X^{N_0})}(x)}{P_{X;\theta_0}(x)}
\notag\\&=-\sum_{x\in X}P_{X;\theta_0}(x)\log\left(1+\frac{P_{X;\hat{\theta}(X^{N_0})}(x)-P_{X;\theta_0}(x)}{{P_{X;\theta_0}(x)}}\right)
\notag\\&=-\sum_{x\in X}P_{X;\theta_0}(x)\left(\left(\frac{P_{X;\hat{\theta}(X^{N_0})}(x)-P_{X;\theta_0}(x)}{{P_{X;\theta_0}(x)}}\right)-\frac{1}{2}\left(\frac{P_{X;\hat{\theta}(X^{N_0})}(x)-P_{X;\theta_0}(x)}{{P_{X;\theta_0}(x)}}\right)^2\right)+o(\vert\hat{\theta}(X^{N_0}) - \theta_0\vert^2)  
\notag\\&= \frac{1}{2} \sum_{x\in X} \frac{\left(P_{X;\hat{\theta}(X^{N_0})}(x) - P_{X;\theta_0}(x)\right)^2}{P_{X;\theta_0}(x)}+o(\vert\hat{\theta}(X^{N_0}) - \theta_0\vert^2)  
\end{align}

% \begin{align}
% \label{appendix:taylor1}
% D\left(P_{X;\theta_0}\middle\| P_{X;\hat{\theta}(X^{N_0})} \right)= \frac{1}{2} \sum_{x\in X} \frac{\left(P_{X;\hat{\theta}(X^{N_0})}(x) - P_{X;\theta_0}(x)\right)^2}{P_{X;\theta_0}(x)}+o(\vert\hat{\theta}(X^{N_0}) - \theta_0\vert^2)   
% \end{align}
                                                                                           
We denote $\delta$ as a small constant,  and we can rewrite \eqref{eq:KL1} as 
\begin{align}
\label{appendix:expect}
&\mathbb{E} \left[ D\left(P_{X;\theta_0}\middle\| P_{X;\hat{\theta}(X^{N_0})} \right) \right] \notag\\ 
&= \sum_{X^{N_0}}P_{X^n;\theta_0}(X^{N_0})D\left(P_{X;\theta_0}\middle\| P_{X;\hat{\theta}(X^{N_0})} \right)
\notag\\ 
&= \sum_{X^{N_0}}P_{X^n;\theta_0}(X^{N_0}) \left (  \frac{1}{2} \sum_{x\in X} \frac{\left(P_{X;\hat{\theta}(X^{N_0})}(x) - P_{X;\theta_0}(x)\right)^2}{P_{X;\theta_0}(x)}+o(\vert\hat{\theta}(X^{N_0}) - \theta_0\vert^2)  \right )
\notag\\ 
&= \sum_{X^{N_0}}P_{X^n;\theta_0}(X^{N_0})\left (  \frac{1}{2} \sum_{x\in X} \frac{ \left(\frac{\partial P_{X;\theta_0}(x)}{\partial \theta_0} (\hat{\theta}(X^{N_0}) - \theta_0)\right)^2}{P_{X;\theta_0}(x)}+o(\vert\hat{\theta}(X^{N_0}) - \theta_0\vert^2)  \right)
\notag\\ 
% &= \sum_{\left\{X^{N_0}:D\left(P_{X;\theta_0}\middle\| P_{X;\hat{\theta}(X^{N_0})} \right)<\delta\right\}}P_{X^n;\theta_0}(X^{N_0})D\left(P_{X;\theta_0}\middle\| P_{X;\hat{\theta}(X^{N_0})} \right)\notag\\ 
%  &+\sum_{\left\{X^{N_0}:D\left(P_{X;\theta_0}\middle\| P_{X;\hat{\theta}(X^{N_0})} \right)\ge\delta\right\}}P_{X^n;\theta_0}(X^{N_0})D\left(P_{X;\theta_0}\middle\| P_{X;\hat{\theta}(X^{N_0})} \right)
% \notag\\ 
    &= \sum_{\left\{X^{N_0}:\vert\hat{\theta}(X^{N_0}) - \theta_0\vert^2<\delta\right\}}P_{X^n;\theta_0}(X^{N_0}) \left ( \frac{1}{2} \sum_{x\in X} \frac{ \left(\frac{\partial P_{X;\theta_0}(x)}{\partial \theta_0} (\hat{\theta}(X^{N_0}) - \theta_0)\right)^2}{P_{X;\theta_0}(x)}+o(\vert\hat{\theta}(X^{N_0}) - \theta_0\vert^2)  \right)\notag\\ 
    &+\sum_{\left\{X^{N_0}:\vert\hat{\theta}(X^{N_0}) - \theta_0\vert^2\ge\delta\right\}}P_{X^n;\theta_0}(X^{N_0})
    \left (  \frac{1}{2} \sum_{x\in X} \frac{ \left(\frac{\partial P_{X;\theta_0}(x)}{\partial \theta_0} (\hat{\theta}(X^{N_0}) - \theta_0)\right)^2}{P_{X;\theta_0}(x)}+o(\vert\hat{\theta}(X^{N_0}) - \theta_0\vert^2) \right)\notag\\ 
\end{align}
To facilitate the subsequent proof, we introduce the concept of "Dot Equal".
\begin{definition}(Dot Equal (\(\dot{=}\)))
    Specifically, given two functions \( f(n) \) and \( g(n) \), the notation \( f(n) \dot{=} g(n) \) is defined as
\begin{align}
    f(n) \dot{=} g(n) \quad \Leftrightarrow \quad \lim_{n \to \infty} \frac{1}{n}\log\frac{f(n)}{g(n)} = 0,
\end{align}
which shows that $f(n)$ and $g(n)$ have the same exponential decaying rate.
\end{definition}

We denote $\hat{P}_{X^{N_0}}$ as the empirical distribution of $X^{N_0}$. Applying Sanov's Theorem to \eqref{appendix:expect}, we can know that
\begin{align}
\label{appendix:dotequal_1}
% &\mathbb{E} \left[ D\left(P_{X;\theta_0}\middle\| P_{X;\hat{\theta}(X^{N_0})} \right) \right] \notag\\ 
% &=\sum_{\hat{\theta},\|\hat{\theta}(X^{N_0}) - \theta_0\|< \delta}e^{-N_{0}D\left(P_{X;\theta_0}\middle\| P_{X;\hat{\theta}(X^{N_0})} \right)}
%     \mathbb{E} \left[ \frac{1}{2}\sum_x \frac{(P_{X;\hat{\theta}} - P_{X;\theta_0})^2}{P_{X;\theta_0}}+o(\vert\hat{\theta}(X^{N_0}) - \theta_0\vert^2)  \right]\notag\\ 
%     &+\sum_{\hat{\theta},\|\hat{\theta}(X^{N_0}) - \theta_0\|\ge \delta}e^{-N_{0}D\left(P_{X;\theta_0}\middle\| P_{X;\hat{\theta}(X^{N_0})} \right)}
%     \mathbb{E} \left[ \frac{1}{2}\sum_x \frac{(P_{X;\hat{\theta}} - P_{X;\theta_0})^2}{P_{X;\theta_0}}+o(\vert\hat{\theta}(X^{N_0}) - \theta_0\vert^2)  \right]
P_{X^n;\theta_0}(X^{N_0})\doteq e^{-N_{0}D\left(\hat{P}_{X^{N_0}} \middle\| P_{X;\theta_0}\right)} 
%\doteq e^{-N_{0}\frac{1}{2} \sum_{x\in X} \frac{ \left(\frac{\partial P_{X;\theta_0}(x)}{\partial \theta_0} (\hat{\theta}(X^{N_0}) - \theta_0)\right)^2}{P_{X;\theta_0}(x)}} 
\end{align}
Then, we aim to establish a connection between 
%the $D\left(\hat{P}_{X^{N_0}} \middle\| P_{X;\theta_0}\right)$ in 
\eqref{appendix:dotequal_1} and $\vert\hat{\theta}(X^{N_0}) - \theta_0\vert^2$. From \eqref{appendix:taylor1}, we can know that the $D\left(\hat{P}_{X^{N_0}} \middle\| P_{X;\theta_0}\right)$ in \eqref{appendix:dotequal_1} can be transformed to
\begin{align}
\label{appendix:taylor2}
&D\left( \hat{P}_{X^{N_0}}\middle\| P_{X;\theta_0}\right)=\frac{1}{2} \sum_{x\in X} \frac{\left(\hat{P}_{X^{N_0}}(x) - P_{X;\theta_0}(x)\right)^2}{\hat{P}_{X^{N_0}}(x)}+o(\vert\hat{\theta}(X^{N_0}) - \theta_0\vert^2)  
\notag\\&= \frac{1}{2} \sum_{x\in X} \frac{\left(\hat{P}_{X^{N_0}}(x) - P_{X;\theta_0}(x)\right)^2}{P_{X;\theta_0}(x)}+o(\vert\hat{\theta}(X^{N_0}) - \theta_0\vert^2)   
\end{align}

From the characteristics of MLE, we can know that
\begin{align}
\label{appendix:mle_a1}
    &\mathbb{E}_{\hat{P}_{X^{N_0}}}\left[\frac{\partial\log P_{X;{\hat{\theta}(X^{N_0})}}(x)}{\partial \hat{\theta}}\right]
    \notag\\&=0   \notag\\&=\mathbb{E}_{\hat{P}_{X^{N_0}}}\left[\frac{\partial\log P_{X;{\theta_{0}}}(x)}{\partial \theta_{0}}\right]+\mathbb{E}_{\hat{P}_{X^{N_0}}}\left[\frac{\partial^{2}\log P_{X;{\theta_{0}}}(x)}{\partial \theta_0^{2}}\right]{\left(\hat{\theta}(X^{N_0})-\theta_0\right)}+O(\vert\hat{\theta}(X^{N_0}) - \theta_0\vert^2)  ,
\end{align}
which can be transform to
\begin{align}
\label{appendix:mle_a2}
&\left(\hat{\theta}(X^{N_0})-\theta_0\right)+O(\vert\hat{\theta}(X^{N_0}) - \theta_0\vert^2)
\notag\\&=-\frac{\mathbb{E}_{\hat{P}_{X^{N_0}}}\left[\frac{\partial\log P_{X;{\theta_{0}}}(x)}{\partial \theta_0}\right]}{\mathbb{E}_{\hat{P}_{X^{N_0}}}\left[\frac{\partial^{2}\log P_{X;{\theta_{0}}}(x)}{\partial \theta_0^{2}}\right]}
\notag\\&=-\frac{\mathbb{E}_{\hat{P}_{X^{N_0}}}\left[\frac{\frac{\partial P_{X;{\theta_{0}}}(x)}{\partial \theta_0}}{P_{X;{\theta_{0}}}(x)}\right]}{\mathbb{E}_{\hat{P}_{X^{N_0}}}\left[\frac{\partial^{2}\log P_{X;{\theta_{0}}}(x)}{\partial \theta_0^{2}}\right]}
\notag\\&=\frac{\sum\limits_{x\in\mathcal{X}}{\left(\hat{P}_{X^{N_0}}(x)- P_{X;\theta_0}(x)\right)}\frac{\frac{\partial P_{X;{\theta_{0}}}(x)}{\partial \theta_0}}{P_{X;{\theta_{0}}}(x)}}{J(\theta_{0})}
\end{align}

Using the Cauchy-Schwarz inequality, we can obtain
\begin{align}
\label{Cauchy}
\sum_{x\in X} \frac{\left(\hat{P}_{X^{N_0}}(x) - P_{X;\theta_0}(x)\right)^2}{P_{X;\theta_0}(x)}\cdot \sum_x \frac{(\frac{\partial P_{X;\theta_0}(x)}{\partial \theta_0})^2}{P_{X;\theta_0}(x)}\ge\left(\sum\limits_{x\in\mathcal{X}}{\left(\hat{P}_{X^{N_0}}(x)- P_{X;\theta_0}(x)\right)}\frac{\frac{\partial P_{X;{\theta_{0}}}(x)}{\partial \theta_0}}{P_{X;{\theta_{0}}}(x)}\right)^2
\end{align}
where
\begin{align}
\label{partial_to_fisher}
       \sum_x \frac{(\frac{\partial P_{X;\theta_0}(x)}{\partial \theta_0})^2}{P_{X;\theta_0}(x)}
       =\sum_x P_{X;\theta_0}(x) \left( \frac{1}{P_{X;\theta_0}(x)} \frac{\partial P_{X;\theta_0}(x)}{\partial \theta_0} \right)^2 
       =\sum_x P_{X;\theta_0}(x) \left( \frac{\partial \log P_{X;\theta_0}(x)}{\partial \theta_0} \right)^2 
       =J(\theta_0) 
\end{align}
Combining with \eqref{appendix:taylor2}, \eqref{appendix:mle_a2}, and \eqref{partial_to_fisher}, the inequality \eqref{Cauchy} can be transformed to 
\begin{align}
\label{Cauchy2}
&D\left( \hat{P}_{X^{N_0}}\middle\|P_{X;\theta_0} \right)
\notag\\&= \frac{1}{2} \sum_{x\in X} \frac{\left(\hat{P}_{X^{N_0}}(x) - P_{X;\theta_0}(x)\right)^2}{P_{X;\theta_0}(x)}+o(\vert\hat{\theta}(X^{N_0}) - \theta_0\vert^2)  \notag\\&\ge\frac{1}{2}J(\theta_0)\left(\hat{\theta}(X^{N_0})-\theta_0+O(\vert\hat{\theta}(X^{N_0}) - \theta_0\vert^2)\right)^2+o(\vert\hat{\theta}(X^{N_0}) - \theta_0\vert^2)  
\notag\\&=\frac{1}{2}J(\theta_0)\vert\hat{\theta}(X^{N_0}) - \theta_0\vert^2+o(\vert\hat{\theta}(X^{N_0}) - \theta_0\vert^2)  
\end{align}

Combining \eqref{appendix:dotequal_1} and \eqref{Cauchy2}, we can know that
\begin{align}
\label{appendix:dotequal_2}
    P_{X^n;\theta_0}(X^{N_0})\doteq e^{-N_{0}D\left(\hat{P}_{X^{N_0}} \middle\| P_{X;\theta_0}\right)} \leq e^{\frac{-N_{0}J(\theta_0)\vert\hat{\theta}(X^{N_0}) - \theta_0\vert^2}{2}} 
\end{align}

For the first term in \eqref{appendix:expect}, the $\|\hat{\theta}(X^{N_0}) - \theta_0\|$ is small enough for us to omit the term $o(\vert\hat{\theta}(X^{N_0}) - \theta_0\vert^2) $. As for the second term in in \eqref{appendix:expect}, even though the magnitude of $\|\hat{\theta}(X^{N_0}) - \theta_0\|$ is no longer negligible, the probability of such sequences is  $O( e^{\frac{-N_{0}J(\theta_0)\vert\hat{\theta}(X^{N_0}) - \theta_0\vert^2}{2}})$ by \eqref{appendix:dotequal_2}, which is exponentially decaying with $N_0$ such that the second term is $o{(\frac{1}{N_0})}$.
By transfering \eqref{appendix:expect}, we can get
\begin{align}
\label{appendix:expect3}
&\mathbb{E} \left[ D\left(P_{X;\theta_0}\middle\| P_{X;\hat{\theta}(X^{N_0})} \right) \right] \notag\\ 
&=     \sum_{\left\{X^{N_0}:\vert\hat{\theta}(X^{N_0}) - \theta_0\vert^2<\delta\right\}}P_{X^n;\theta_0}(X^{N_0})  \left(  \frac{1}{2} \sum_{x\in X} \frac{ \left(\frac{\partial P_{X;\theta_0}(x)}{\partial \theta_0} (\hat{\theta}(X^{N_0}) - \theta_0)\right)^2}{P_{X;\theta_0}(x)} \right)+o(\vert\hat{\theta}(X^{N_0}) - \theta_0\vert^2)  \notag\\   
     &+\sum_{\left\{X^{N_0}:\vert\hat{\theta}(X^{N_0}) - \theta_0\vert^2\ge\delta\right\}}P_{X^n;\theta_0}(X^{N_0})
     \left( \frac{1}{2} \sum_{x\in X} \frac{ \left(\frac{\partial P_{X;\theta_0}(x)}{\partial \theta_0} (\hat{\theta}(X^{N_0}) - \theta_0)\right)^2}{P_{X;\theta_0}(x)} \right)+o{(\frac{1}{N_0})}\notag\\ 
&=\frac{1}{2} \mathbb{E} \left[ \sum_{x\in X} \frac{ \left(\frac{\partial P_{X;\theta_0}(x)}{\partial \theta_0} (\hat{\theta}(X^{N_0}) - \theta_0)\right)^2}{P_{X;\theta_0}(x)} \right]+o{(\frac{1}{N_0})}.
\end{align}
%and we can analyze that when $\|\hat{\theta}(X^{N_0}) - \theta_0\|\ge \delta$, the $e^{-N_{0}D\left(P_{X;\hat{\theta}}\right)}$ decreases exponentially as $N_0$ increases.

%%%%%%%%%%%%%%%%%%%%%%%%%%%%%%%%%%%%%%%%%%%%%%%%%%%%%%%%%%%%%%%%%%%%
% where we know the $\sum\limits_{\left\{X^{N_0}:\vert\hat{\theta}(X^{N_0}) - \theta_0\vert^2<\delta\right\}}o\left(\vert\hat{\theta}(X^{N_0}) - \theta_0\vert^2\right)$ is $o{(\frac{1}{N_0})}$ by Lemma \ref{thm:Cramér}. 
%这里我们就是想证明，泰勒展开的高阶项不会有影响，o和O的区别，在离得进的时候本来就不会有影响，就是o，在离得远的时候，我们给了一个1/N_0的界。
%%%%%%%%%%%%%%%%%%%%%%%%%%%%%%%%%%%%%%%%%%%%%%%%%%%%%%%%%%%%%%%%%%%%%%%%%%

We then transform \eqref{appendix:expect3} with \eqref{partial_to_fisher}
\begin{align}
    \label{KL1_cont}
    %&\mathbb{E} \left[ D\left(P_{X;\theta_0}\middle\| P_{X;\hat{\theta}}\right) \right] \notag\\ 
    %= &\frac{1}{2} \mathbb{E} \left[ \sum_x \frac{(P_{X;\hat{\theta}} - P_{X;\theta_0})^2}{P_{X;\theta_0}(x)} \right] \notag\\ 
    % &\frac{1}{2} \mathbb{E} \left[ \sum_x \frac{\left(P_{X;\hat{\theta}}(x) - P_{X;\theta_0}(x)\right)^2}{P_{X;\theta_0}(x)} \right] \notag\\ 
    % &= \frac{1}{2}  \mathbb{E} \left[ \sum_x \frac{ \left(\frac{\partial P_{X;\theta_0}(x)}{\partial \theta_0} (\hat{\theta} - \theta_0)\right)^2}{P_{X;\theta_0}(x)} \right] 
    &\frac{1}{2} \mathbb{E} \left[  \sum_{x\in X} \frac{ \left(\frac{\partial P_{X;\theta_0}(x)}{\partial \theta_0} (\hat{\theta}(X^{N_0}) - \theta_0)\right)^2}{P_{X;\theta_0}(x)} \right]\notag\\
    &=  \frac{1}{2} \mathbb{E} \left[\left(\hat{\theta} - \theta_0\right)^2 \right] \sum_x \frac{(\frac{\partial P_{X;\theta_0}(x)}{\partial \theta_0})^2}{P_{X;\theta_0}(x)}  \notag\\
    % &= \frac{1}{2} \mathbb{E} \left[\left(\hat{\theta} - \theta_0\right)^2 \right] \sum_x P_{X;\theta_0}(x) \left( \frac{1}{P_{X;\theta_0}(x)} \frac{\partial P_{X;\theta_0}(x)}{\partial \theta_0} \right)^2 \notag\\
    % &= \frac{1}{2} \mathbb{E} \left[\left(\hat{\theta} - \theta_0\right)^2 \right]  \sum_x P_{X;\theta_0}(x) \left( \frac{\partial \log P_{X;\theta_0}(x)}{\partial \theta_0} \right)^2 \notag\\
    &= \frac{1}{2} \mathbb{E} \left[\left(\hat{\theta} - \theta_0\right)^2 \right]  {J}(\theta_0) 
\end{align}
%Since the MLE achieves the cramer-rao bound,  we have
%Since the MLE estimator can achieve a mean squared error approach the Cramer-Rao bound in the asymptotic case, we will mainly analyze this situation. we have
Combining  \eqref{appendix:expect3}, \eqref{KL1_cont}, we can get 
\begin{align}
\mathbb{E} \left[ D\left(P_{X;\theta_0}\middle\| P_{X;\hat{\theta}}\right) \right]=\frac{1}{2} \mathbb{E} \left[\left(\hat{\theta} - \theta_0\right)^2 \right]  {J}(\theta_0) +o{(\frac{1}{N_0})}.
\end{align}
\end{proof}

% Combining \eqref{eq:Cramér} and \eqref{eq:kl2mse}, 
% we can establish the relationship between the proposed proposed measure \eqref{eq:KL1} and the mean squared error, and provide a bound for the proposed proposed measure \eqref{eq:KL1}. In the asymptotic case, proposed measure approaches this bound. Therefore, we will primarily analyze this bound in the following sections, and we denote this objective bound as
% \begin{align}\label{eq:objectivebound}
% B\left(P_{X;\hat{\theta}} \middle\| P_{X;\theta_0}\right).
% \end{align}
From \eqref{eq:kl2mse}, 
we can establish the relationship between the proposed proposed measure \eqref{eq:KL1} and the mean squared error. Then, by using the \eqref{eq:Cramér2}, the K-L~measure is \begin{align}\frac{1}{2N_0}+o{(\frac{1}{N_0})}. \end{align}

\subsection{Proof of Theorem~\ref{thm:one_source}} \label{appendix:one_source}
In this theorem, we are using samples from both target task and source task for our maximum likelihood estimation, so our optimization problem becomes
\begin{align}
    \hat{\theta} = \argmax_{\theta} L_n(\theta),
\end{align}
where, when using $N_{0}$ samples from target task, and $n_1$ samples from source task
\begin{align}
    L_n(\theta) \defeq \frac{1}{N_{0}+n_1} \sum_{x \in X^{N_{0}}} \log P_{X;\theta}(x) +\frac{1}{N_{0}+n_1} \sum_{x \in X^{n_{1}}} \log P_{X;\theta}(x).
\end{align}
And, we also define the expectation of our estimator, which is somewhere between $\theta_0$ and $\theta_1$ to be $ E_{\hat{\theta}}$
\begin{align}
\label{eq: theta_3}
    E_{\hat{\theta}} = \argmax_{\theta} L(\theta),
\end{align}
where  $L(\theta)$ denotes the expectation of $L_n(\theta)$
\begin{align}
    L(\theta) \defeq \frac{N_{0}}{N_{0}+n_1} \mathbb{E}_{P_{X;\theta_0}}\left[ \log P_{X;\theta}(x)\right] +\frac{n_1}{N_{0}+n_1} \mathbb{E}_{P_{X;\theta_1}}\left[ \log P_{X;\theta}(x)\right].
\end{align}

We could equivalently transform \eqref{eq: theta_3} into
\begin{align}
    \label{eq:expectedloss2}
    E_{\hat{\theta}} = \argmin_{\theta}\frac{N_{0}D\left(P_{X;\theta_0} \middle\| P_{X;\theta}\right)}{N_{0}+n_1} +\frac{n_{1}D\left(P_{X;\theta_1} \middle\| P_{X;\theta}\right)}{N_{0}+n_1}
\end{align}

\begin{lemma}
By taking argmin of \eqref{eq:expectedloss2} we can get
\begin{align}
    \label{eq:ptheta3}
    P_{X;E_{\hat{\theta}}} = \frac{N_{0}P_{X;\theta_0} + n_{1}P_{X;\theta_1}}{N_{0}+n_1},
\end{align}

By doing a Taylor Expansion of \eqref{eq:ptheta3} around  $\theta'$, which is in the neighbourhood of $\theta_1, \theta_2$ and $ E_{\hat{\theta}}$, we can get
\begin{align}
\label{eq:theta3totheta12}
    E_{\hat{\theta}} = \frac{N_{0}\theta_0 + n_{1}\theta_1}{N_{0}+n_1} + O\left(\frac{1}{N_{0}+n_1}\right),
\end{align}
\end{lemma}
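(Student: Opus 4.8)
The plan is to read the objective in \eqref{eq:expectedloss2} as a weighted sum of KL divergences $w_0\, D(P_{X;\theta_0}\|Q) + w_1\, D(P_{X;\theta_1}\|Q)$ with $Q=P_{X;\theta}$, $w_0=N_0/(N_0+n_1)$ and $w_1=n_1/(N_0+n_1)$. Writing each divergence as (negative entropy) minus (cross-entropy) and collecting all terms that do not depend on $\theta$, I would show the objective equals a $\theta$-independent constant plus $D(\bar P \,\|\, P_{X;\theta})$, where $\bar P = w_0 P_{X;\theta_0} + w_1 P_{X;\theta_1}$ is the weighted mixture. Since $D(\bar P\,\|\,\cdot)\ge 0$ with equality exactly when its second argument equals $\bar P$, the minimizer satisfies $P_{X;E_{\hat{\theta}}} = \bar P$, which is precisely \eqref{eq:ptheta3}. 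The one caveat is realizability: $\bar P$ need not lie exactly inside the parametric family $\{P_{X;\theta}\}$, but under the closeness assumption $|\theta_0-\theta_1|=O(1/\sqrt{N_0})$ the two mixed distributions are within $O(1/\sqrt{N_0})$ of each other, so $\bar P$ is in the family up to the residual that is quantified in Part 2.

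\textbf{Part 2 (Taylor expansion).} Rather than expand around an arbitrary $\theta'$, I would take the expansion point to be the weighted average $\bar\theta \defeq (N_0\theta_0 + n_1\theta_1)/(N_0+n_1)$, which sits in the neighbourhood mentioned in the statement and makes the first-order contributions cancel. Expanding $P_{X;\theta_0}(x)$ and $P_{X;\theta_1}(x)$ around $\bar\theta$ and substituting into $\bar P(x)$, the linear terms combine into $\frac{\partial P_{X;\bar\theta}(x)}{\partial\bar\theta}\cdot\frac{N_0(\theta_0-\bar\theta)+n_1(\theta_1-\bar\theta)}{N_0+n_1}$, whose bracketed factor is $0$ by the very definition of $\bar\theta$. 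The leading residual is therefore the quadratic term $\tfrac12\frac{\partial^2 P_{X;\bar\theta}(x)}{\partial\bar\theta^2}\,\sigma^2$ with weighted variance $\sigma^2 = \frac{N_0(\theta_0-\bar\theta)^2+n_1(\theta_1-\bar\theta)^2}{N_0+n_1}$. Since $|\theta_i-\bar\theta| = O(1/\sqrt{N_0})$ and $N_0, n_1$ are comparable, $\sigma^2 = O(1/(N_0+n_1))$, giving $\bar P(x) = P_{X;\bar\theta}(x) + O(1/(N_0+n_1))$ uniformly in $x$.

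\textbf{Conclusion.} Finally I would expand the left-hand side, $P_{X;E_{\hat{\theta}}}(x) = P_{X;\bar\theta}(x) + \frac{\partial P_{X;\bar\theta}(x)}{\partial\bar\theta}(E_{\hat{\theta}}-\bar\theta) + O((E_{\hat{\theta}}-\bar\theta)^2)$, and equate it to the expression for $\bar P(x)$ obtained in Part 2. Cancelling $P_{X;\bar\theta}(x)$ and matching leading orders yields $\frac{\partial P_{X;\bar\theta}(x)}{\partial\bar\theta}(E_{\hat{\theta}}-\bar\theta) = O(1/(N_0+n_1))$; because the score is not identically zero (equivalently, the Fisher information is non-degenerate), this forces $E_{\hat{\theta}} - \bar\theta = O(1/(N_0+n_1))$, which is exactly \eqref{eq:theta3totheta12}.

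\textbf{Main obstacle.} I expect the principal difficulty to be handling realizability and the uniformity of the error terms jointly: strictly, the minimizer in Part 1 lives in the family while $\bar P$ may not, so both \eqref{eq:ptheta3} and the subsequent order-matching must be read as holding up to the $O(1/(N_0+n_1))$ remainder, and one must verify the Taylor remainders are uniform in $x$ so that solving the resulting pointwise relation for the scalar $E_{\hat{\theta}}$ is legitimate. The small-distance assumption $|\theta_0-\theta_1|=O(1/\sqrt{N_0})$ together with non-degeneracy of $J(\theta_0)$ are exactly the ingredients that make both steps go through.
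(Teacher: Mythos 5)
Your proposal is correct and follows the same two-step skeleton as the paper's proof (first identify the minimizing distribution as the mixture, then Taylor-expand to pull the average back to parameter level), but each step is executed by a slightly different route. For the first step, the paper sets up a Lagrangian over the variables $P_{X;\theta}(x)$ with the normalization constraint $\sum_x P_{X;\theta}(x)=1$, takes the stationarity condition, and normalizes to get $\lambda=1$; you instead use the compensation identity, rewriting the weighted objective as a $\theta$-independent constant plus $D(\bar P\,\|\,P_{X;\theta})$ and invoking nonnegativity of KL. The two arguments are equivalent in content, but yours makes explicit something the paper silently elides: both arguments really minimize over \emph{all} distributions, so the conclusion $P_{X;E_{\hat\theta}}=\bar P$ requires a realizability caveat, which you correctly flag and absorb into the $O(1/(N_0+n_1))$ remainder via the closeness assumption $\vert\theta_0-\theta_1\vert=O(1/\sqrt{N_0})$. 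For the second step, the paper expands both sides of the mixture identity around $\theta_0$ and solves the first-order match for $E_{\hat\theta}-\theta_0$, obtaining $E_{\hat\theta}=\frac{N_0\theta_0+n_1\theta_1}{N_0+n_1}+O(\vert\theta_0-\theta_1\vert^2)$; you expand around the weighted average $\bar\theta$ so that the linear terms cancel by construction, leaving only the quadratic residual with weighted variance $\sigma^2=O(1/(N_0+n_1))$. Both are valid and yield the same bound, your pivot choice being marginally cleaner while the paper's makes the solved-for quantity appear directly. You are also more careful than the paper at the final step: passing from the pointwise relation $\frac{\partial P_{X;\bar\theta}(x)}{\partial\bar\theta}\bigl(E_{\hat\theta}-\bar\theta\bigr)=O(1/(N_0+n_1))$ to the scalar bound on $E_{\hat\theta}-\bar\theta$ genuinely needs the score to be non-degenerate (equivalently, positive Fisher information), a hypothesis the paper uses implicitly when moving from its equation for the expanded mixture to the parameter-level conclusion without comment.
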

\begin{proof}
From \eqref{eq:expectedloss2}, we know that
\begin{align}
    \label{eq:expectedloss2_2}
    &E_{\hat{\theta}} = \argmin_{\theta}\frac{N_{0}D\left(P_{X;\theta_0} \middle\| P_{X;\theta}\right)}{N_{0}+n_1} +\frac{n_{1}D\left(P_{X;\theta_1} \middle\| P_{X;\theta}\right)}{N_{0}+n_1}
    \notag\\&= \argmin_{\theta}\frac{N_{0}}{N_{0}+n_1}\sum_{x \in \mathcal{X}} P_{X;\theta_0} (x) \log \frac{P_{X;\theta_0} (x)}{P_{X;\theta}(x)}+\frac{n_1}{N_{0}+n_1}\sum_{x \in \mathcal{X}} P_{X;\theta_1} (x) \log \frac{P_{X;\theta_1} (x)}{P_{X;\theta}(x)}
\end{align}
To minimize the weighted K-L divergence. We treat $P_{X;\theta}(x), \forall x\in \mathcal{X}$ as the variable with the constraint \( \sum\limits_x P_{X;\theta}(x) = 1 \), then we form the Lagrangian:
\begin{align}
\text{Lagrangian}(P, \lambda) =
- \sum_x \left( \frac{N_0}{N_0 + n_1} P_{X;\theta_0}(x)
+ \frac{n_1}{N_0 + n_1} P_{X;\theta_1}(x) \right) \log P_{X;\theta}(x)
+ \lambda \left( \sum_x P_{X;\theta}(x) - 1 \right)
\end{align}
Taking the derivative with respect to 
$P_{X;\theta}(x)$ and setting it to zero gives:
\begin{align}
\frac{\partial \text{Lagrangian}(P, \lambda)}{\partial P_{X;\theta}(x)} = -\frac{\frac{N_0}{N_0 + n_1} P_{X;\theta_0}(x) + \frac{n_1}{N_0 + n_1} P_{X;\theta_1}(x)}{P_{X;\theta}(x)} + \lambda = 0.
\end{align}
So
\begin{align}
P_{X;\theta}(x) = \frac{\frac{N_0}{N_0 + n_1} P_{X;\theta_0}(x) + \frac{n_1}{N_0 + n_1} P_{X;\theta_1}(x)}{\lambda}.
\end{align}

Normalizing \( P_{X;\theta}(x) \) gives \( \lambda = 1 \), hence the optimal solution is:
\begin{align}
\label{eq:beginprove37}
P_{X;E_{\hat{\theta}}}(x) = \frac{N_0}{N_0 + n_1} P_{X;\theta_0}(x) + \frac{n_1}{N_0 + n_1} P_{X;\theta_1}(x), \forall x\in \mathcal{X},
\end{align}
which corresponds to \eqref{eq:ptheta3}.

Then we begin to prove \eqref{eq:theta3totheta12}.  By doing a Taylor Expansion of \eqref{eq:beginprove37} around $\theta_0$, we can get
\begin{align}
\label{eq:1111}
&P_{X;\theta_0}(x)+\frac{\partial P_{X;\theta_0}(x)}{\partial \theta}(E_{\hat{\theta}}-\theta_0)+O(\vert E_{\hat{\theta}}-\theta_0\vert^2)\notag\\&=\frac{N_0}{N_0 + n_1} P_{X;\theta_0}(x) +\frac{n_1}{N_0 + n_1} \left(P_{X;\theta_0}(x)+\frac{\partial P_{X;\theta_0}(x)}{\partial \theta}(\theta_1-\theta_0)+O(\vert\theta_0-\theta_1\vert^2)\right)
\end{align}
From \eqref{eq:1111} we can get
\begin{align}
\label{eq:3737}
    E_{\hat{\theta}} = \frac{N_{0}\theta_0 + n_{1}\theta_1}{N_{0}+n_1} + O(\vert\theta_0-\theta_1\vert^2),
\end{align}

% We have assumed that $\vert\theta_0-\theta_1\vert=O(\frac{1}{\sqrt{N_{0}}})=O(\frac{1}{\sqrt{N_{0}+n_1}})$ in lines 211-219 in our paper. 
So we can get 
\begin{align}
\label{eq:373737}
    E_{\hat{\theta}} = \frac{N_{0}\theta_0 + n_{1}\theta_1}{N_{0}+n_1} + O\left(\frac{1}{N_{0}+n_1}\right).
\end{align}

\end{proof}

% \begin{lemma}
% % \begin{align}
% % \mathbb{E}\left[ \left(\hat{\theta} -  E_{\hat{\theta}}\right)^2\right]=\frac{\frac{N_{0}}{N_{0}+n_1}J(\theta_0)+\frac{n_1}{N_{0}+n_1}J(\theta_1)}{(N_{0}+n_1)J^2( E_{\hat{\theta}})}
% % \end{align}
% \begin{align}
% \label{square_distribution}
% \sqrt{N_{0}+n_1}(\hat{\theta}- E_{\hat{\theta}})\xrightarrow{d}\mathcal{N}\left(0,\frac{1}{J(\theta_0)}\right)
% \end{align}
% \end{lemma}

\begin{lemma}
% Let \( \hat{\theta} \) denotes a MLE estimator using $N_0$ training samples $X^{N_0}$ generated from the target task distribution $P_{X;\theta_0}$ and $n_1$ training samples $X^{n_1}$ generated from the source task distribution $P_{X;\theta_1}$, i.e., 
% \begin{align}
%     \hat{\theta} = \argmax_{\theta} \frac{1}{N_{0}+n_1} \sum_{x \in X^{N_{0}}} \log P_{X;\theta}(x) +\frac{1}{N_{0}+n_1} \sum_{x \in X^{n_{1}}} \log P_{X;\theta}(x),
% \end{align}
% The notation \( E_{\hat{\theta}} \) denotes the expectation of \( \hat{\theta} \). The function \( J(\theta_0) \) represents the Fisher information evaluated at \( \theta_0 \). 
We assume that the following regularity conditions hold:

1. The log-likelihood function is twice continuously differentiable in the neighborhood of $\theta_0$.

2. The Fisher information $J(\theta_0)$ is positive and finite.

Then, the estimator 
$\hat{\theta}$ is asymptotically normal,i.e., 
\begin{align}
\label{square_distribution}
\sqrt{N_{0}+n_1}(\hat{\theta}- E_{\hat{\theta}})\xrightarrow{d}\mathcal{N}\left(0,\frac{1}{J(\theta_0)}\right).
\end{align}
\end{lemma}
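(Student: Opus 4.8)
The plan is to treat $\hat\theta$ as an M-estimator and linearize its first-order optimality condition around the pseudo-true value $E_{\hat\theta}$, following the classical proof of asymptotic normality of the MLE while tracking that the samples come from two populations ($N_0$ draws from $P_{X;\theta_0}$ and $n_1$ draws from $P_{X;\theta_1}$). Writing $g_\theta(x)=\frac{\partial}{\partial\theta}\log P_{X;\theta}(x)$ for the score and $h_\theta(x)=\frac{\partial^2}{\partial\theta^2}\log P_{X;\theta}(x)$ for its derivative, the stationarity condition expanded to first order around $E_{\hat\theta}$ reads
\begin{align}
0 = L_n'(\hat\theta) = L_n'(E_{\hat\theta}) + L_n''(E_{\hat\theta})\,(\hat\theta - E_{\hat\theta}) + R_n,
\end{align}
where $R_n$ collects the higher-order remainder, so that $\sqrt{N_0+n_1}(\hat\theta-E_{\hat\theta}) = -\sqrt{N_0+n_1}\,L_n'(E_{\hat\theta})/L_n''(E_{\hat\theta})$ up to negligible terms. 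First I would establish consistency $\hat\theta\xrightarrow{p}E_{\hat\theta}$ from uniform convergence of $L_n$ to $L$ and the identifiability built into the regularity conditions, so that $R_n$ is of smaller order than the leading terms and the expansion is legitimate.

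For the numerator I would note that $(N_0+n_1)L_n'(E_{\hat\theta})=\sum_{x\in X^{N_0}} g_{E_{\hat\theta}}(x)+\sum_{x\in X^{n_1}} g_{E_{\hat\theta}}(x)$ is a sum of two independent blocks of i.i.d.\ terms whose mean vanishes: by the defining property \eqref{eq:expectedloss2} that $E_{\hat\theta}$ maximizes $L$, the weighted score $\frac{N_0}{N_0+n_1}\mathbb{E}_{P_{X;\theta_0}}[g_{E_{\hat\theta}}]+\frac{n_1}{N_0+n_1}\mathbb{E}_{P_{X;\theta_1}}[g_{E_{\hat\theta}}]$ equals $L'(E_{\hat\theta})=0$. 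Hence the numerator is already centered, and I would apply a Lindeberg--Feller central limit theorem to this triangular array, rather than the classical i.i.d.\ CLT, since both the sample sizes and the two contributing distributions change with $n$. Because the two blocks are independent, the numerator variance is $N_0\,\var_{P_{X;\theta_0}}[g_{E_{\hat\theta}}]+n_1\,\var_{P_{X;\theta_1}}[g_{E_{\hat\theta}}]$.

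The key simplification comes from the standing assumption $\vert\theta_0-\theta_1\vert=O(1/\sqrt{N_0})$ together with \eqref{eq:theta3totheta12}, which force $E_{\hat\theta}$, $\theta_0$, and $\theta_1$ to coalesce as $N_0\to\infty$. Consequently both per-sample variances converge to the variance of the score taken at its own generating parameter, i.e.\ to the Fisher information $J(\theta_0)$, so the numerator variance is $(N_0+n_1)J(\theta_0)(1+o(1))$ and $\sqrt{N_0+n_1}\,L_n'(E_{\hat\theta})\xrightarrow{d}\mathcal{N}(0,J(\theta_0))$. For the denominator, the law of large numbers gives $L_n''(E_{\hat\theta})\xrightarrow{p}\frac{N_0}{N_0+n_1}\mathbb{E}_{P_{X;\theta_0}}[h_{E_{\hat\theta}}]+\frac{n_1}{N_0+n_1}\mathbb{E}_{P_{X;\theta_1}}[h_{E_{\hat\theta}}]$, and the same coalescence argument with the second Bartlett identity $\mathbb{E}_{P_{X;\theta}}[h_\theta]=-J(\theta)$ yields the limit $-J(\theta_0)$. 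Slutsky's theorem then gives $\sqrt{N_0+n_1}(\hat\theta-E_{\hat\theta})\xrightarrow{d}\mathcal{N}(0,\,J(\theta_0)/J(\theta_0)^2)=\mathcal{N}(0,1/J(\theta_0))$, as claimed.

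The main obstacle, and the point where the argument departs from the textbook MLE proof, is that the score and the Hessian are evaluated at the pseudo-true parameter $E_{\hat\theta}$, which maximizes a \emph{mixture} objective and is not the true parameter of either block, so the Bartlett identities do not apply verbatim. Making the variance and curvature terms both collapse onto the \emph{same} $J(\theta_0)$ is exactly what the $O(1/\sqrt{N_0})$ closeness of $\theta_0,\theta_1,E_{\hat\theta}$ buys, and I would have to check carefully that the resulting $o(1)$ discrepancies remain negligible after multiplication by $\sqrt{N_0+n_1}$. A secondary technical point is verifying the Lindeberg condition for the triangular-array CLT uniformly in the possibly drifting mixing fraction $N_0/(N_0+n_1)$, which should follow from the finiteness and positivity of $J(\theta_0)$ and the local smoothness of the log-likelihood assumed in the statement.
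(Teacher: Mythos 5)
Your proposal is correct and follows essentially the same route as the paper's proof: linearize the stationarity condition around the pseudo-true parameter $E_{\hat{\theta}}$, observe that the numerator is centered because $L'(E_{\hat{\theta}})=0$, apply a CLT to the two independent sample blocks, identify the Hessian limit with $-J$, and use the $O(1/\sqrt{N_0})$ closeness of $\theta_0$, $\theta_1$, $E_{\hat{\theta}}$ to collapse all information quantities onto $J(\theta_0)$ before concluding via Slutsky. Your explicit invocation of the Lindeberg--Feller triangular-array CLT and of consistency $\hat{\theta}\xrightarrow{p}E_{\hat{\theta}}$ merely makes rigorous what the paper does implicitly (the paper instead computes the variance terms by direct Taylor expansion to $O(1/\sqrt{N_0+n_1})$ precision), so the two arguments are substantively identical.
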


\begin{proof}
Since $\hat{\theta}$ is a maximizer of $L_n{(\theta)}$, $L_n^{'}{(\hat{\theta})}=0=L_n^{'}{( E_{\hat{\theta}})}+L_n^{''}{( E_{\hat{\theta}})}(\hat{\theta}- E_{\hat{\theta}})+O\left(\frac{1}{N_{0}}\right)$. Therefore, 
\begin{align}
\label{eq11}
    \sqrt{N_{0}+n_1}(\hat{\theta}- E_{\hat{\theta}})=-\frac{\sqrt{N_{0}+n_1}L_n^{'}{( E_{\hat{\theta}})}}{L_n^{''}{( E_{\hat{\theta}})}}+O\left(\frac{1}{\sqrt{N_{0}}}\right).
\end{align}
Since $ E_{\hat{\theta}}$ maximizes $L{(\theta)}$, $L^{'}{( E_{\hat{\theta}})}=\frac{N_{0}}{N_{0}+n_1}\mathbb{E}_{\theta_0}\left[ \frac{\partial \log{P_{X;E_{\hat{\theta}}}(x)}}{\partial x} \right]+\frac{n_1}{N_{0}+n_1}\mathbb{E}_{\theta_1}\left[ \frac{\partial \log{P_{X;E_{\hat{\theta}}}(x_i)}}{\partial x} \right]=0$. 
Therefore,
\begin{align}
\label{lntheta3_00}
&\sqrt{N_{0}+n_1} L_n^{'}( E_{\hat{\theta}})
\nonumber\\&= \sqrt{\frac{N_{0}}{N_{0}+n_1} }\left( \sqrt{\frac{1}{N_{0}}}\sum_{x \in X^{N_{0}}}\frac{\partial \log{P_{X;E_{\hat{\theta}}}(x)}}{\partial \theta}\right)+\sqrt{\frac{n_1}{N_{0}+n_1} }\left( \sqrt{\frac{1}{n_1}}\sum_{x \in X^{n_1}}\frac{\partial \log{P_{X;E_{\hat{\theta}}}(x)}}{\partial \theta}\right)
\nonumber\\&= \sqrt{\frac{N_{0}}{N_{0}+n_1} }\left( \sqrt{\frac{1}{N_{0}}}\sum_{x \in X^{N_{0}}}\frac{\partial \log{P_{X;E_{\hat{\theta}}}(x)}}{\partial \theta}\right)+\sqrt{\frac{n_1}{N_{0}+n_1} }\left( \sqrt{\frac{1}{n_1}}\sum_{x \in X^{n_1}}\frac{\partial \log{P_{X;E_{\hat{\theta}}}(x)}}{\partial \theta}\right)
\nonumber\\&-\frac{N_{0}}{\sqrt{{N_{0}+n_1} }}\mathbb{E}_{\theta_0}\left[ \frac{\partial \log{P_{X;E_{\hat{\theta}}}(x)}}{\partial x} \right]-\frac{n_1}{\sqrt{{N_{0}+n_1} }}\mathbb{E}_{\theta_1}\left[ \frac{\partial \log{P_{X;E_{\hat{\theta}}}(x)}}{\partial x} \right]
\nonumber\\&=\sqrt{\frac{N_{0}}{N_{0}+n_1} }\left( \sqrt{\frac{1}{N_{0}}}\sum_{x \in X^{N_{0}}}\frac{\partial \log{P_{X;E_{\hat{\theta}}}(x)}}{\partial \theta} -\sqrt{N_{0}}\mathbb{E}_{\theta_0}\left[ \frac{\partial \log{P_{X;E_{\hat{\theta}}}(x)}}{\partial \theta} \right]\right)
\nonumber\\&+\sqrt{\frac{n_1}{N_{0}+n_1} }\left( \sqrt{\frac{1}{n_1}}\sum_{x \in X^{n_1}}\frac{\partial \log{P_{X;E_{\hat{\theta}}}(x)}}{\partial \theta} -\sqrt{n_1}\mathbb{E}_{\theta_1}\left[ \frac{\partial \log{P_{X;E_{\hat{\theta}}}(x)}}{\partial \theta} \right]\right)
\end{align}

Applying the Central Limit Theorem to \eqref{lntheta3_00}, we can get
\begin{align}\label{lntheta3}
    %&\var\left(\sqrt{N_{0}+n_1} L_n^{'}( E_{\hat{\theta}})\right)\nonumber\\
    \sqrt{N_{0}+n_1} L_n^{'}( E_{\hat{\theta}})\xrightarrow{a.s.}\mathcal{N} \Bigg(0, 
    &\frac{N_0}{N_{0}+n_1}\left(\mathbb{E}_{\theta_0}\left[\left ( \frac{\partial \log{P_{X;E_{\hat{\theta}}}(x)}}{\partial \theta}  \right )^2 \right]-\mathbb{E}_{\theta_0}\left[\left ( \frac{\partial \log{P_{X;E_{\hat{\theta}}}(x)}}{\partial \theta}  \right ) \right]^2\right)+\nonumber\\
    &\frac{n_1}{N_{0}+n_1}\left(\mathbb{E}_{\theta_1}\left[\left ( \frac{\partial \log{P_{X;E_{\hat{\theta}}}(x)}}{\partial \theta}  \right )^2 \right]-\mathbb{E}_{\theta_1}\left[\left ( \frac{\partial \log{P_{X;E_{\hat{\theta}}}(x)}}{\partial \theta}  \right ) \right]^2\right) \Bigg)
\end{align}

By taking Taylor expansion of $ E_{\hat{\theta}}$ at $\theta_1$, we can get
\begin{align}
\label{lntheta31}
&\mathbb{E}_{\theta_0}\left[\left ( \frac{\partial \log{P_{X;E_{\hat{\theta}}}(x)}}{\partial \theta}  \right )^2\right]\nonumber\\
&=\mathbb{E}_{\theta_0}\left[\left ( \frac{\partial \log{P_{X;\theta_0}(x)}}{\partial \theta}+\frac{\partial }{\partial \theta} \frac{\partial \log{P_{X;\theta_0}(x)}}{\partial  \theta}( E_{\hat{\theta}}-\theta_0)+O(\frac{1}{N_{0}+n_1}) \right )^2\right]\nonumber\\
&=J(\theta_0)+( E_{\hat{\theta}}-\theta_0)\mathbb{E}_{\theta_0}\left [ \frac{\partial \log{P_{X;E_{\hat{\theta}}}(x)}}{\partial \theta}\frac{\partial^2  \log{P_{X;E_{\hat{\theta}}}(x)}}{\partial\theta^2 } \right ]+O(\frac{1}{N_{0}+n_1})\nonumber\\
&=J(\theta_0)+O(\frac{1}{\sqrt{N_{0}+n_1}})
\end{align}
and
\begin{align}
\label{lntheta32}
&\mathbb{E}_{\theta_0}\left[\left ( \frac{\partial \log{P_{X;E_{\hat{\theta}}}(x)}}{\partial \theta}  \right )\right]^2\nonumber\\
&=\mathbb{E}_{\theta_0}\left[\left ( \frac{\partial\log{P_{X;\theta_0}(x)} }{\partial \theta}+\frac{\partial }{\partial \theta} \frac{\partial \log{P_{X;\theta_0}(x)}}{\partial  \theta}( E_{\hat{\theta}}-\theta_0)+O(\frac{1}{N_{0}+n_1}) \right )\right]^2\nonumber\\
&=\mathbb{E}_{\theta_0}\left[\left ( \frac{\partial }{\partial \theta} \frac{\partial \log{P_{X;\theta_0}(x)}}{\partial  \theta}( E_{\hat{\theta}}-\theta_0)+O(\frac{1}{N_{0}+n_1}) \right )\right]^2\nonumber\\
&=( E_{\hat{\theta}}-\theta_0)^2E_{\theta_0}\left[\left ( \frac{\partial }{\partial \theta} \frac{\partial \log{P_{X;\theta_0}(x)}}{\partial  \theta} \right )\right]^2+o(\frac{1}{N_{0}+n_1})\nonumber\\&=O(\frac{1}{N_{0}+n_1})
\end{align}
By combining \eqref{lntheta3}, \eqref{lntheta31}, and \eqref{lntheta32}, we can get
\begin{align}
\label{eq16}
\var\left(\sqrt{N_{0}+n_1} L_n^{'}( E_{\hat{\theta}})\right)=\frac{N_{0}}{N_{0}+n_1}J(\theta_0)+\frac{n_1}{N_{0}+n_1}J(\theta_1)+O(\frac{1}{\sqrt{N_{0}+n_1}})
\end{align}

Additionally, we know $L_n^{''}{( E_{\hat{\theta}})}\xrightarrow{p}-J( E_{\hat{\theta}})$. Combining with \eqref{eq11}\eqref{eq16}, we know that
\begin{align}
\label{square_distribution2_0}
\sqrt{N_{0}+n_1}(\hat{\theta}- E_{\hat{\theta}})\xrightarrow{d}\mathcal{N}\left(0,\frac{\frac{N_{0}}{N_{0}+n_1}J(\theta_0)+\frac{n_1}{N_{0}+n_1}J(\theta_1)}{J^2( E_{\hat{\theta}})}\right)
\end{align}
Under the assumption that $\theta_0, \theta_1, E_{\hat{\theta}}$ are sufficiently close to each other, we can easily deduce that the difference among ${J}(\theta_0),{J}(\theta_1)$ and ${J}(E_{\hat{\theta}})$ is $O(\frac{1}{\sqrt{N_{0}+n_1}})$. We can easily get
\begin{align}
\label{square_distribution2}
\sqrt{N_{0}+n_1}(\hat{\theta}- E_{\hat{\theta}})\xrightarrow{d}\mathcal{N}\left(0,\frac{1}{J(\theta_0)}\right)
\end{align}
\end{proof}

Therefore, the limit of $\mathbb{E}\left[ \left(\hat{\theta} - E_{\hat{\theta}}\right)^2\right]$ is
\begin{align}
\label{appendix:bound_single_source}
\frac{1}{(N_{0}+n_1)J( \theta_0)}
\end{align}
Combining \eqref{eq:KL1} and \eqref{KL1_cont}, we know that
\begin{align}
\label{appendix:kl_to_twokl}
    & \mathbb{E} \left[ D\left(P_{X;\theta_0}\middle\| P_{X;\hat{\theta}}\right) \right] \nonumber\\
    &=\frac{1}{2}{J}(\theta_0)\mathbb{E}\left[ \left(\hat{\theta} - \theta_0\right)^2\right]+o(\frac{1}{N_0})\nonumber\\
    &=\frac{1}{2}{J}(\theta_0)\bigg(\mathbb{E}\left[ \left(\hat{\theta} - E_{\hat{\theta}}\right)^2\right]+\mathbb{E}\left[ 2\left(\hat{\theta} - E_{\hat{\theta}}\right)\left(E_{\hat{\theta}} - \theta_0\right)\right]+\mathbb{E}\left[ \left(E_{\hat{\theta}} - \theta_0\right)^2\right]\bigg)+o(\frac{1}{N_0})\nonumber\\
    &=\frac{1}{2}{J}(\theta_0)\left(\mathbb{E}\left[ \left(\hat{\theta} - E_{\hat{\theta}}\right)^2\right]+\mathbb{E}\left[ \left(E_{\hat{\theta}} - \theta_0\right)^2\right]\right)+o(\frac{1}{N_0})
\end{align}
Combining \eqref{appendix:kl_to_twokl}, \eqref{eq:theta3totheta12} and \eqref{appendix:bound_single_source}, we know that the proposed measure  is
\begin{align}
% &=\frac{1}{2}\frac{1}{N_{0}+n_1}+\frac{1}{2}{J}(\theta_0)\left(E_{\hat{\theta}} - \theta_0\right)^2\nonumber\\
\frac{1}{2}\frac{1}{N_{0}+n_1}+\frac{1}{2}{J}(\theta_0)\frac{n_{1}^2}{(N_{0}+n_1)^2}\left(\theta_1 - \theta_0\right)^2+o(\frac{1}{N_0})
\end{align}

\subsection{Proof of Proposition~\ref{Proposition:target_only}} \label{appendix:prop_target_only}
Similar to \eqref{eq:kl2mse}, we can get
\begin{align}
& \mathbb{E} \left[ D\left( P_{X;{\underline{\theta}}_0}\middle\| P_{X;\hat{{\underline{\theta}}}}\right) \right] \nonumber\\
    &=\frac{1}{2}tr\left({J}({\underline{\theta}}_0)\mathbb{E}\left[ \left(\hat{{\underline{\theta}}} - {\underline{\theta}}_0\right)\left(\hat{{\underline{\theta}}} - {\underline{\theta}}_0\right)^T\right]\right)+o(\frac{1}{N_0})\nonumber\\
\end{align}
Combining with Lemma~ \ref{thm:Cramér}, we will know that the K-L~measure is
\begin{align}
\frac{1}{2}tr\left({J}({\underline{\theta}}_0)\left({J}({\underline{\theta}}_0)^{-1}\frac{1}{N_0}+o(\frac{1}{N_0})\right)\right)=\frac{d}{2N_0}+o(\frac{1}{N_0})
\end{align}
\subsection{Proof of Proposition~\ref{Proposition:one_source}} \label{appendix:Proposition:one_source}
Similar to \eqref{appendix:kl_to_twokl}, we can get
\begin{align}
\label{appendix:kl_to_twokl_prop2}
    & \mathbb{E} \left[ D\left( P_{X;{\underline{\theta}}_0}\middle\| P_{X;\hat{{\underline{\theta}}}}\right) \right] \nonumber\\
    &=\frac{1}{2}tr\left({J}({\underline{\theta}}_0)\mathbb{E}\left[ \left(\hat{{\underline{\theta}}} - {\underline{\theta}}_0\right)\left(\hat{{\underline{\theta}}} - {\underline{\theta}}_0\right)^T\right]\right)+o(\frac{1}{N_0})\nonumber\\
    &=\frac{1}{2}\left(tr\left({J}({\underline{\theta}}_0)\mathbb{E}\left[ \left(\hat{{\underline{\theta}}} - E_{\hat{{\underline{\theta}}}}\right)\left(\hat{{\underline{\theta}}} - E_{\hat{{\underline{\theta}}}}\right)^T\right]\right)+tr\left({J}({\underline{\theta}}_0)\mathbb{E}\left[ \left(E_{\hat{{\underline{\theta}}}} - {\underline{\theta}}_0\right)\left(E_{\hat{{\underline{\theta}}}} - {\underline{\theta}}_0\right)^T\right]\right)\right)+o(\frac{1}{N_0})
\end{align}
Similar to \eqref{square_distribution},we can get
\begin{align}
\label{square_distribution_one_source}
\sqrt{N_0+n_1}\left(\hat{{\underline{\theta}}} - E_{\hat{{\underline{\theta}}}}\right)\xrightarrow{d}\mathcal{N}\left(0,J({\underline{\theta}}_0)^{-1}\right)
\end{align}
So we can know that $\mathbb{E}\left[ \left(\hat{{\underline{\theta}}} - E_{\hat{{\underline{\theta}}}}\right)^2\right]$ has the limit
\begin{align}
\label{one_source_1_bound}
&\frac{1}{(N_0+n_1)}J({\underline{\theta}}_0)^{-1}
\end{align}
The same to \eqref{eq:theta3totheta12}, we can get
\begin{align}
\label{eq:theta3totheta12_prop}
    E_{\hat{{\underline{\theta}}}} = \frac{N_{0}{\underline{\theta}}_0 + n_{1}{\underline{\theta}}_1}{N_{0}+n_1} + O\left(\frac{1}{N_{0}+n_1}\right),
\end{align}
Combining \eqref{appendix:kl_to_twokl_prop2}, \eqref{one_source_1_bound} and \eqref{eq:theta3totheta12_prop}, we can know that the K-L~measure is
\begin{align} 
    %&=\frac{1}{2}\frac{d}{m+n}+\frac{1}{2}tr({J}(\vec{{\underline{\theta}}_1})\left(\vec{{\underline{\theta}}_2} - \vec{{\underline{\theta}}_1}\right)^2)\frac{m^2}{(m+n)^2}\nonumber\\
    \frac{d}{2}\left(\frac{1}{N_0+n_{1}}+\frac{n_{1}^2}{(N_0+n_{1})^2}t^{}\right) + o\left(\frac{1}{N_{0}+n_1}\right),
\end{align}
where we denote
\begin{align}
    t^{} \defeq \frac{({{\underline{\theta}}_1} - {{\underline{\theta}}_0})^{T}J({{\underline{\theta}}_0})({{\underline{\theta}}_1} - {{\underline{\theta}}_0})}{d}.
\end{align}
\subsection{Proof of Theorem~\ref{thm:multi_source}} \label{appendix:multi_source}
\begin{proof}
Similar to \eqref{square_distribution},we can get
\begin{align}
\label{square_distribution_multi_source}
% \sqrt{N_0+\sum\limits_{i=1}^{K}n_i}\left(\hat{\theta} - E_{\hat{\theta}}\right)\xrightarrow{d}\mathcal{N}\left(0,\frac{\frac{N_0}{N_0+\sum\limits_{i=1}^{K}n_i}J(\theta_0)+\sum\limits_{j=1}^{K}\frac{n_j}{N_0+\sum\limits_{i=1}^{K}n_i}J(\theta_j)}{J^2(E_{\hat{\theta}})}\right)
\sqrt{N_0+\sum\limits_{i=1}^{K}n_i}\left(\hat{{\underline{\theta}}} - E_{\hat{{\underline{\theta}}}}\right)\xrightarrow{d}\mathcal{N}\left(0,J({\underline{\theta}}_0)^{-1}\right)
\end{align}
So we can know that $\mathbb{E}\left[ \left(\hat{{\underline{\theta}}} - E_{\hat{{\underline{\theta}}}}\right)^2\right]$ has the limit
\begin{align}
\label{multi_1_bound}
&\frac{1}{(N_0+s)}J({\underline{\theta}}_0)^{-1}
\end{align}
Similar to \eqref{eq:theta3totheta12},we can get
\begin{align}
\label{multi_1_expectation}
E_{\hat{{\underline{\theta}}}}=\frac{N_0{\underline{\theta}}_0+\sum\limits_{i=1}^{k}n_i{\underline{\theta}}_i}{N_0+\sum\limits_{i=1}^{K}n_i}+O\left(\frac{1}{N_0+\sum\limits_{i=1}^{K}n_i}\right)=\frac{N_0{\underline{\theta}}_0+\sum\limits_{i=1}^{k}n_i{\underline{\theta}}_i}{N_0+s}+O\left(\frac{1}{N_{0}+s}\right)
\end{align}

% Combining \eqref{appendix:kl_to_twokl}, \eqref{multi_1_bound} and\eqref{multi_1_expectation} , we can know that the K-L~measure is
% \begin{align}
%  \frac{1}{2}\frac{1}{N_0+s}+\frac{1}{2}\frac{s^2}{(N_0+s)^2}J({\underline{\theta}}_0)\left(\sum\limits_{i=1}^{k}\alpha_i({\underline{\theta}}_i-{\underline{\theta}}_0)\right)^2+ o\left(\frac{1}{N_{0}+s}\right)
% \end{align} 

Combining \eqref{appendix:kl_to_twokl_prop2}, \eqref{multi_1_bound} and\eqref{multi_1_expectation} , for the d-demension parameter, we can know that the K-L~measure is 
\begin{align}
\label{appendix:eq:multisourcetarget}
&\frac{d}{2}\left(\frac{1}{N_0+s}+\frac{s^2}{(N_0+s)^2d}tr\left(\mathbf{J}(\underline{\theta}_0)\left(\sum\limits_{i=1}^{k}\alpha_i(\underline{\theta}-\underline{\theta}_0)\right)\left(\sum\limits_{i=1}^{k}\alpha_i(\underline{\theta}_i-\underline{\theta}_0)\right)^T\right)\right)\nonumber\\
&=\frac{d}{2}\left(\frac{1}{N_0+s}+\frac{s^2}{(N_0+s)^2}\frac{\underline{\alpha}^T\Theta^T{J}(\underline{\theta}_0)\Theta\underline{\alpha}}{d}\right)+ o\left(\frac{1}{N_{0}+s}\right)\nonumber\\
\end{align}
where $\underline{\alpha}=\left[\alpha_1,\dots,\alpha_K\right]^T$ is a K-dimensional vector, and 
\begin{align}
{{\Theta}}^{d\times K}=\left[{{\underline{\theta}}_1}-{{\underline{\theta}}_0},\dots,{{\underline{\theta}}_K}-{{\underline{\theta}}_0} \right].
\end{align}
\end{proof}    
\newpage
\section{Experiment Details}
\label{appendix:Experiment_Details}
\subsection{Details information of LoRA framework experiments.}
\begin{table}[!h]
\centering
\caption{Multi-Source Transfer with LoRA on Office-Home. We apply LoRA on ViT-B backbone for PEFT.}
\resizebox{0.48\textwidth}{!}{
\begin{tabular}{l c ccccc}
\toprule
\multirow{2.5}{*}{\textbf{Method}} & \multirow{2.5}{*}{\textbf{Backbone}} & \multicolumn{5}{c}{\textbf{Office-Home}}\\ 
\cmidrule(lr){3-7} 
&& $\to$Ar & $\to$Cl & $\to$Pr & $\to$Rw & Avg\\ \midrule
% \midrule
% \textit{Source-combine:} &&&&&&&&&&&&\\
% \textit{Supervised-10-shots Source-Ablation:} &&&&&\\
\multicolumn{7}{l}{\textit{\textbf{Supervised-10-shots Source-Ablation:}}} \\
Target-Only & ViT-B & 59.8 & 42.2 & 69.5 & 72.0 & 60.9 \\ 
Single-Source-avg & ViT-B & 72.2  & 59.9 & 82.6 & 81.0 & 73.9 \\
Single-Source-best & ViT-B & 74.4  & 61.8 & 84.9 & 81.9 & 75.8 \\ 
\allsource{} & ViT-B & \underline{81.1} & \underline{66.0} & \underline{88.0} & \underline{89.2} & \underline{81.1}\\
\ourmethod{} (Ours) & ViT-B & \textbf{81.5} & \textbf{68.0} & \textbf{89.2} & \textbf{90.3}  & \textbf{82.3} \\
\bottomrule
\end{tabular}
}
\label{tab:LoRA}
\end{table}
\subsection{Experimental Design and Model Adaptation}
\label{appendix:Experimental_Design_and_Model Adaptation}

% \textbf{Experimental Design and Model Adaptation}. 
To ensure consistency in the experimental setup, we first evaluate the performance of different methods on the DomainNet and Office-Home datasets by adapting their settings to align with ours, such as the backbone, dataset, and early stopping criteria. Specifically, for the MADA method, we adjusted the preset of keeping 5\% of labeled target samples to 10-shots per class target samples while maintaining other conditions. And in turn, ours is adapted to the WADN settings, equipped with a 3-layer ConvNet and evaluated on Digits dataset.

%from prof. yang: Performance on Digits classification dataset. 
\begin{table}[htbp]
\centering
\caption{\textbf{Performance on Digits Dataset.} The arrows indicate transferring from the rest tasks. ``3Conv'' denotes the backbone with 3 convolution layers.}
\resizebox{0.48\textwidth}{!}{
\begin{tabular}{l c ccccc}
\toprule
\multirow{2.5}{*}{\textbf{Method}} & \multirow{2.5}{*}{\textbf{Backbone}} & \multicolumn{5}{c}{\textbf{Digits}}\\ 
\cmidrule(lr){3-7} 
&& $\to$mt & $\to$sv & $\to$sy & $\to$up & Avg\\ 
 % \multicolumn{1}{c|}{\makecell{\textbf{Method}}} &\multicolumn{1}{c|}{\makecell{\textbf{Backbone}}} &\multicolumn{5}{c}{\makecell{\textbf{Digits}\\\midrule$\to$mt\hspace{1.2em}$\to$sv\hspace{1.2em}$\to$sy\hspace{1.2em}$\to$up\hspace{1em}Avg}}\\
\midrule
% \midrule
% \textit{Source-combine:} &&&&&&&&&&&&\\
% \textit{Supervised-10-shots Source-Ablation:} &&&&&\\
% None-Source(Vitb)  & 59.8 & 42.2 & 69.5 & 72.0 & 60.9 \\ 
% Single-Source-avg(Vitb)& 72.2  & 59.9 & 82.6 & 81.0 & 73.9 \\
% Single-Source-best(Vitb) & 74.4  & 61.8 & 84.9 & 81.9 & 75.8 \\ 
\multicolumn{7}{l}{\textit{\textbf{Following settings of WADN:}}} \\
WADN\cite{shui2021aggregating_WADN} &3Conv& 88.3 & \textbf{70.6} & 81.5 & \textbf{90.5} & 82.7 \\
\allsource{} &3Conv & \underline{92.6} & \underline{67.1} & \underline{82.5} & 88.8 & \underline{82.8}\\
\ourmethod{} (Ours)&3Conv& \textbf{93.8} & \underline{67.1} & \textbf{83.3} & \underline{89.1}  & \textbf{83.3} \\
\bottomrule
\end{tabular}
}
\label{tab:adapted_experiments}
\end{table}
\newpage
\subsection{Single-Source transfer performance details.}

\begin{table*}[!h]
\centering
\caption{\textbf{Single-Source Transfer Performance on DomainNet.} The details accuracy information of the ‘‘Single-Source-$*$’’ lines of Table \ref{tab:major}.}
\resizebox{0.8\textwidth}{!}{
\begin{tabular}{lc c cccccccc}
\toprule
% \multirow{2}{*}{\makecell{\textbf{Method} \\ Second line}} &\multirow{2}{*}{\makecell{\textbf{Method} \\ Second line}} 
\multirow{2.5}{*}{\textbf{Target Domain}} & \multirow{2.5}{*}
{\textbf{Backbone}} && \multicolumn{7}{c}{\textbf{Source Domain}}\\ 
% \cline{3-9} \cline{10-14}
\cmidrule(lr){4-11}
 &&&Clipart & Infograph & Painting & Quickdraw & Real & Sketch & Avg & Best\\ 
\midrule
% \midrule
% \textit{Source-combine:} &&&&&&&&&&&&\\
% \textit{Unsupervised-all-shots:} &&&&&&&&&&&&&\\
Clipart & ViT-S && - & 46.5 & 55.4  & 30.3  & 60.2 & 59.7  & 50.4 & 60.2\\
Infograph & ViT-S && 25.6 & - & 25.3  & 7.3  & 28.0 & 24.4  & 22.1 & 28.0  \\ 
Painting & ViT-S && 49.6 & 47.3 & -  & 22.4  & 55.4 & 49.6  & 44.9 & 55.4\\
Quickdraw & ViT-S && 26.9 & 18.1 & 23.9  & -  & 25.9 & 28.4  & 24.7 & 28.4\\
Real & ViT-S && 64.6 & 62.2 & 66.0  & 38.5  & - & 62.7  & 58.8 & 66.0\\
Sketch & ViT-S && 49.7 & 40.9 & 48.1  & 26.1  & 47.8 & -  & 42.5 & 49.7\\
\bottomrule
\end{tabular}
}
\label{tab:appendix_single_source_domainnet}
\end{table*}

\begin{table*}[!h]
\centering
\caption{\textbf{Single-Source Transfer Performance on Office-Home.} The details accuracy information of the ‘‘Single-Source-$*$’’ lines of Table \ref{tab:major}.}
\resizebox{0.6\textwidth}{!}{
\begin{tabular}{lc c cccccc}
\toprule
\multirow{2.5}{*}{\textbf{Target Domain}} & \multirow{2.5}{*}
{\textbf{Backbone}}&& \multicolumn{5}{c}{\textbf{Source Domain}}\\ 
\cmidrule(lr){4-9}
 &&& Art & Clipart & Product & Real World & Avg & Best\\ 
\midrule
Art & ViT-S && -  & 61.7 & 61.1 & 72.9 & 65.2 & 72.9 \\
Clipart & ViT-S && 49.8  & - & 49.4 & 60.9 & 53.3 & 60.9 \\ 
Product & ViT-S && 68.8  & 73.7 & - & 80.7 & 74.4 & 80.7 \\
Real World & ViT-S && 70.9  & 72.3 & 74.8 & - & 72.7 & 74.8 \\
\bottomrule
\end{tabular}
}
\label{tab:appendix_single_source_officehome}
\end{table*}

\begin{table*}[!h]
\centering
\caption{\textbf{Single-Source Transfer Performance on Office-Home of LoRA.} The details accuracy information of the ‘‘Single-Source-$*$’’ lines of Table \ref{tab:LoRA}. ViT-B backbone is already frozen and equipped with small trainable LoRA layers.}
\resizebox{0.6\textwidth}{!}{
\begin{tabular}{lc c cccccc}
\toprule
\multirow{2.5}{*}{\textbf{Target Domain}} & \multirow{2.5}{*}
{\textbf{Backbone}}&& \multicolumn{5}{c}{\textbf{Source Domain}}\\ 
\cmidrule(lr){4-9}
 &&& Art & Clipart & Product & Real World & Avg & Best\\ 
\midrule
Art & ViT-B && -  & 68.4 & 74.4 & 73.8 & 72.2 & 74.4\\
Clipart & ViT-B && 58.3  & - & 59.6 & 61.8 & 59.9 & 61.8\\ 
Product & ViT-B && 82.0  & 80.8 & - & 84.9 & 82.6 & 84.9 \\
Real World & ViT-B && 81.0  & 80.3 & 81.9 & - & 81.0 &81.9\\
\bottomrule
\end{tabular}
}
\label{tab:appendix_single_source_officehome_lora}
\end{table*}

\subsection{Baselines experiments settings in Table \ref{tab:major}.}
Since the significant difference from unsupervised methods and supervised methods, the results of MSFDA\cite{shen2023balancingMSFDA}, DATE\cite{han2023discriminability_DATE} and M3SDA\cite{peng2019moment_M3SDA} are directly supported by their own article.

On all the experiments of the baselines, we take all the source samples of different domains from trainset into account. And types of the backbone are all pretrained on ImageNet21k\cite{deng2009imagenet}.

As for based on model-parameter-weighting few-shot methods: MCW\cite{lee2019learning_MCW} and H-ensemble\cite{wu2024h_Hensemble}, since they have not taken experiments on DomainNet and Office-Home datasets with ViT-Small backbone and 10-shot per class training samples, we take it by changing the backbone and samples condition while maintaining other configurations supported by their own work. And for fairness and efficiency, the well-trained source models from different domains of these methods are directly equipped with the same models trained by the first stage of our \ourmethod{} method respectively.

As for based on samples few-shot methods: We exam the WADN\cite{shui2021aggregating_WADN} method under the condition of limited label on target domain as this setting is similar to ours. And we also change the backbone and samples condition while maintaining other configurations of its report to realize the experiments on DomainNet and Office-Home datasets. It is apparent that the settings of MADA\cite{zhang2024revisiting_MADA} are quite different of ours, leveraging all the labeled source data and all the unlabeled target data with the few-shot labeled ones that are difficult to classify, which means all the target samples have been learned to some extent. So we not only decrease the labeled target samples to 10-shot per class but also the unlabeled target samples. And since MADA is the most SOTA and comparable method, we realize it with our fair 10-shot setting under both ViT-S and ResNet50 backbone on DomainNet and Office-Home datasets while maintaining other configurations of its report.
\newpage
\subsection{Data efficient test details in \ref{section:data_efficient} .}
\begin{table*}[htbp]
\centering
\caption{\textbf{Data Efficient Test Results on DomainNet.} The capital letters represent the target domains.}
\resizebox{\textwidth}{!}{
\begin{tabular}{lc c ccccccc c ccccccc}
\toprule
% \multirow{2}{*}{\makecell{\textbf{Method} \\ Second line}} &\multirow{2}{*}{\makecell{\textbf{Method} \\ Second line}} 
\multirow{2.5}{*}{\textbf{Method}} & \multirow{2.5}{*}{\textbf{Backbone}} && \multicolumn{7}{c}{\textbf{Data Counts (×10$^6$)}} && \multicolumn{7}{c}{\textbf{Training Consumed Time (×10$^4$ Second)}}\\ 
% \cline{3-9} \cline{10-14}
\cmidrule(lr){4-10} \cmidrule(lr){12-18}
 &&&C & I & P & Q & R & S & Avg&&C & I & P & Q & R & S & Avg\\ 
\midrule
% \textit{Supervised-10-shots:} &&&&&&&&&&&&&\\
% \multicolumn{14}{l}{\textbf{\textrm{Supervised-10-shots}}} \\
%8695400.0, 13391442.0, 8308180.0, 4024847.9999999995, 19759513.000000004, 21315960.0, 12582557.166666666//64040.0, 125240.00000000001, 62860.00000000001, 32220.000000000004, 131924.0, 181764.0, 99674.66666666667
MADA\cite{zhang2024revisiting_MADA}& ViT-S && 8.70 & 13.39 & 8.31 & 4.02 & 19.76 & 21.32 & 12.58 && 6.40 & 12.52 & 6.29 & 3.22 & 13.19 & 18.18 & 9.97 \\
% 7825860.0, 8207658.0, 4569498.999999999, 3689444.0, 4353791.0, 8777160.0, 6237235.333333333\\[75636.0, 95760.0, 45573.0, 40535.0, 42068.0, 95843.99999999999, 65902.66666666667]
MADA\cite{zhang2024revisiting_MADA}& ResNet50 && 7.83 & 8.21 & 4.57 & 3.69 & 4.35 & 8.78 & 6.24 && 7.56 & 9.58 & 4.56 & 4.05 & 4.21 & 9.58 & 6.59\\
% 2172100.0, 1420800.0, 1420700.0, 4195600.0, 1419400.0, 1897400.0, 2087666.6666666667]
%0.5595, 0.3661, 0.3665, 1.0903, 0.3591, 0.4894
\allsource{}& ViT-S && 2.17   & 1.42 & 1.42 & 4.20 & 1.42 & 1.90 & 2.09 && 0.56 & 0.37 & 0.39 & 1.09 & 0.36 & 0.49 &0.54\\
%1176000.0, 1149000.0, 890700.0, 966400.0, 1194300.0, 1158700.0, 1089183.3333333333 //1.1760, 1.1490, 0.8907, 0.9664, 1.1943, 1.1587//0.3541, 0.3288, 0.2784, 0.3363, 0.4708, 0.3580
\ourmethod{} (Ours)& ViT-S && 1.18 & 1.15 & 0.89 & 0.97  & 1.19 & 1.16 & 1.09 && 0.35 & 0.33 & 0.28 & 0.34  & 0.47 & 0.36 & 0.35 \\
\bottomrule
\end{tabular}
}
\label{tab:major2}
\end{table*}

\begin{table*}[!htbp]
\centering
\caption{\textbf{Data Efficient Log Scaled Test Results on DomainNet.} The capital letters represent the target domains.}
\resizebox{\textwidth}{!}{
\begin{tabular}{lc c ccccccc c ccccccc}
\toprule
% \multirow{2}{*}{\makecell{\textbf{Method} \\ Second line}} &\multirow{2}{*}{\makecell{\textbf{Method} \\ Second line}} 
\multirow{2.5}{*}{\textbf{Method}} & \multirow{2.5}{*}{\textbf{Backbone}} && \multicolumn{7}{c}{\textbf{Data Counts (log scale)}} && \multicolumn{7}{c}{\textbf{Training Consumed Time (log scale)}}\\ 
% \cline{3-9} \cline{10-14}
\cmidrule(lr){4-10} \cmidrule(lr){12-18}
 &&&C & I & P & Q & R & S & Avg&&C & I & P & Q & R & S & Avg\\ 
\midrule
% \textit{Supervised-10-shots:} &&&&&&&&&&&&&\\
% \multicolumn{14}{l}{\textbf{\textrm{Supervised-10-shots}}} \\
%15.97830471, 16.4101264 , 15.93275113, 15.2079977 , 16.7991456 ,16.87496665, 16.34782206 \\11.06726317, 11.73798718, 11.04866531, 10.38034266, 11.78998128,12.11046442, 11.50966683
MADA\cite{zhang2024revisiting_MADA}& ViT-S && 15.98 & 16.41 & 15.93 & 15.21 & 16.80 & 16.87 & 16.35 && 11.07 & 11.74 & 11.05 & 10.38 & 11.79 & 12.11 & 11.51 \\
% 15.87294419, 15.92057818, 15.33491413, 15.12098633, 15.28655752, 15.98766345, 15.64604759\\11.23368764, 11.46960034, 10.72707071, 10.60992108, 10.64704264, 11.47047715, 11.09593419
MADA\cite{zhang2024revisiting_MADA}& ResNet50 && 15.87 & 15.92 & 15.33 & 15.12 & 15.29 & 15.99 & 15.65 && 11.23 & 11.47 & 10.73 & 10.61 & 10.65 & 11.47 & 11.10\\
% 14.591205  , 14.16673065, 14.16666027, 15.24954692, 14.16574481, 14.45599509, 14.55155757\\8.62962862, 8.20549161, 8.20658361, 9.29679326, 8.18618599, 8.49576524, 8.5913416
\allsource{}& ViT-S && 14.59   & 14.17 & 14.18 & 15.25 & 14.17 & 14.46 & 14.55 && 8.63 & 8.21 & 8.21 & 9.30 & 8.19 & 8.50 & 8.59\\
%13.97762941, 13.95440256, 13.69976295, 13.78133311, 13.9930708 , 13.96280924, 13.90093874\\8.17216445, 8.09803476, 7.93164402, 8.12058871, 8.45701847, 8.18311808, 8.17301131
\ourmethod{} (Ours)& ViT-S && 13.98 & 13.95 & 13.70 & 13.78  & 13.99 & 13.96 & 13.90 && 8.17 & 8.10 & 7.93 & 8.12  & 8.46 & 8.18 & 8.17 \\
\bottomrule
\end{tabular}
}
\label{tab:major3}
\end{table*}

\subsection{Domain choosing analysis details.}
To compute the heatmap matrix visualizing domain preference in Figure \ref{fig:domain_choosing}(b), for each source domain, we count the samples selected from it until the target model converges under the 10-shot condition. Since the quantities of available samples varies significantly across domains, we normalize the counts by these quantities. The final domain preference is then determined by computing the importance of these normalized values.

Similarly, to compute the heatmap matrix for domain selection in Figure \ref{fig:domain_choosing}(a), we calculate the importance of the counts of selected samples from different source domains throughout the training epochs.

\newpage

\section{Method to get $s^*$ and $\underline{\alpha}^*$ which minimize \eqref{eq:multisourcetarget} in Theorem \ref{thm:multi_source}}\label{appendix:sa}

% The minimization problem of \eqref{eq:multisourcetarget} can be decomposed into two subproblems. The first subproblem is to minimize t, which is a $K\times K$ quadratic programming problem with respect to $\alpha$, and by resolving it, we can compute optimal $t^*$ and $\alpha^*$. The second subproblem is to find the optimal $s^*$ that minimizes the proposed measure, corresponding to our previous analysis of Theorem~\ref{thm:one_source} and Proposition~\ref{Proposition:one_source} when $s$ corresponds to $n_1$. After we get $s^*$ and $\alpha^*$, we can eventually get the optimal quantity $n_k^*$ of each source.

The minimization problem of proposed measure \eqref{eq:multisourcetarget} is
\begin{align}
(s^*, \underline{\alpha}^*) \gets \argmin\limits_{(s, \underline{\alpha})} \frac{d}{2}\left(\frac{1}{N_0+s}+\frac{s^2}{(N_0+s)^2}\frac{\underline{\alpha}^T\Theta^T{J}({{\underline{\theta}}_0})\Theta\underline{\alpha}}{d}\right).
\end{align}

We decompose this problem and explicitly formulate the constraints as follows.
\begin{align}
(s^*, \underline{\alpha}^*) \gets \argmin\limits_{s\in[0,\sum\limits_{i=1}^{K}N_i]} \frac{d}{2}\left(\frac{1}{N_0+s}+\frac{s^2}{(N_0+s)^{2}d}\argmin\limits_{\underline{\alpha}\in\cA(s)}\underline{\alpha}^T\Theta^T{J}({{\underline{\theta}}_0})\Theta\underline{\alpha}\right),
\end{align}
where 
\begin{align}
\cA(s)=\left\{\underline{\alpha}\Bigg|\sum_{i=1}^{K}\alpha_i=1, s*\alpha_i \le N_i, \alpha_i \ge0,i=1,\dots,K\right\}.
\end{align}
Due to the complex constraints between \( s \) and \( \alpha \), obtaining an analytical solution to this problem is challenging. Therefore, we propose a numerical approach to get the optimal solution.

This problem requires optimizing the objective function over two variables: a scalar variable $s$ representing the total transfer quantity, and a vector variable $\underline{\alpha}$ representing the proportion of samples drawn from each source domain. For $s$ which is restricted to integer values, we perform a exhaustive search over its feasible domain $[0, s_{\max}]$, where $s_{\max}= \sum\limits_{i=1}^{K} N_i$. For each candidate $s'$ in the search, we compute the optimal $\underline{\alpha}'$ under the constraint $\mathcal{A}(s')$, which is a $K\times K$ quadratic programming problem with respect to $\underline{\alpha}$

$$
\underline{\alpha}'=\arg\min\limits_{\underline{\alpha}\in\mathcal{A}(s')}\underline{\alpha}^T\Theta^T{J}({{\underline{\theta}}_0})\Theta\underline{\alpha}. 
$$

The quadratic coefficient matrix in this optimization problem is given by $\Theta^\top J(\underline{\theta}_0) \Theta$. Since the Fisher information matrix $J(\underline{\theta}_0)$ is positive semi-definite, the quadratic coefficient matrix is also positive semi-definite. This guarantees the existence of a global optimal solution. After getting $s'$ and $\underline{\alpha}'$, the function is then evaluated at $(s', \underline{\alpha}')$. 

In brief, for each $s'$, we solve for the corresponding optimal $\underline{\alpha}'$, yielding a finite collection of candidate solutions $(s', \underline{\alpha}')$ and their associated objective function values.
After completing the search, the optimal solution $(s^*, \underline{\alpha}^*)$ is chosen as the pair that achieves the lowest objective function values among all candidate $(s', \underline{\alpha}')$. Since the feasible set of $s$ is finite and enumerable, and for each fixed $s'$ the optimization over $\underline{\alpha}'$ has a solution, the overall optimization problem is guaranteed to have at least one global solution. Hence, the optimal pair $(s^*, \underline{\alpha}^*)$ exists.

It is worth noting that, for the sake of computational efficiency, we do not exhaustively enumerate all possible values of $s$ in our experiments. Instead, we perform a grid search using 1000 uniformly spaced steps over the feasible range of $s$, where the number of steps is denoted as $stepnumber = 1000$. Experimental results in Section \ref{sec:experi} demonstrate that this strategy does not compromise the effectiveness or stability of the method.
% Specifically, we perform a uniform-step exhaustive search over the feasible domain of \( s \). For each candidate \( s' \), we compute the optimal \( \alpha' \) under the constraint \( A(s') \), which is a $K\times K$ quadratic programming problem with respect to $\alpha$. The proposed measure \eqref{eq:multisourcetarget} value is then evaluated using \( (s', \alpha') \). After completing the search over \( s \), the optimal values \( s^* \) and \( \alpha^* \) correspond to the pair \( (s', \alpha') \) that minimizes the proposed measure.

\newpage
\section{Analysis on Domain-specific Transfer Quantity }
\label{Analysis_Quantity}
% Preference for Parameter-Similar Models
% For understanding the domain preferacnce of \ourmethod{}, we compute the importance of source domains for each target domain. For each source domain we first count all the samples selected from this source domain until target model converges under 10-shot condition, and given` that the quantities of available for selection samples from different domains differ significantly, we divide the counts by these quantities, and we get the final importance about domain preference after we compute the importance of these devidance results.
% For understanding the domain preferacnce of \ourmethod{}, 
% We compute the importance of source domains to represent the domain preference of \ourmethod{} for each target domain by the number of selected samples.

% We visualize the source domain preference of \ourmethod{} for each target domain based on the proportion of selected samples. As shown in Figure \ref{fig:domain_choosing}(b), when the target domain is Clipart, \ourmethod{} mainly leverages samples from Real, Painting and Sketch, and Quickdraw contributes minimally to any target domain, which aligns with the findings in \cite{peng2019moment_M3SDA}.
To understand the domain preference of \ourmethod{}, we visualize the proportion of each source in the selected samples for each target domain. As shown in Figure~\ref{fig:domain_choosing}(b), when the target domain is Clipart, \ourmethod{} primarily leverages samples from Real, Painting, and Sketch. In addition, Quickdraw contributes minimally to any target domain. These observations align with the findings in \cite{peng2019moment_M3SDA}. To further clarify the selection process of \ourmethod{} during training, we visualize it in Figure \ref{fig:domain_choosing}(a). 
We observe that in the \texttt{Office-Home} dataset, Clipart initially selects all source domains but later focuses on Art and Real World. In the \texttt{DomainNet} dataset, Sketch predominantly selects Clipart, Painting, and Real throughout the training process.

\begin{figure}[htbp]
    \centering
    % 左侧：两张图片垂直排列
    \begin{minipage}{0.45\linewidth}
        \centering
        % 第一张图片
        \includegraphics[width=\linewidth]{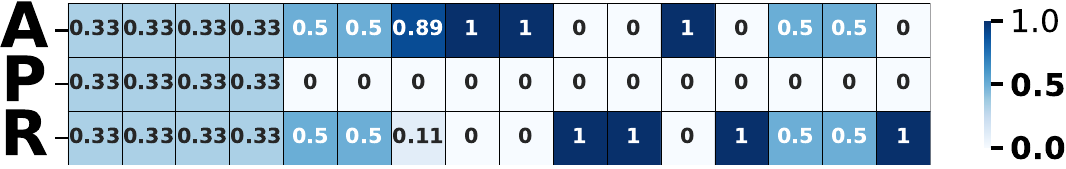}
        % \vspace{0.5cm} % 调整图片与下方的间距
        % 第二张图片
        \includegraphics[width=\linewidth]{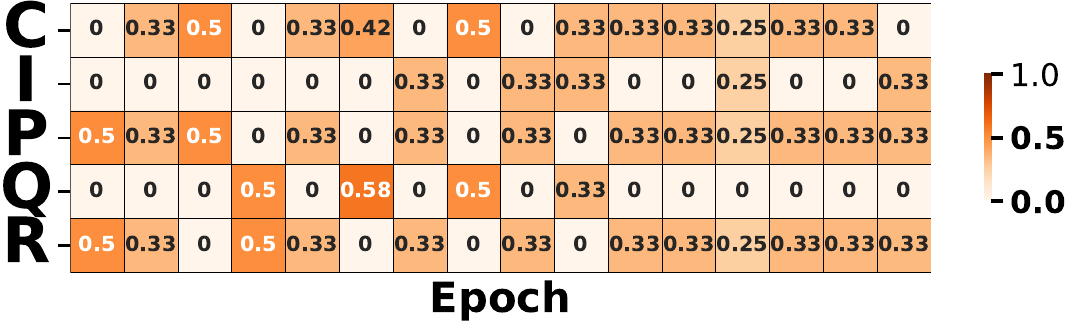}
        % \caption{Domain Selection}
        % \vspace{-0.7cm}
        \textbf{(a)} Domain Selection
        % \label{fig:second-image}
    \end{minipage}
    \hfill
    % 右侧：单独一张图片
    \begin{minipage}{0.45\linewidth}
        \centering
        \includegraphics[width=\linewidth]{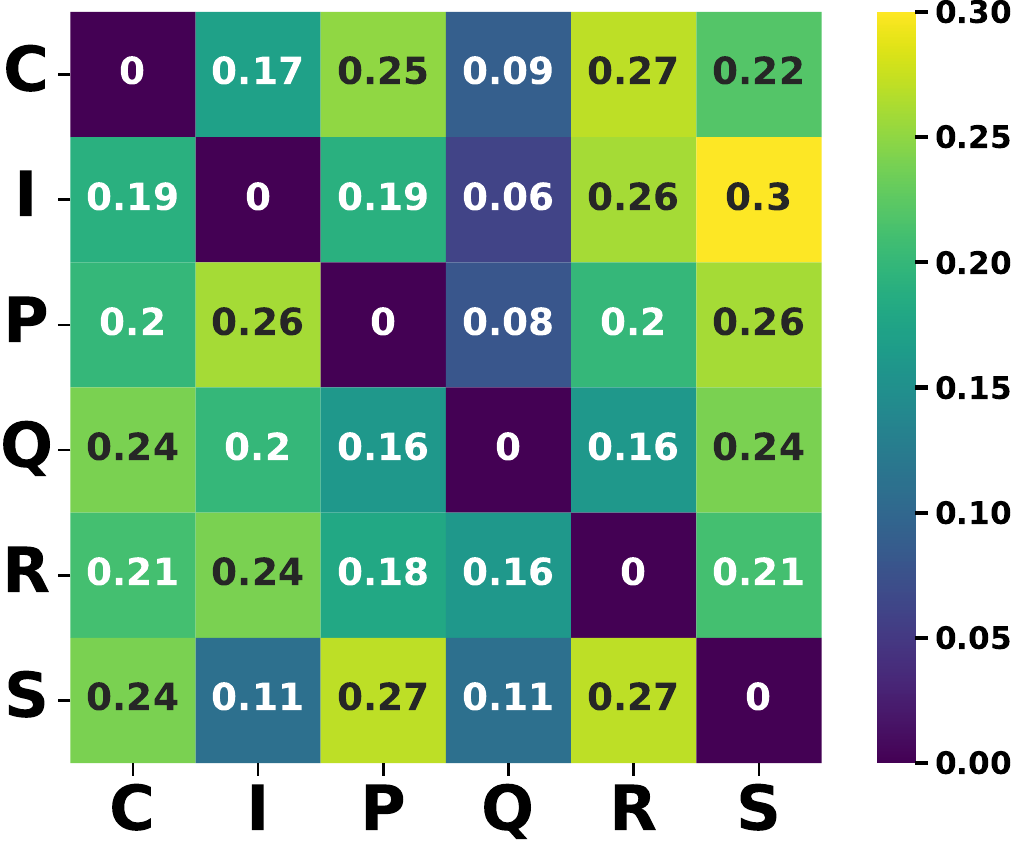}
        % \caption{Caption for third image}
        \textbf{(b)} Domain preference
        % \label{fig:third-image}
    \end{minipage}
    
    \caption{Visualization of domain-specific transfer quantity under 10-shot setting. (a) Domain selection during training epochs (from left to right), where the \textcolor{blue}{blue} upper part represents
    the selection of target domain Clipart on \texttt{Office-Home}, and the \textcolor{orange}{orange} lower part represents
    the selection of target domain Sketch on \texttt{DomainNet}. Darker colors indicate stronger tendencies throughout the training process. (b) Source domain preferences of different target domains on \texttt{DomainNet}. Each row corresponds to a target domain while each column represents a source domain.}
    \label{fig:domain_choosing}
\end{figure}

We additionally report the \textit{raw average transfer quantities} between domains on the Office-Home and DomainNet datasets. Each row denotes a target domain and each column denotes a source domain.

\begin{table}[htbp]
\centering
\caption{Average transfer quantities on the Office-Home dataset. Each row denotes a target domain and each column denotes a source domain.}
\begin{tabular}{c|cccc}
\toprule
Target $\backslash$ Source & A & C & P & R \\
\midrule
A & 0 & 2546 & 2956 & 3341 \\
C & 1325 & 0 & 753 & 2192 \\
P & 736 & 1320 & 0 & 2378 \\
R & 1879 & 3371 & 3300 & 0 \\
\bottomrule
\end{tabular}
\vspace{6pt}

\caption{Average transfer quantities on the DomainNet dataset. Each row denotes a target domain and each column denotes a source domain.}
\begin{tabular}{c|cccccc}
\toprule
Target $\backslash$ Source & C & I & P & Q & R & S \\
\midrule
C & 0 & 18391 & 57901 & 47641 & 138497 & 43100 \\
I & 28962 & 0 & 28979 & 17581 & 138497 & 41558 \\
P & 30907 & 41380 & 0 & 41945 & 83092 & 44356 \\
Q & 12380 & 11305 & 12883 & 0 & 30498 & 18050 \\
R & 24168 & 31066 & 36179 & 51749 & 0 & 27716 \\
S & 23591 & 11302 & 36897 & 50182 & 88133 & 0 \\
\bottomrule
\end{tabular}
\end{table}

\newpage
\section{A Detailed Discussion on the Limitations}
\label{limitations}
\begin{itemize}
    \item \textbf{Sampling Method.}  Since our theoretical analysis focuses on the transfer quantity from each source task, we adopt a straightforward random sampling strategy in the algorithm implementation. Given that our theoretical results are derived under average-case assumptions, random sampling is sufficient to demonstrate the robustness of both our theoretical analysis and the proposed algorithm. Nevertheless, we anticipate that more sophisticated sampling strategies, such as active sampling, may further improve the algorithm’s performance.

    \item \textbf{Weight or Quantity.} Many existing work of multi-source transfer learning assign weights to source tasks and utilize all available samples. In contrast, this work focuses on optimizing the transfer quantity from each source task. We anticipate that future work could further improve algorithmic performance by jointly optimizing both the sample weights and the transfer quantity from each source task.

     \item \textbf{Possible Extensions to Other Loss Functions.} Our theoretical analysis is developed under the assumption of a negative log-likelihood objective, which aligns with the cross-entropy loss commonly used in classification tasks with softmax outputs and one-hot labels. However, for other learning objectives such as the mean squared error in regression problems, our framework does not yet provide a direct theoretical guarantee. We believe, nevertheless, that the core idea can be generalized to broader loss functions under suitable regularity conditions. We leave a more rigorous theoretical extension and empirical validation with alternative loss functions as future work to demonstrate the robustness and wider applicability of our method.

\end{itemize}

\section{Broader Impacts}
\label{impacts}

We first develop a theoretical framework for optimizing transfer quantities in transfer learning, and subsequently propose an architecture-agnostic and data-efficient algorithm based on this theoretical framework. The proposed  theoretical framework and algorithm have broad applicability in various transfer learning scenarios, including domains such as medical image analysis, recommendation systems, and anomaly detection. There are no negative social impacts foreseen.

%%%%%%%%%%%%%%%%%%%%%%%%%%%%%%%%%%%%%%%%%%%%%%%%%%%%%%%%%%%%

\end{document}